\documentclass{article}


\usepackage[final]{neurips_2022}




\usepackage[utf8]{inputenc} 
\usepackage[T1]{fontenc}    
\usepackage{hyperref}       
\usepackage{url}            
\usepackage{booktabs}       
\usepackage{amsfonts}       
\usepackage{nicefrac}       
\usepackage{microtype}      
\usepackage{xcolor}         

\usepackage{mathtools}
\usepackage{amsmath,amssymb, amsthm}
\usepackage{subcaption}
\usepackage{mathrsfs}
\usepackage{enumerate}
\usepackage{multirow}

\newtheorem{theorem}{Theorem}
\newtheorem{example}{Example}
\newtheorem{lemma}{Lemma}

\newtheorem*{theorem*}{Theorem}
\newtheorem*{note*}{Note}
\newtheorem*{lemma*}{Lemma}
\newtheorem*{definition*}{Definition}
\newtheorem*{proposition*}{Proposition}
\newtheorem*{corollary*}{Corollary}
\newtheorem*{result*}{Result}
\newtheorem*{fact*}{Fact}
\newtheorem*{remark*}{Remark}
\newtheorem*{remarks*}{Remarks}

\newcommand{\setdiff}{-}
\newcommand{\UB}{\mathcal{U}\mathcal{B}}

\DeclarePairedDelimiterX\Set[1]\{\}{%
  
  #1
}
\DeclarePairedDelimiterXPP\prob[1]{\mathbb{P}}(){}{
  
  #1
}
\DeclarePairedDelimiterXPP\expt[1]{\mathbb{E}}[]{}{
  
  #1
}

\DeclarePairedDelimiterXPP\exptd[2]{\mathbb{E}^{#2}}[]{}{
  
  #1
}

\DeclarePairedDelimiterXPP\Log[1]{\operatorname{log}}(){}{#1}
\DeclarePairedDelimiterXPP\Exp[1]{\operatorname{exp}}(){}{#1}
\DeclarePairedDelimiterXPP\Gaussian[2]{\mathcal{N}}(){}{#1,~ #2}
\DeclarePairedDelimiter\abs{\lvert}{\rvert}%
\DeclarePairedDelimiter\norm{\lVert}{\rVert}%
\DeclarePairedDelimiter\pbrac{(}{)}%
\DeclarePairedDelimiter\sbrac{[}{]}%
\DeclarePairedDelimiter\cbrac{\{}{\}}%

\makeatletter
\let\oldSet\Set
\def\Set{\@ifstar{\oldSet}{\oldSet*}}
\let\oldprob\prob
\def\prob{\@ifstar{\oldprob}{\oldprob*}}
\let\oldexpt\expt
\def\expt{\@ifstar{\oldexpt}{\oldexpt*}}
\let\oldexptd\exptd
\def\exptd{\@ifstar{\oldexptd}{\oldexptd*}}
\let\oldLog\Log
\def\Log{\@ifstar{\oldLog}{\oldLog*}}
\let\oldExp\Exp
\def\Exp{\@ifstar{\oldExp}{\oldExp*}}
\let\oldabs\abs
\def\abs{\@ifstar{\oldabs}{\oldabs*}}
\let\oldnorm\norm
\def\norm{\@ifstar{\oldnorm}{\oldnorm*}}
\let\oldGaussian\Gaussian
\def\Gaussian{\@ifstar{\oldGaussian}{\oldGaussian*}}
\let\oldpbrac\pbrac
\def\pbrac{\@ifstar{\oldpbrac}{\oldpbrac*}}
\let\oldsbrac\sbrac
\def\sbrac{\@ifstar{\oldsbrac}{\oldsbrac*}}
\let\oldcbrac\cbrac
\def\cbrac{\@ifstar{\oldcbrac}{\oldcbrac*}}
\makeatother
\newcommand{\dprime}{{\prime\prime}}

\DeclareMathOperator*{\argmin}{arg\,min}

\def\equationautorefname~#1\null{Equation (#1)\null}
\def\lemmaautorefname~#1\null{Lemma #1\null}
\def\theoremautorefname~#1\null{Theorem #1\null}
\def\sectionautorefname~#1\null{Section #1\null}
\def\exampleautorefname~#1\null{Example #1\null}
\def\subsectionautorefname~#1\null{Section #1\null}

\bibliographystyle{abbrvnat}

\title{Asymptotic Properties for Bayesian Neural Network in Besov Space}

\author{%
    Kyeongwon~Lee\\
    Department of Statistics\\
    Seoul National University\\
    Seoul, Republic of Korea 08826 \\
    \texttt{lkw1718@snu.ac.kr} \\
    \And 
    Jaeyong~Lee \\
    Department of Statistics\\
    Seoul National University\\
    Seoul, Republic of Korea 08826 \\
    \texttt{leejyc@gmail.com}
}

\begin{document}

\maketitle

\begin{abstract}
    Neural networks have shown great predictive power when dealing with various unstructured data such as images and natural languages. The Bayesian neural network captures the uncertainty of prediction by putting a prior distribution for the parameter of the model and computing the posterior distribution. In this paper, we show that the Bayesian neural network using spike-and-slab prior has consistency with nearly minimax convergence rate when the true regression function is in the Besov space. Even when the smoothness of the regression function is unknown the same posterior convergence rate holds and thus the spike-and-slab prior is adaptive to the smoothness of the regression function. We also consider the shrinkage prior, which is more feasible than other priors, and show that it has the same convergence rate. In other words, we propose a practical Bayesian neural network with guaranteed asymptotic properties.
\end{abstract}

\section{Introduction}

The neural network is a machine learning technique that can handle complex structures in data by combining linear transformations and non-linear transformations \citep{goodfellow2016deep}. The neural network can show great predictive power when dealing with unstructured data ranging from image data to natural language processing. Moreover, end-to-end learning is possible by grasping the inherent structure of data only with inputs and outputs without explicit intermediate steps. Such properties have played major roles in the rapid development of numerous artificial intelligence applications.

The Bayesian neural network (BNN) provides the uncertainty of the prediction from the probability distribution of the model parameters. Before observing the data, a prior distribution that reflects prior beliefs about the model is put over the parameter space of the neural network model. As observations are added, a posterior distribution is computed by updating the prior distribution via Bayes' rule. \citet{Neal_1996, williams1996computing} showed that a shallow BNN which has only one infinite-width hidden layer converges to a Gaussian process regression model. This means that the shallow BNN can be regarded as a finite approximation of the nonparametric Bayesian regression model. \citet{Lee_Bahri_Novak_Schoenholz_Pennington_Sohl-Dickstein_2017, matthews2018gaussian} extended this result to the deep BNN model.

However, it is difficult to compute the exact posterior distribution of the BNN since the normalizing constant of the posterior distribution is intractable. Therefore, various methods have been proposed to infer BNN through approximate Bayesian computation algorithms. The methods based on the Markov chain Monte Carlo (MCMC) algorithm such as the Gibbs sampler \citep{geman1984stochastic, casella1992explaining} and Hamiltonian Monte Carlo (HMC) \citep{neal2011mcmc} is effective to approximate true posterior. By constructing a Markov chain that has the desired distribution as its stationary distribution, one can obtain a sample of the desired distribution. Recently, scalable MCMC algorithms such as stochastic gradient Langevin dynamics \citep{welling2011bayesian} and stochastic gradient HMC (SGHMC, \citet{chen2014stochastic}) were proposed. The variational inference (VI) is an approximation method that finds the closest variational distribution from the posterior distribution. VI method changes the Bayesian inference problem to an optimization problem. \citet{graves2011practical} proposed to use of a Gaussian distribution as variational distribution for BNN and \citet{kingma2013auto} suggested the \textit{reparameterization trick} for Gaussian variational distribution.
Practically, VI methods for BNN are faster and more stable than MCMC methods. \citet{blundell2015weight} suggested \textit{Bayes by Backprop} algorithm for BNN via variational inference. 
There are  also studies that approximate BNN with the dropout \citep{gal2016dropout} or the Gaussian process \citep{Lee_Bahri_Novak_Schoenholz_Pennington_Sohl-Dickstein_2017}.

\subsection{Related works}

The performance of the neural network is justified by the \textit{universal approximation} capability of the neural network. It is known that even a shallow neural network can approximate arbitrary continuous functions sufficiently precisely in $L^p$-sense \citep{cybenko1989approximation,hornik1991approximation}, and as the network deepens, the expressive power of the neural network gets stronger \citep{delalleau2011shallow,bianchini2014complexity,telgarsky2015representation,telgarsky2016benefits}. \citet{Lee_1999} showed that the BNN has the universal approximation property. 

As a measure of the expressive power of a model, consider the range of function spaces that a model can express. The choice of the activation function can affect the expressive power of a model, and in this paper, we consider the Rectified Linear Unit (ReLU) $ReLU(x) = \max\{0, x\}$ \citep{nair2010rectified}, which is the most popular activation function in practice. \citet{yarotsky2017error} evaluated the approximation error of the neural network using the ReLU activation function (ReLU network) for functions in the H\"{o}lder space. \citet{schmidt2020nonparametric} showed that the regularized least squared estimator performed by the ReLU network in a nonparametric regression problem converges to the true regression function with nearly minimax rate up to logarithmic factors on the H\"older space. \citet{Suzuki_2018b} extended the results in \citet{schmidt2020nonparametric} from the H\"older space to the Besov space which contains the H\"older and Sobolev spaces. \citet{polson2018posterior} proved a Bayesian version of the results in \citet{schmidt2020nonparametric}. They proved the Bayesian ReLU network has posterior consistency, the posterior probability is concentrated on the true parameter as collecting data, when the priors on the parameters of the ReLU network are spike-and-slab priors. \citet{cherief2020convergence} showed that the results in \citet{polson2018posterior} are still valid when variational methods are applied.

All of the above results assume a sparse structure in the neural networks. Sparsity is often assumed in high-dimensional models for asymptotic results.
\citet{castillo2015bayesian} showed theoretical properties including posterior consistency and rates of consistency of the high-dimensional linear regression model under the spike-and-slab prior. \citet{song2017nearly} extended the results in \citet{castillo2015bayesian} to the shrinkage priors which are more feasible than the spike-and-slab prior.

\subsection{Main contribution}

We show that the posterior of the Bayesian ReLU network with a spike-and-slab prior with appropriate width, depth, and sparsity level is consistent with a nearly minimax rate. We extend the result of \citet{Suzuki_2018b} to the Bayesian framework and the results of \citet{polson2018posterior} to the Besov space. As in \citet{polson2018posterior}, we show that the same convergence rate is achieved even when the smoothness parameter of the true regression function is not known. That is, the ReLU network has rich expressive power, and this ability is not constrained even without the smoothness information of the true function. We get similar results for the shrinkage prior.

We try to narrow the gap between the study of the theoretical properties and the practical application of the Bayesian ReLU networks. For instance, Gaussian distribution is frequently considered as a prior distribution in practical Bayesian ReLU networks. In this case, the posterior distribution may not be consistent since the posterior becomes heavy tail distribution over the parameter space \citep{Vladimirova_Verbeek_Mesejo_Arbel_2019}. Approaches to interpreting the Bayesian neural networks as Gaussian Process \citep{Lee_Bahri_Novak_Schoenholz_Pennington_Sohl-Dickstein_2017} avoid this problem by setting the variance of the prior distribution to be proportional to the width of the network. We suggest similar sufficient conditions for the posterior consistency of the Bayesian ReLU networks. Compared to precedent studies, our setting is closer to practical use. The results in \citet{Suzuki_2018b} and \citet{polson2018posterior} require to solve the optimization problem under the $l_0$ constraint or compute posterior from spike-and-slab prior respectively. We suggest BNN model, which is easy to infer and consistent with the (nearly) optimal convergence rate.

The rest of the paper is organized as follows. We set up the problem and introduce the necessary concepts in Section \ref{sec:setup}. We present the obtained theoretical results in Section \ref{sec:main_result} and numerical examples in Section \ref{sec:example}. A summary of the paper and a discussion are in Section \ref{sec:conclusions}.

\section{Preliminaries}\label{sec:setup}

\subsection{Notation}

The floor function $\lfloor x \rfloor$ denotes the largest integer less than or equal to $x \in \mathbb{R}$, the ceiling function $\lceil x \rceil$ denotes the smallest integer greater than or equal to $x \in \mathbb{R}$, and $\norm{a}_p$ is an $l_p$-norm of a real vector $a$. The $u$-th derivative of a $d$-dimensional function $f$ is denoted by $D^u f$ with $u \in \Set{0, 1, \cdots}^{d}$. The covering number $N(\epsilon, A, d)$ is the minimal number of $d$-balls of radius $\epsilon > 0$ necessary to cover a set $A$. We denote the dirac-delta function at $a \in \mathbb{R}$ by $\delta_a$. We use $A \lesssim B$ as shorthand for the inequality $A \leq C B$ for some constant $C>0$ and $A \vee B$ as shorthand for the $\max\{A, B\}$.

\subsection{Model}

Suppose that $n$ input-output observations $\mathbb{D}_n = (X_i, y_i)_{i=1}^n \subset [0,1]^{d} \times \mathbb{R}$ are independent random sample from a regression model 
\begin{equation}\label{eqn:model}
    y_i = f_0(X_i) + \xi_i ~(i=1,2,\cdots, n),
\end{equation}
where $(\xi_i)_{i=1}^n$ is an i.i.d. sequence of Gaussian noises $\Gaussian{0}{\sigma^2}$ with known variance $\sigma^2 > 0$ and $f_0$ is the true regression function belonging to the space $\mathscr{F}$.

In frequentist statistics, the maximum likelihood estimation is used to estimate the function $f_0$. From the machine learning viewpoint, it is equivalent to finding $\hat{f}$ that minimizes the square loss. \citet{Suzuki_2018b} showed that $f_0$ can be estimated with a nearly optimal rate in this setting.

In practice, a sparse regularization term $r(f)$ such as dropout and $l_1$-regularizer is also considered together with a loss function. This is equivalent to estimating the MAP (maximum a posteriori) in the Bayesian modeling. The posterior distribution gives not only predictive values but also the uncertainty of the prediction. Thus, in the Bayesian statistics, it is important to find sufficient conditions for a prior distribution or a regularizer which guarantees the consistency or optimal convergence rate of the posterior.

\subsection{Function spaces}

In this section, we introduce function spaces that statisticians have studied for the \textit{smoothness} of functions. Let $\Omega = [0, 1]^d \subset \mathbb{R}^d$ be the domain of functions we consider. 

The $L^p$-norm of a function is defined as follows:
\begin{equation*}
    \norm{f}_{L^p} := \begin{cases}
        \left(\int_\Omega \abs{f(x)}^p dx\right)^{1/p} & 0 < p < \infty, \\
        \sup_{x \in \Omega} \abs{f(x)} & p = \infty
    \end{cases}
\end{equation*}
where $f$ is a real-valued function defined on $\Omega$. 
The $L^p$ space is the function space consisting of functions with bounded $L^p$-norm and is denoted by $L^p(\Omega) = \Set{f: \norm{f}_{L^p} < \infty}$.

Let $s > 0$ be the smoothness parameter and $m = \lfloor s \rfloor$. The $s$-H\"older norm of function $f$ is defined by
\begin{equation*}
    \norm{f}_{C^s} := \sup_{\norm{u}_1 \leq m} \norm{D^u f}_{L^\infty} + \sup_{\norm{u}_1 = m} \sup_{x, y \in \Omega} \frac{\abs{D^u f(x)  - D^u f(y)}}{\abs{x - y}^{s - m}}.
\end{equation*}
The $s$-H\"older space $C^s(\Omega)$ is defined as a set of functions with bounded $s$-H\"older norm. We call $s$ the smoothness parameter of the $s$-H\"older space. 
Let $k \in \mathbb{N}$ and $1 \leq p \leq \infty$. The Sobolev norm of a function $f$ is defined by 
\begin{equation*}
    \norm{f}_{W^{k, p}} := 
    \begin{cases}
        \left( \sum_{\norm{\alpha}_1 \leq k} \norm{D^\alpha f}_{L^p}^p \right)^{1/p} & 1 \leq p < \infty \\
        \sup_{\norm{\alpha}_1 \leq k} \norm{D^\alpha f}_{L^\infty} & p = \infty. \\ 
    \end{cases}
\end{equation*}
The Sobolev space $W^{k, p}(\Omega)$ is defined by the function space consisting of functions with bounded Sobolev norm.

Note the notion of the smoothness of a function is related to the differentiability of the function. The Besov space extends the concept of smoothness.
Before defining the Besov space, define the $r$-th modulus of smoothness of the function $f$ as follows \citep{Gine_Nickl_2016}: 
\begin{equation}
    w_{r, p}(f, t) = \sup_{\norm{h}_2 \leq t} \norm{\Delta_h^r (f)}_p,
\end{equation}
\begin{equation}
    \Delta_h^r(f)(x) = \begin{cases} \sum_{j=0}^r \binom{r}{j} (-1)^{r-j}f(x+jh) & x \in \Omega, x+ rh \in \Omega, \\
    0 & \text{o.w.} \end{cases}
\end{equation} 
For $0 < p, q \leq \infty,~s >0,~r=\lfloor s \rfloor +1$, define the Besov norm of a function $f$ by
\begin{equation}
    \norm{f}_{B_{p,q}^s}:= \norm{f}_p + \begin{cases} \left(\int_0^\infty \left( t^{-s}w_{r,p}(f,t)\right)^q \frac{dt}{t}\right)^{1/q} & q < \infty, \\ \sup_{t>0} \Set{t^{-s}w_{r,p}(f,t)} & q = \infty.  \end{cases}
    \end{equation}
The Besov space $B_{p,q}^s(\Omega)$ is defined as the set of functions with finite Besov norms \citep{Gine_Nickl_2016}, i.e., 
\begin{equation}
    B_{p,q}^s (\Omega) = \Set{f: \norm{f}_{B_{p,q}^s} < \infty }.
\end{equation}
Note that the Besov spaces can be defined without the differentiability and continuity of functions, and are more general than the H\"{o}lder and Sobolev spaces, which are subspaces of the Besov spaces. The Besov space may contain complicated functions including discontinuous function if $d/p \geq s$. For instance, the Cantor function $f_1$ does not belong to any Sobolev space since it does not have a weak derivative but belongs to a Besov space \citep{Sawano_2018}. In fact, $f_1 \in C^{\log2 / \log 3}([0, 1]) \subset B_{\infty, \infty}^{\log2 / \log 3}([0, 1])$. Also,
\begin{equation}\label{eqn:ex_function2}
f_2(x) = \begin{cases} 1/\log (x/2) , & 0 < x \leq 1 \\ 0, & x=0 \end{cases}
\end{equation}
is not in any H\"older space. Note when $p<2$, the smoothness of the Besov space is inhomogeneous \citep{Suzuki_2018b}. \citet{donoho1994ideal} suggested spatially variable functions including Blocks
\begin{equation}\label{eqn:ex_function_blocks}
    \begin{gathered}
        f_3(x) = \sum_j h_j K(x-x_j),~K(x) = (1 + \text{sgn}(x))/2, \\
        (x_j) = (0.1, 0.13, 0.15, 0.23, 0.25, 0.40, 0.44, 0.65, 0.76, 0.78, 0.81), \\
        (h_j) = (4, -5, 3, -4, 5, -4.2, 2.1, 4.3, -3.1, 2.1, -4.2),
    \end{gathered}
\end{equation}
where $\text{sgn}(x)$ is a function that extracts the sign of a real number $x$, and HeaviSine
\begin{equation}\label{eqn:ex_function_heavisine}
    f_4(x) = 4 \sin (4\pi x) - \text{sgn}(x-0.3) - \text{sgn}(0.72 - x).
\end{equation}
Since $f_2,~f_3$ and $f_4$ are functions of bounded variation, in $B_{1,\infty}^1([0, 1])$ \citep{peetre1976new,Suzuki_2018b}. We plot the functions $f_1,f_2,f_3$ and $f_4$ in Figure \ref{fig:ex_function}.

\begin{figure}[!ht]
    \centering
    \caption{Example functions in the Besov spaces}\label{fig:ex_function}
    \begin{subfigure}[b]{0.4\linewidth}
        \centering
        \includegraphics[width=\linewidth]{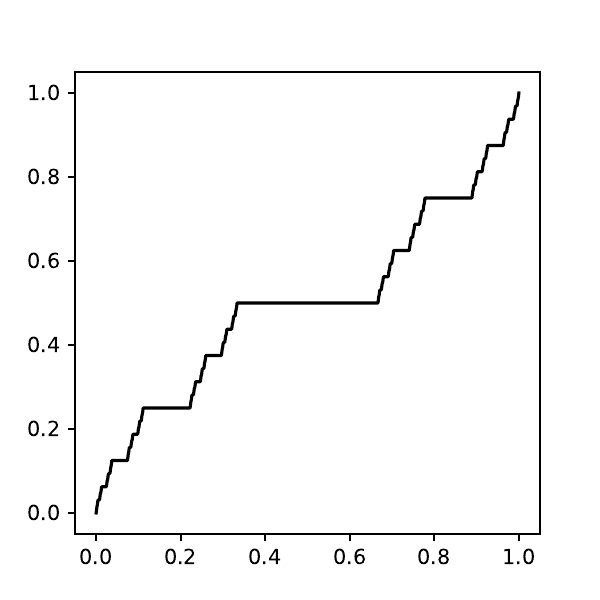}
        \caption{$f_1(x)$ on $[0, 1]$}
        \label{fig:ex_function1}
    \end{subfigure}
    \begin{subfigure}[b]{0.4\linewidth}
        \centering
        \includegraphics[width=\linewidth]{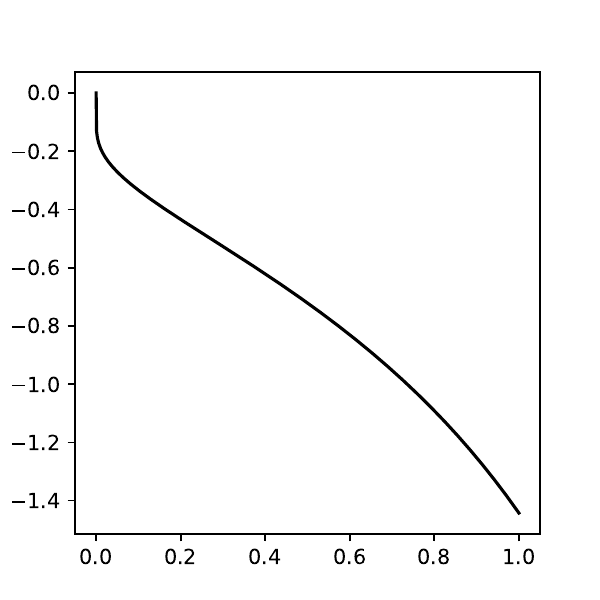}
        \caption{$f_2(x)$ on $[0, 1]$}
        \label{fig:ex_function2}
    \end{subfigure}
    \begin{subfigure}[b]{0.4\linewidth}
        \centering
        \includegraphics[width=\linewidth]{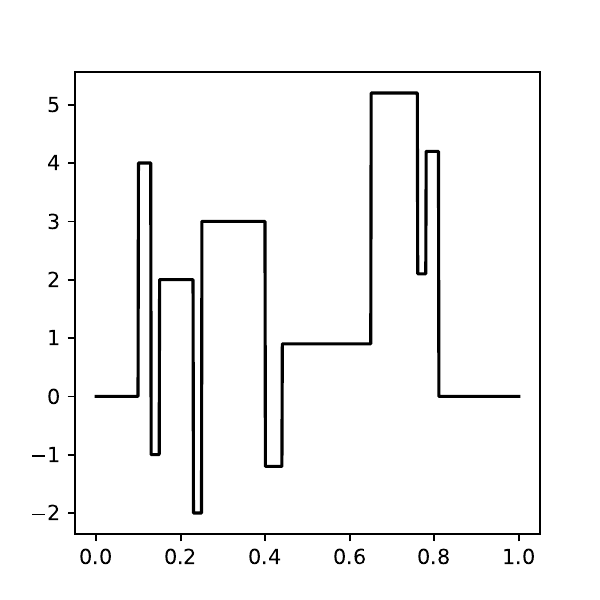}
        \caption{$f_3(x)$ on $[0, 1]$}
        \label{fig:ex_function3}
    \end{subfigure}
    \begin{subfigure}[b]{0.4\linewidth}
        \centering
        \includegraphics[width=\linewidth]{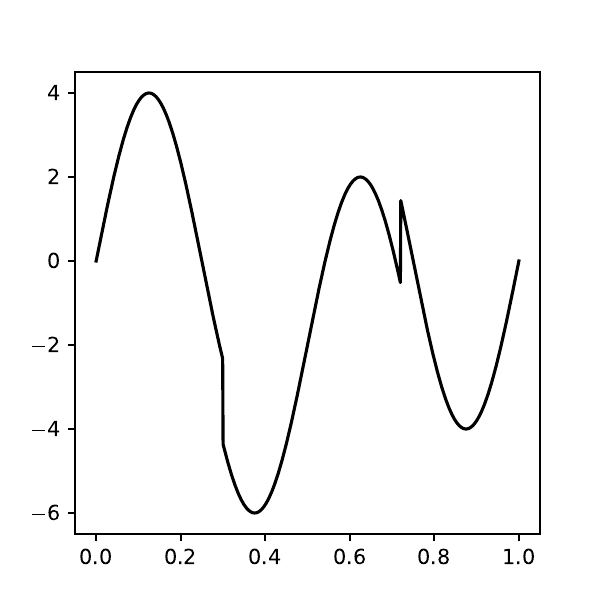}
        \caption{$f_4(x)$ on $[0, 1]$}
        \label{fig:ex_function4}
    \end{subfigure}
\end{figure}

\subsection{Sparse neural network}

Define (deep) neural network space generated by a parameter space $\Theta$ as follows:
\begin{equation}
    \begin{aligned}
        \Phi(\Theta) = \bigg\{
            &f_\theta(x) = (W^{(L+1)}(\cdot) + b^{(L+1)}) \circ \zeta \circ \cdots \circ \zeta \circ (W^{(1)}x + b^{(1)}): \\
            &\theta = (W^{(1)}, b^{(1)}, \cdots, W^{(L+1)}, b^{(L+1)}) \in \Theta \bigg\}.
    \end{aligned}
\end{equation}
where $\zeta$ is an activation function. In this paper, we consider $\zeta(x) = ReLU(x)$ and the following $B$-bounded and $S$-sparse parameter space \citep{Suzuki_2018b}
\begin{equation}
    \begin{aligned}
        \Theta(L, W, S, B):= \bigg\{
            &W^{(l)} \in \mathbb{R}^{p_{l-1} \times p_l}, b^{(l)} \in \mathbb{R}^{p_l}, \\
            &p_l = W, l=1,2\cdots,L,~ p_0=d, p_{L+1}=1, \\
            &\norm{\theta}_0 = \sum_{l=1}^{L+1} \left( \norm{W^{(l)}}_0 + \norm{b^{(l)}}_0 \right) \leq S, \\
            &\norm{\theta}_{\infty} = \max_l \left\{ \norm{W^{(l)}}_{\infty} \vee \norm{b^{(l)}}_{\infty}  \right\} \leq B
            \bigg\}
    \end{aligned}
\end{equation}
and denote $S$-sparse neural network space with $B$-bounded parameters $\Phi(\Theta(L, W, S, B))$ as $\Phi(L, W, S, B)$. We can define the entire neural network model space as follows
\begin{equation}
    \Phi = \bigcup_{L=1}^\infty \bigcup_{W=1}^\infty \bigcup_{S=0}^\infty \bigcup_{B=0}^\infty \Phi(L, W, S, B).
\end{equation}

\section{Main results}\label{sec:main_result}

Let $\UB(F) = \Set{f \in L^{\infty}([0,1]^d): \abs{f(x)} \leq F,~\forall x \in [0, 1]^d}$ be an uniformly bounded family of functions with bound $F >0$.
Consider $\mathscr{F} = B_{p,q}^s([0, 1]^d) \cap \UB(F)$. For each $n \in \mathbb{N}$, and $f \in \mathcal{F}$, let $P_f^{(n)}$ be the probability measure of the data $\mathbb{D}_n = (X_i, y_i)_{i=1}^n$ with a density function $p_f^{(n)}$ with respect to the measure $\mu^{(n)}$ and $\Pi$ be a prior distribution of $f$. The posterior distribution is given by
\begin{equation}
    \Pi(A|\mathbb{D}_n) = \frac{\int_A p_f^{(n)} (\mathbb{D}_n) d\Pi(f)}{\int_{\mathcal{F}} p_f^{(n)} (\mathbb{D}_n) d\Pi(f)},
\end{equation}
for $A \subset \Theta$. Let $$\norm{f}_n = \left( \frac{1}{n} \sum_{i=1}^n \left(f(X_i)\right)^2 \right)^{1/2}$$ be the norm on empirical measure $\mathbb{P}_n^X = \frac{1}{n}\sum_{i=1}^n \delta_{X_i}$ and $$\norm{f}_{L^2(P_X)} = \left( \int \left(f(X)\right)^2 dP_X \right)^{1/2},$$
where $P_X$ is a marginal distribution of $X$. Suppose that $P_X$ has a bounded density function $p_X(x)$ on $[0,1]^d$. Without loss of generality, we assume that $0 \leq p_X(x) \leq R \leq 2$.

Suppose that $0<F<\infty,~0 < p, q \leq \infty,~\omega:=d(1/p - 1/2)_+ < s< \infty$ and set $\nu = (s-\omega)/(2\omega)$. Assume that $m \in \mathbb{N}$ satisfies $0 < s < \min\{m, m-1+1/p\}$. Let $N_n = \lceil n^{d/(2s+d)} \rceil,~W_0 = 6dm(m+2) + 2d$ and
\begin{equation}
    \begin{gathered}
        L_n = L(N_n), \quad W_n = N_n W_0, \\
        S_n = (L_n-1)W_0^2 N_n + N_n, \quad B_n=O(N_n^{\Xi}),
    \end{gathered}
\end{equation}
where $c_{(d,m)}=\left(1 + 2de \frac{(2e)^m}{\sqrt{m}}\right)^{-1},~L(N_n) = 3 + 2\lceil \log_2\left( \frac{3^{d \vee m}}{\tau_n c_{(d, m)}} \right) + 5 \rceil \lceil \log_2(d \vee m)\rceil,~\tau_n = N_n^{-s/d - (\nu^{-1}+d^{-1})(d/p-s)_+} (\log N_n)^{-1}$ and $\Xi=(\nu^{-1} + d^{-1})(1 \vee (d/p-s)_+)$.

Note the theorems in this section are similar to the results in \citet{Suzuki_2018b}, which studied the asymptotic properties of the least squared estimator. They showed that for the least squared estimator
\begin{equation}\label{eqn:suzuki_main}
    \begin{gathered}
        \hat{f} = \argmin_{\bar{f}: f \in \Phi(L_n,W_n,S_n,B_n)} \sum_{i=1}^n (y_i - \bar{f}(x_i))^2, \\
        \mathbb{E}_{\mathbb{D}_n}\left[\norm{f_0 - \hat{f}}_{L^2(P_X)}\right] \lesssim n^{-\frac{2s}{2s+d}} (\log n)^3,
    \end{gathered}
\end{equation}
where $\bar{f}$ is the clipping of $f$ defined by $\bar{f} = \min\Set{\max\Set{f, -F}, F}$ for $f_0 \in \UB(F)$ and $\mathbb{E}_{\mathbb{D}_n}[A]$ is an expectation of $A$ with respect to the training data $\mathbb{D}_n$. Note the minimax convergence rate of a Besov function $f_0$ in (\ref{eqn:model}) has a lower bound $n^{-\frac{s}{2s+d}}$ \citep{Gine_Nickl_2016, donoho1998minimax}. \citet{donoho1998minimax} also showed that the minimax rate of a linear estimator has a lower bound, $n^{-\frac{s-(1/p-1/2)_+}{2s+1-2(1/p-1/2)_+}}$ for $d=1$, $s>1/p$ and $1 \leq p,q \leq \infty$ or $s=p=q=1$. Thus, any linear estimator cannot achieve the minimax optimal rate on the Besov space for $p<2$ and is dominated by the neural network model.

\subsection*{Spike-and-slab prior}

This result is an extension of Theorem 6.1 in \citet{polson2018posterior}. They showed that the Bayesian ReLU networks are consistent with nearly minimax rates in the H\"older space. We extend the function space from the H\"older space to the Besov space which contains the H\"older and Sobolev spaces. Assume the following spike-and-slab prior $\pi(\theta)$
\begin{equation}\label{eqn:sparse_prior}
    \begin{gathered}
        \pi(\theta_j |\gamma_j,L,W,S,B)  = \gamma_j \tilde{\pi}(\theta_j|L,W,S,B)  + (1-\gamma_j) \delta_0(\theta_j),\\ 
        \pi(\boldsymbol{\gamma}|L,W,S,B) = \frac{1}{\binom{T}{S}},
    \end{gathered}
\end{equation}
\begin{equation}\label{eqn:known_hyperparams}
    \pi(L=L_n)=\pi(W=W_n)=\pi(S=S_n)=\pi(B=B_n)=1,
\end{equation}
where $\tilde{\pi}(\theta_j|L,W,S,B) = U(\theta_j;[-B,B])$ and $T = \abs{\Theta(L,W,S,B)}$.

\begin{theorem}\label{thm:main}
    Assume model (\ref{eqn:model}) and prior distribution (\ref{eqn:sparse_prior}) and (\ref{eqn:known_hyperparams}). Suppose that $0<F<\infty,~0<p, q \leq \infty$ and $d(1/p - 1/2)_+ < s$. Then the posterior distribution concentrates at the true function with a rate $\epsilon_n = n^{-s/(2s+d)}(\log n)^{3/2}$. That is, 
    $$\Pi(f_\theta \in \Phi \cap \UB(F): \norm{f_\theta - f_0}_n > M_n \epsilon_n | \mathbb{D}_n) \rightarrow 0$$
    in $P_{f_0}^{(n)}$-probability as $n\rightarrow \infty$ for any $M_n \rightarrow \infty$.
\end{theorem}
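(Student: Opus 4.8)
The plan is to verify the three conditions of the general posterior contraction theorem for independent, non-identically distributed observations (Ghosal and van der Vaart), specialized to Gaussian nonparametric regression with the empirical norm $\norm{\cdot}_n$. Writing $\epsilon_n = n^{-s/(2s+d)}(\log n)^{3/2}$, I would exhibit a sieve $\mathcal{F}_n$ and check: (a) a prior small-ball bound $\Pi(f_\theta: \norm{f_\theta - f_0}_n \le \epsilon_n) \ge \exp(-C_1 n\epsilon_n^2)$; (b) a metric-entropy bound $\log N(\epsilon_n, \mathcal{F}_n, \norm{\cdot}_n) \le C_2 n\epsilon_n^2$; and (c) a tail bound $\Pi(\mathcal{F}_n^c) \le \exp(-C_3 n \epsilon_n^2)$. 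Because the spike-and-slab prior (\ref{eqn:sparse_prior}) is supported, by construction, entirely on $\Phi(L_n,W_n,S_n,B_n)$ (exactly $S_n$ active coordinates, each slab draw bounded by $B_n$), I would take $\mathcal{F}_n = \Phi(L_n,W_n,S_n,B_n)$, so that (c) holds trivially with $\Pi(\mathcal{F}_n^c)=0$ and only (a) and (b) require work. Throughout, the essential analytic input is the approximation result underlying (\ref{eqn:suzuki_main}): there is a network $f_{\tilde\theta} \in \Phi(L_n,W_n,S_n,B_n)$ with $\norm{f_0 - f_{\tilde\theta}}_\infty \lesssim N_n^{-s/d}(\log n)^{\kappa} \asymp n^{-s/(2s+d)}(\log n)^\kappa$, which controls $\norm{f_0 - f_{\tilde\theta}}_n$ uniformly over the design.

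For the small-ball condition (a), I would factor the prior probability of a neighborhood of $f_{\tilde\theta}$ into the probability of selecting a sparsity pattern $\boldsymbol\gamma$ whose support contains that of $\tilde\theta$, times the probability that the slab draws on the active coordinates land within a parameter-ball of radius $\rho$ around $\tilde\theta$. The first factor is at least $1/\binom{T_n}{S_n}$, whose negative logarithm is of order $S_n \log(T_n/S_n) \asymp S_n \log N_n$; the second factor is at least $(\rho/(2B_n))^{S_n}$, contributing $-S_n \log(B_n/\rho)$. To convert a parameter-ball into a function-space ball I would use a Lipschitz-in-parameter estimate for the network map, so that $\norm{f_\theta - f_{\tilde\theta}}_n \le \epsilon_n$ whenever the active parameters lie within $\rho$ of $\tilde\theta$; this fixes the admissible $\rho$. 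Combining, $-\log \Pi(\cdot) \lesssim S_n\log(B_n N_n/\rho)$, and since $S_n \asymp N_n \log n \asymp n^{d/(2s+d)}\log n$ while $n\epsilon_n^2 \asymp n^{d/(2s+d)}(\log n)^3$, it suffices that $\log(B_n N_n/\rho) \lesssim (\log n)^2$, which I would verify from $B_n = O(N_n^\xi)$ together with the size of the Lipschitz constant.

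For the entropy condition (b), I would invoke the standard covering-number bound for sparse, bounded ReLU networks (as in Schmidt-Hieber and \citet{Suzuki_2018b}), $\log N(\delta, \Phi(L_n,W_n,S_n,B_n), \norm{\cdot}_\infty) \lesssim S_n \log(\delta^{-1} L_n W_n B_n)$, and pass to $\norm{\cdot}_n \le \norm{\cdot}_\infty$. Evaluated at $\delta = \epsilon_n$, the right-hand side is of order $S_n \log n \asymp N_n (\log n)^2$ (and at most $N_n(\log n)^3$ if the depth $L_n \asymp \log n$ enters multiplicatively), which in either case is $\le C_2 n\epsilon_n^2 = C_2 N_n(\log n)^3$; here the extra half-power of $\log n$ in $\epsilon_n$ relative to (\ref{eqn:suzuki_main}) is precisely the slack that absorbs the depth factor. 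Finally, given (a)--(c) and the existence of exponentially consistent tests for empirical $L^2$-separation under Gaussian noise, the master theorem yields $\Pi(\norm{f_\theta - f_0}_n > M_n\epsilon_n \mid D_n) \to 0$ in $P_{f_0}^{(n)}$-probability for any $M_n \to \infty$.

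The hard part will be the Lipschitz-in-parameter control used in (a): because the depth $L_n$ grows like $\log n$ and, unlike the bounded-weight regimes of Schmidt-Hieber, the weight bound $B_n = O(N_n^\xi)$ is allowed to grow polynomially, the naive product Lipschitz constant of the composed affine--ReLU layers is of order $(W_n B_n)^{L_n}$, whose logarithm is of order $L_n \log(W_n B_n) \asymp (\log n)^2$. The delicate point is to show that this costs only an additional $\log n$ factor in $-\log\rho$, so that $S_n\log(B_n N_n/\rho)$ stays within $n\epsilon_n^2 \asymp n^{d/(2s+d)}(\log n)^3$; tracking carefully how the growing $B_n$ enters both the approximation radius and the Lipschitz bound, rather than treating $B_n$ as constant, is where the argument departs from \citet{polson2018posterior} and where the specific exponent $\xi = \min\{1,\nu^{-1}+d^{-1}\}$ is needed.
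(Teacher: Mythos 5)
Your proposal follows essentially the same route as the paper's proof: the same Ghosal--van der Vaart-type three-condition scheme, the same sieve $\mathcal{F}_n = \Phi(L_n,W_n,S_n,B_n)$ equal to the full prior support so that the tail condition is vacuous, the entropy bound via the sparse-network covering lemma, and the prior small-ball bound via Suzuki's approximation result combined with the Lipschitz-in-parameter estimate, whose $L_n\log(W_nB_n)\asymp(\log n)^2$ cost per coordinate is absorbed by $n\epsilon_n^2\asymp N_n(\log n)^3$ exactly as you compute. If anything, your explicit inclusion of the $1/\binom{T_n}{S_n}$ pattern-selection factor is slightly more careful than the paper's displayed small-ball computation, but the argument is the same.
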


\begin{proof}
See appendix \ref{subsec:proof_main}.
\end{proof}

As mentioned in \citet{Suzuki_2018b}, the condition $d(1/p - 1/2)_+ < s$ indicates that $f_0 \in B_{p,q}^s([0,1]^d)$ is in $L^2([0,1]^d)$ and can be discontinuous. When $p=q=\infty$, $B_{p,q}^s([0,1]^d) = C^s([0,1]^d)$ and the result by \citet{polson2018posterior} is a special case of \autoref{thm:main}.

\subsection*{Adaptive estimation}

This result is an extension of Theorem 6.2 in \citet{polson2018posterior} to the Besov space. They showed that the Bayesian ReLU networks adapt to smoothness on the H\"older space under the spike-and-slab prior.

\begin{theorem}\label{thm:adaptive_estimation}
    Assume model (\ref{eqn:model}). Let
    \begin{equation}
        \tilde{L}_n({{H}}) = \lceil {{H}} (\log n) \rceil \vee 1,~ \tilde{W}_n({{H}},N) = {{H}} N,~ \tilde{S}_n({{H}},N) = {{H}} N \tilde{L}_n({{H}}),~ \tilde{B}_n({{H}}, N) = N^{{{H}}}.
    \end{equation}
    Consider the following prior
    \begin{equation}\label{eqn:adaptive_prior}
        N \stackrel{d}{\equiv} 1 \vee \lceil {{Z}} / (\log n)^{2} \rceil, \quad \pi_{{{Z}}}\left({{Z}}\right) = \frac{\lambda_N^{{{Z}}}}{{{Z}}!(e^{\lambda_N}-1)} \quad \text{for}~{{Z}}=1,2,\cdots,
    \end{equation}
    and prior distribution (\ref{eqn:sparse_prior}) of $\theta$ given $\tilde{L}_n({{H}}_n),~\tilde{W}_n({{H}}_n,N),~\tilde{S}_n({{H}}_n,N),~\tilde{B}_n({{H}}_n, N)$ on the function space
    \begin{equation}
        \Phi = \bigcup_{n=1}^{\infty} \bigcup_{N=1}^{\infty} \Phi(\tilde{L}_n({{H}}_n),~\tilde{W}_n({{H}}_n,N),~\tilde{S}_n({{H}}_n,N),~\tilde{B}_n({{H}}_n, N))
    \end{equation}
    for any ${{H}}_n \rightarrow \infty$ and $\lambda_N>0$.
    Suppose that $0<F<\infty,~0<p, q \leq \infty$ and $d(1/p-1/2)_+ < s < \min\{m, m-1 + 1/p \}$.
    The posterior distribution concentrates at the true function with a rate $\epsilon_n = n^{-s/(2s+d)}(\log n)^{3/2}$. That is,
    $$\Pi\left(f_\theta \in \Phi \cap \UB(F): \norm{f_\theta - f_0}_n > M_n \epsilon_n | \mathbb{D}_n\right) \rightarrow 0$$
    in $P_{f_0}^{(n)}$-probability as $n\rightarrow \infty$ for any $M_n \rightarrow \infty$.
\end{theorem}

\begin{proof}
    See appendix \ref{subsec:proof_adaptive}.
\end{proof}

In \autoref{thm:main}, the prior distribution depends on the smoothness parameters $p$, $q$, and  $s$. \autoref{thm:adaptive_estimation} shows that, as in \citet{polson2018posterior}, a similar result can be obtained by considering the appropriate prior distribution even if the values of these parameters are unknown. Note it can be easily shown that BNNs do not overfit as in \citet{polson2018posterior}. 

\subsection*{Shrinkage prior}

Asymptotic properties of the spike-and-slab prior can be obtained using \autoref{thm:main}, but its practical use for large neural network models is unsuitable due to the computational cost. The following prior distributions are more practical than a spike-and-slab prior in terms of computation.

The result to be introduced in this section shows that all the aforementioned results still hold even under the shrinkage prior. Assume the following prior
\begin{equation}\label{eqn:shr_prior}
    \pi(\theta|L,W,S,B) = \prod_{j=1}^{T} g(\theta_j|L,W,S,B)
\end{equation}
where $T = \abs{\Theta(L,W,S,B)}$, $g(t) := g(t|L, W, S, B)$ is a symmetric density function and decreasing on $t>0$.

\begin{theorem}\label{thm:shrinkage}
    Assume model (\ref{eqn:model}), prior distribution (\ref{eqn:known_hyperparams}) and (\ref{eqn:shr_prior}). Suppose that $0<F<\infty,~0<p, q \leq \infty$ and $d(1/p-1/2)_+ < s < \min\{m, m-1 + 1/p \}$. Let $\epsilon_n = n^{-s/(2s+d)}(\log n)^{3/2}$ and $g(t)$ be a function such that 
    \begin{equation}\label{eqn:shr_spike}
    \begin{gathered}
        a_n \leq \frac{\epsilon_n}{72 L_n (B_n \vee 1)^{L_n-1} (W_n+1)^{L_n} } \\
        u_n=\int_{[-a_n, a_n]} g(t|L_n,W_n,S_n,B_n) dt \\
    \end{gathered}
    \end{equation}
    satisfies 
    \begin{equation}\label{cond:shr_spike}
        \frac{S_n}{T_n} > 1 - u_n \geq \frac{S_n}{T_n} \eta_n,
    \end{equation}
    \begin{equation}\label{cond:shr_tail}
        - \log g(B_n|L_n,W_n,S_n,B_n) \lesssim (\log n)^2,
    \end{equation}
    continuous on $[-B_n, B_n]$ and
    \begin{equation}\label{cond:shr_support}
        v_n = \int_{[-B_n, B_n]^c} g(t|L_n,W_n,S_n,B_n) dt = o\left(e^{-K_0 n \epsilon_n^2}\right),
    \end{equation}
    where $\eta_n=\exp(-Kn\epsilon_n^2 / S_n)$ for some $K,~K_0>4$.
    The posterior distribution concentrates at the true function with a rate $\epsilon_n$. That is, 
    $$\Pi(f_\theta \in \Phi \cap \UB(F): \norm{f_\theta - f_0}_n > M_n \epsilon_n | \mathbb{D}_n) \rightarrow 0$$
    in $P_{f_0}^{(n)}$-probability as $n\rightarrow \infty$ for any $M_n \rightarrow \infty$.
\end{theorem}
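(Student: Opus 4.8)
The strategy is to verify the three conditions of the standard posterior contraction machinery (the prior-mass, sieve, and entropy conditions of Ghosal--Ghosh--van der Vaart type) that already underlie \autoref{thm:main}, now for the product shrinkage prior (\ref{eqn:shr_prior}). The organizing idea, following the spike-and-slab-to-shrinkage reduction of \citet{song2017nearly}, is to treat the interval $[-a_n, a_n]$ as the effective ``spike'' and its complement as the effective ``slab'': a coordinate lands in the slab with probability $1 - u_n$, so the expected number of active coordinates $T_n(1 - u_n)$ lies between $S_n \eta_n$ and $S_n$ by (\ref{cond:shr_spike}). The threshold $a_n$ in (\ref{eqn:shr_spike}) is chosen exactly so that zeroing out every coordinate of magnitude at most $a_n$ moves the network output by at most $O(\epsilon_n)$ in $\norm{\cdot}_n$; this rests on a Lipschitz bound for $f_\theta$ in $\theta$ whose constant is of order $L_n (B_n \vee 1)^{L_n - 1}(W_n + 1)^{L_n}$ on the bounded region, which is the source of the explicit denominator in (\ref{eqn:shr_spike}).

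For the prior-mass condition I would invoke the approximation result underlying (\ref{eqn:suzuki_main}): there is a parameter $\theta^\ast$ supported on at most $S_n$ coordinates, all bounded by $B_n$, with $\norm{f_{\theta^\ast} - f_0}_\infty \lesssim \epsilon_n$. It then suffices to lower bound the probability that the coordinates in the support of $\theta^\ast$ each fall in a window of width $w_n \sim \epsilon_n / \mathrm{Lip}$ around $\theta^\ast_j$ and that all remaining coordinates fall in $[-a_n, a_n]$. Since $g$ is symmetric and decreasing away from $0$, its value on $[-B_n, B_n]$ is at least $g(B_n)$, so the active part contributes at least $(w_n g(B_n))^{S_n}$, whose logarithm is $\gtrsim - S_n\big((\log n)^2 + \log n\big) \gtrsim - n\epsilon_n^2$ by (\ref{cond:shr_tail}) together with $S_n (\log n)^2 \asymp n\epsilon_n^2$, while the inactive part contributes $u_n^{\,T_n - S_n} \geq e^{-o(n\epsilon_n^2)}$ because $1 - u_n < S_n / T_n$. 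The lower bound $1 - u_n \geq \frac{S_n}{T_n}\eta_n$ keeps the slab-inclusion probability comparable to the $S_n/T_n$ of the spike-and-slab prior (\ref{eqn:sparse_prior}), so the resulting small-ball mass is $\geq e^{-C n\epsilon_n^2}$; for Gaussian noise with known $\sigma^2$ this is precisely a Kullback--Leibler neighborhood since $\mathrm{KL}\big(P_{f_0}^{(n)}, P_{f_\theta}^{(n)}\big) = \frac{n}{2\sigma^2}\norm{f_0 - f_\theta}_n^2$.

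For the remaining-mass and entropy conditions I would use the sieve $\mathcal{F}_n$ of networks whose parameters are bounded by $B_n$ and have at most $\bar{S}_n$ coordinates exceeding $a_n$ in magnitude, for a suitable $\bar{S}_n$. The event that some coordinate exceeds $B_n$ has prior probability at most $T_n v_n = o(e^{-(K_0 - o(1)) n\epsilon_n^2})$, since $\log T_n \lesssim \log n \ll n\epsilon_n^2$ and $v_n$ obeys (\ref{cond:shr_support}); the number of coordinates exceeding $a_n$ is $\mathrm{Binomial}(T_n, 1 - u_n)$, whose upper tail beyond $\bar{S}_n$ is controlled by a Chernoff bound calibrated against the mean $T_n(1 - u_n)$. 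On $\mathcal{F}_n$, the $a_n$-rounding used in the proof of \autoref{thm:main} maps each function into an $\epsilon_n$-neighborhood of a genuinely $\bar{S}_n$-sparse, $B_n$-bounded network, so the metric entropy inherits the sparse-network bound $\log N(\epsilon_n, \mathcal{F}_n, \norm{\cdot}_n) \lesssim \bar{S}_n \log n$. These three conditions then feed the standard testing argument -- exponentially powerful tests separating $\{\norm{f - f_0}_n > M_n \epsilon_n\}$ from $f_0$ built from the entropy bound on $\mathcal{F}_n$ -- to yield the stated contraction rate.

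The main obstacle I anticipate is the simultaneous satisfaction of the entropy and remaining-mass conditions under the single two-sided hypothesis (\ref{cond:shr_spike}): the entropy budget forces $\bar{S}_n \log n \lesssim n\epsilon_n^2$, hence $\bar{S}_n$ close to $S_n$, while the Chernoff bound on the number of active coordinates demands $\bar{S}_n$ large enough relative to the mean $T_n(1 - u_n)$ that the tail falls below $e^{-C n\epsilon_n^2}$. Reconciling these is exactly what the calibration of $\eta_n = \exp(-K n\epsilon_n^2 / S_n)$ and the constants $K, K_0 > 4$ accomplish, and verifying it rigorously -- together with establishing the uniform parameter-Lipschitz bound that legitimizes the threshold $a_n$ -- is the technical heart of transferring the spike-and-slab analysis of \autoref{thm:main} to the shrinkage prior.
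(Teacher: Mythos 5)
Your proposal follows the paper's proof essentially step for step: the paper verifies the same three conditions (entropy, prior mass, remaining mass) with the sieve $\mathcal{F}_n = \Phi(L_n, W_n, S_n, B_n, a_n)$ --- networks whose coordinates of magnitude above $a_n$ form an $S_n$-sparse, $B_n$-bounded parameter --- with entropy controlled by the thresholded covering bound (\autoref{lem:covering_a}), prior mass bounded below by $u_n^{T_n - S_n}\bigl(\int_{B_n - t_n}^{B_n} g(t)\,dt\bigr)^{S_n} \geq u_n^{T_n - S_n}\bigl(t_n g(B_n)\bigr)^{S_n}$ exactly as in your window argument, and remaining mass split into $T_n v_n$ plus the upper tail of a $\mathrm{Binomial}(T_n, 1-u_n)$ beyond $S_n$, bounded via the large-deviation inequality of \citet{arratia1989tutorial}.

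However, the obstacle you flag in your final paragraph is a genuine gap, not a technicality, and your proposal does not close it. Since the entropy of the sieve scales as $\bar{S}_n (\log n)^2$ while $n\epsilon_n^2 \asymp S_n (\log n)^2$, the sieve sparsity is forced to satisfy $\bar{S}_n \lesssim S_n$; the Chernoff bound $P\bigl(\mathrm{Bin}(T_n, 1-u_n) > \bar{S}_n\bigr) \leq \exp\bigl(-T_n\, \mathrm{KL}(\bar{S}_n/T_n \,\|\, 1-u_n)\bigr)$ is then $o\bigl(e^{-(C+2)n\epsilon_n^2}\bigr)$ only if $\log\frac{S_n/T_n}{1-u_n} \gtrsim n\epsilon_n^2/S_n$, i.e.\ only if $1-u_n \lesssim \frac{S_n}{T_n}\eta_n$ up to constants. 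This is the \emph{reverse} of the lower bound in (\ref{cond:shr_spike}): as stated, the condition permits $1-u_n$ to sit just below $S_n/T_n$, in which case the binomial mean is $\approx S_n$, the tail beyond $S_n$ is $\Theta(1)$ (and even with mean $S_n/2$ it is only $e^{-cS_n}$ with $S_n \asymp n\epsilon_n^2/(\log n)^2 \ll n\epsilon_n^2$), and no sieve with $\bar{S}_n \lesssim S_n$ can satisfy the remaining-mass condition. So the calibration of $\eta_n$, $K$, $K_0$ does not by itself reconcile the two requirements; what is actually needed is the one-sided condition $1-u_n \leq \frac{S_n}{T_n}\eta_n$ (the prior-mass side only uses $1-u_n < S_n/T_n$, so strengthening in this direction costs nothing there). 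Notably, the paper's own proof shares this soft spot: in the displayed chain for condition (c) it replaces $-S_n\log\frac{S_n/T_n}{1-u_n}$ by $-Kn\epsilon_n^2$, a substitution that is a valid upper bound only when $1-u_n \leq \frac{S_n}{T_n}\eta_n$; under (\ref{cond:shr_spike}) as written it goes the wrong way. Your caution was therefore warranted --- completing your outline (or the paper's) requires reading the spike condition with the inequality reversed.
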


\begin{proof}
    See appendix \ref{subsec:proof_shrinkage}.
\end{proof}

Note that condition (\ref{cond:shr_spike}) is a continuous approximation for the spike part of (\ref{eqn:sparse_prior}). The next condition (\ref{cond:shr_tail}) means that the prior should have enough thick tail to sample true function. The last condition (\ref{cond:shr_support}) restricts the thickness of the tail to prevent a function from being divergent.

Unlike the other priors, the shrinkage prior is relatively straightforward to implement. For example, to implement the model in \autoref{thm:main}, one can consider the MCMC algorithm \citep{sun2022learning} which combines Gibbs sampler, the Metropolis-Hastings algorithm, and the SGHMC algorithm. The shrinkage prior avoids the posterior computation with varying dimensions, and enables feasible computation.

Note that the conditions are sufficient conditions for the neural network model necessary to estimate the function in the worst case. In other words, we may infer true function $f_0$ sufficiently even for a prior distribution that satisfies only some conditions.

\begin{example}[Gaussian prior]\label{ex:gaussian}
    Let $\epsilon_n = n^{-s/(2s+d)}(\log n)^{3/2}$ and $\psi$ be a density functions of the standard Gaussian distribution $\Gaussian{0}{1}$. 
    $$g(t) = \frac{1}{\sigma_{n}} \psi\left(\frac{t}{\sigma_{n}} \right),\quad \sigma_{n}^2 = \frac{B_n^2}{2 K_0 n \epsilon_n^2}$$
    satisfies (\ref{cond:shr_support}).
\end{example}

\begin{example}[Gaussian mixture prior]\label{ex:gaussian_mixture}
    Let $\epsilon_n = n^{-s/(2s+d)}(\log n)^{3/2}$, $\psi$ and $\Psi$ be a density and inverse survival functions of the standard Gaussian distribution $\Gaussian{0}{1}$ respectively.
    $$g(t) = \pi_{1n} \frac{1}{\sigma_{1n}} \psi\left(\frac{t}{\sigma_{1n}} \right) + \pi_{2n} \frac{1}{\sigma_{2n}} \psi\left(\frac{t}{\sigma_{2n}} \right)$$
    with $$\pi_{2n} = \frac{S_n}{T_n},~ \pi_{1n} = 1- \pi_{2n},$$
    $$a_n \leq \frac{\epsilon_n}{72 L_n (B_n \vee 1)^{L_n-1} (W_n+1)^{L_n} },~\sigma_{2n}^2 < \frac{B_n^2}{2 K_0 n \epsilon_n^2}$$
    and
    $$a_n / \Psi\left( \frac{\pi_{2n}}{\pi_{1n}}\left[ \frac{1}{2} \eta_n - \Psi^{-1}\left(\frac{a_n}{\sigma_{2n}}\right) \right] \right) \leq \sigma_{1n} < a_n / \Psi\left( \frac{\pi_{2n}}{\pi_{1n}}\left[ \frac{1}{2} - \Psi^{-1}\left(\frac{a_n}{\sigma_{2n}}\right) \right] \right)$$
    satisfies (\ref{cond:shr_spike}) and (\ref{cond:shr_support}).
\end{example}

\begin{example}[Relaxed spike-and-slab prior]\label{ex:relaxed_ss}
    Let $\epsilon_n = n^{-s/(2s+d)}(\log n)^{3/2}$, $\psi$ and $\Psi$ be a density and inverse survival functions of the standard Gaussian distribution $\Gaussian{0}{1}$ respectively. 
    $$g(t) = \pi_{1n} \frac{1}{\sigma_{1n}} \psi\left(\frac{t}{\sigma_{1n}} \right) + \pi_{2n} U(t; -B_n,~B_n) $$
    with $$\pi_{2n} = \frac{S_n}{T_n},~ \pi_{1n} = 1- \pi_{2n},$$
    $$a_n \leq \frac{\epsilon_n}{72 L_n (B_n \vee 1)^{L_n-1} (W_n+1)^{L_n} },$$
    and
    $$a_n / \Psi\left( \frac{\pi_{2n}}{2\pi_{1n}}\left[ \frac{a_n}{B_n} - \left(1 - \eta_n \right) \right] \right) \leq \sigma_{1n} < a_n / \Psi\left( \frac{\pi_{2n}}{2\pi_{1n}} \frac{a_n}{B_n} \right)$$
    satisfies (\ref{cond:shr_spike}), (\ref{cond:shr_tail}) and (\ref{cond:shr_support}).

\end{example}

\section{Numerical examples}\label{sec:example}

However, these results are still not enough practical to be used. As the scale of the data $n$ increases, the complexity of the neural network to obtain the theoretical optimality increases rapidly. For instance, consider a problem of estimating the Besov functions $f_1,~f_2,~f_3$ and $f_4$. Assume the Gaussian mixture prior distribution for each parameter as in Example \ref{ex:gaussian_mixture},
\begin{equation*}
    \begin{gathered}
    B_n = 10 N_n^{\min\Set{\Xi, 1}},~K_0=5,~\eta_n = \exp(- K_0 n \epsilon_n^2 / S_n) \\
    a_n = \frac{\epsilon_n}{72 L_n (B_n \vee 1)^{L_n-1} (W_n+1)^{L_n} } \\
    \sigma_{2n}^2 = \frac{B_n^2}{2 (K_0 + 1) n \epsilon_n^2},~ \sigma_{1n} = a_n / \Psi\left( \frac{\pi_{2n}}{\pi_{1n}}\left[ \frac{1}{2} \eta_n - \Psi^{-1}\left(\frac{a_n}{\sigma_{2n}}\right) \right] \right).
    \end{gathered}
\end{equation*}

In \autoref{table:f1_networks} and \autoref{table:f2_networks}, the hyperparameters (model parameters) to estimate each function were presented. These values are calculated numerically and may vary depending on the computer environment. Its speed is affected by the complexity of the true regression function. As can be seen from the tables, the smaller variance is almost zero. Considering the numerical precision, it is indistinguishable from spike-and-slab prior \citep{ishwaran2005spike}.

\begin{table}[ht]
    \centering
    \caption{Hyperparameters of BNN to estimate $f_1 \in B_{\infty, \infty}^{\log2 / \log 3}([0, 1])$.}
    \label{table:f1_networks}
    \begin{tabular}{lrrrrrr}
    \toprule
    $n$ & $L$ & $W$ & $\sigma_{1n}$ & $\sigma_{2n}$ & $\pi_{1n}$ & $\pi_{2n}$ \\
    \midrule
    100 & 39 & 400 & $3.747\times10^{-179}$ & $8.443\times10^{-1}$ & $8.719\times10^{-1}$ & $1.280\times10^{-1}$ \\
    1,000 & 43 & 1,100 & $1.505\times10^{-234}$ & $7.597\times10^{-1}$ & $9.534\times10^{-1}$ & $4.651\times10^{-2}$ \\
    10,000 & 45 & 2,950 & $1.508\times10^{-283}$ & $7.954\times10^{-1}$ & $9.826\times10^{-1}$ & $1.733\times10^{-2}$ \\
    \bottomrule
    \end{tabular}
\end{table}

\begin{table}[ht]
    \centering
    \caption{Hyperparameters of BNN to estimate $f_2,~f_3$ and $f_4 \in B^{1}_{1, 1}([0, 1])$.}
    \label{table:f2_networks}
    \begin{tabular}{lrrrrrr}
    \toprule
    $n$ & $L$ & $W$ & $\sigma_{1n}$ & $\sigma_{2n}$ & $\pi_{1n}$ & $\pi_{2n}$ \\
    \midrule
    100 & 41 & 250 & $8.669\times10^{-172}$ & $6.779\times10^{-1}$ & $7.957\times10^{-1}$ & $2.042\times10^{-1}$ \\
    1,000 & 43 & 500 & $1.259\times10^{-205}$ & $5.028\times10^{-1}$ & $8.977\times10^{-1}$ & $1.022\times10^{-1}$ \\
    10,000 & 47 & 1,100 & $1.963\times10^{-256}$ & $4.894\times10^{-1}$ & $9.535\times10^{-1}$ & $4.642\times10^{-2}$ \\
    \bottomrule
    \end{tabular}
\end{table}

In practice, even the smaller models can approximate the true function with a sufficient precision. We present the numerical experiments with the four functions in \autoref{appendix:numerical_results}.

\section{Conclusions}\label{sec:conclusions}

In this paper, we justify the theoretical properties of neural networks that are currently widely used. Specifically, we focused on the asymptotic properties of the Bayesian neural networks and extend results in \citet{polson2018posterior} to the Besov space from the H\"older space. \autoref{thm:main} shows that the Bayesian neural network with a spike-and-slab prior has consistency posterior with a nearly minimax rate at Besov functions with known smoothness. \autoref{thm:shrinkage} extends the result to the shrinkage prior. According to these theorems, the Bayesian neural networks have posterior consistency and (nearly) optimal rates by assuming proper prior. We found there are three sufficient conditions to achieve posterior consistency, (1) enough mass around zero, (2) thick tail to sample true function (3) while not too much. \autoref{thm:adaptive_estimation} shows that even if the smoothness of the function is not known, the same rate can be obtained by considering a prior on the architecture of the neural networks.

The results in this paper are an extension of the results of \citet{Suzuki_2018b} on the Bayesian neural networks also. We suggest a practical method that can achieve the optimal rate while providing the uncertainty of the prediction. Note they showed the theoretical properties of the neural networks in the Besov space via nonparametric regression based on B-spline. In other words, similar results can be obtained using the Bayesian LARK B-spline model \citep{park2021vy}. 

In future works, performance comparison experiments with other models will also be of great help in establishing the properties of the Bayesian neural network. Empirically, we know that there are fields in which decision-tree-based models such as random forest or neural network models perform better than traditional statistical models that find smooth functional relationships such as kernel regression. For these data, it is natural that there would be a non-smooth relationship between the explanatory variable and the response variable. For this reason, it is essential to compare Bayesian neural network models and statistical models for future studies.

We obtained results under the limited assumption, setting the networks fully connected and using the ReLU activation function. Also, as mentioned in \autoref{sec:example}, the results are not enough practical to be used. In addition, we do not suggest a result for the adaptive version of \autoref{thm:shrinkage} yet. It is a necessary future study to propose a general shrinkage prior distribution that satisfies all conditions even when the parameters of the network are varying. The results of this study are at a basic level compared to the needs for the Bayesian neural networks in the industrial field, such as visual data analysis and natural language processing. It is important to research the theoretical properties of the Bayesian neural networks used in the practical area through ongoing studies. 

\section*{Acknowledgments}

Kyeongwon Lee and Jaeyong Lee were supported by the National Research Foundation of Korea (NRF) grant funded by the Korea government(MSIT) (No. 2018R1A2A3074973 and 2020R1A4A1018207).

\bibliography{../refs}

\section*{Checklist}

\begin{enumerate}

\item For all authors...
\begin{enumerate}
  \item Do the main claims made in the abstract and introduction accurately reflect the paper's contributions and scope?
    \answerYes{}
  \item Did you describe the limitations of your work?
    \answerYes{See \autoref{sec:conclusions}.}
  \item Did you discuss any potential negative societal impacts of your work?
    \answerYes{}
  \item Have you read the ethics review guidelines and ensured that your paper conforms to them?
    \answerYes{}
\end{enumerate}

\item If you are including theoretical results...
\begin{enumerate}
  \item Did you state the full set of assumptions of all theoretical results?
    \answerYes{See \autoref{sec:main_result}.}
        \item Did you include complete proofs of all theoretical results?
    \answerYes{See \autoref{sec:main_result} and \autoref{appendix:proofs}.}
\end{enumerate}

\item If you ran experiments...
\begin{enumerate}
  \item Did you include the code, data, and instructions needed to reproduce the main experimental results (either in the supplemental material or as a URL)?
    \answerYes{}
  \item Did you specify all the training details (e.g., data splits, hyperparameters, how they were chosen)?
    \answerYes{}
        \item Did you report error bars (e.g., with respect to the random seed after running experiments multiple times)?
    \answerNA{}
        \item Did you include the total amount of compute and the type of resources used (e.g., type of GPUs, internal cluster, or cloud provider)?
    \answerYes{}
\end{enumerate}

\item If you are using existing assets (e.g., code, data, models) or curating/releasing new assets...
\begin{enumerate}
  \item If your work uses existing assets, did you cite the creators?
    \answerYes{See supplementary material.}
  \item Did you mention the license of the assets?
    \answerYes{}
  \item Did you include any new assets either in the supplemental material or as a URL?
    \answerYes{}
  \item Did you discuss whether and how consent was obtained from people whose data you're using/curating?
    \answerNA{}
  \item Did you discuss whether the data you are using/curating contains personally identifiable information or offensive content?
    \answerNA{}
\end{enumerate}

\item If you used crowdsourcing or conducted research with human subjects...
\begin{enumerate}
  \item Did you include the full text of instructions given to participants and screenshots, if applicable?
    \answerNA{}
  \item Did you describe any potential participant risks, with links to Institutional Review Board (IRB) approvals, if applicable?
    \answerNA{}
  \item Did you include the estimated hourly wage paid to participants and the total amount spent on participant compensation?
    \answerNA{}
\end{enumerate}

\end{enumerate}


\clearpage 

\appendix

\section{Posterior consistency}\label{appendix:posterior_consistency}

\begin{lemma}[Lemma 2 in \citet{ghosal2007convergence}]\label{thm:lem2ghosal2007}
    Suppose that observation $X^{(n)} = (X_1, \cdots, X_n)$ of independent observations $X_i$. Assume that the distribution $P_{\theta, i}$ of the $i$th component $X_i$ has a density $p_{\theta, i}$ relative to a $\sigma$-finite measure $\mu_i$. Let $P_\theta^{(n)} = \bigotimes_{i=1}^n P_{\theta, i}$ be the product measures and average Hellinger distance
    $$d_n^2(\theta_0, \theta) = \frac{1}{n} \sum_{i=1}^n \int (\sqrt{p_{\theta_0,i}} - \sqrt{p_{\theta, i}})^2 d\mu_i.$$
    Then there exists test $\phi_n$ such that $P_{\theta_0}^{(n)} \phi_n \leq e^{-\frac{1}{2}nd_n^2(\theta_0,\theta_1)}$ and $P_{\theta}^{(n)} (1-\phi_n) \leq e^{-\frac{1}{2}nd_n^2(\theta_0,\theta_1)}$ for all $\theta \in \Theta$ such that $d_n(\theta, \theta_1) \leq \frac{1}{18}d_n(\theta_0,\theta_1)$.
\end{lemma}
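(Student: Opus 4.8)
The final statement is Le Cam's classical test-existence result for product experiments, with the separation measured in a root-average-squared Hellinger distance. I read the displayed quantity as $d_n^2(\theta,\theta') = \tfrac1n\sum_{i=1}^n\int(\sqrt{p_{\theta,i}}-\sqrt{p_{\theta',i}})^2\,d\mu_i$, so that $d_n(\theta,\theta')$ is its square root; since each summand is a squared Hellinger distance and Hellinger distance is a metric, $d_n$ is itself a metric, which legitimizes the triangle inequality implicitly used in the hypothesis $d_n(\theta,\theta_1)\le\frac1{18}d_n(\theta_0,\theta_1)$. The plan is to first build a test against the single alternative $\theta_1$ from the likelihood ratio, control its two error probabilities by the Hellinger affinity, and then upgrade it to a test that is uniform over the whole Hellinger ball of alternatives by a convexity/minimax argument.

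For the simple-versus-simple step I would take $\phi_n = \mathbf 1\{p_{\theta_1}^{(n)}\ge p_{\theta_0}^{(n)}\}$, the likelihood-ratio test of $\theta_0$ against $\theta_1$. On its rejection region $p_{\theta_0}^{(n)}\le\sqrt{p_{\theta_0}^{(n)}p_{\theta_1}^{(n)}}$, so $P_{\theta_0}^{(n)}\phi_n=\int_{\{p_{\theta_1}^{(n)}\ge p_{\theta_0}^{(n)}\}}p_{\theta_0}^{(n)}\,d\mu^{(n)}\le\int\sqrt{p_{\theta_0}^{(n)}p_{\theta_1}^{(n)}}\,d\mu^{(n)}=\prod_{i=1}^n\rho_i$, where $\rho_i=\int\sqrt{p_{\theta_0,i}p_{\theta_1,i}}\,d\mu_i$ is the $i$-th Hellinger affinity and the product factorizes because the experiment is a product. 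The bound for $P_{\theta_1}^{(n)}(1-\phi_n)$ follows symmetrically. Writing $\rho_i = 1-\tfrac12\int(\sqrt{p_{\theta_0,i}}-\sqrt{p_{\theta_1,i}})^2\,d\mu_i$ and applying $1-x\le e^{-x}$ gives $\prod_i\rho_i\le\exp\!\big(-\tfrac12\sum_i\int(\sqrt{p_{\theta_0,i}}-\sqrt{p_{\theta_1,i}})^2\big)=e^{-\frac n2 d_n^2(\theta_0,\theta_1)}$, which is exactly the asserted exponential bound for the two simple hypotheses.

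To obtain the error bound uniformly over the ball $\{\theta: d_n(\theta,\theta_1)\le\frac1{18}d_n(\theta_0,\theta_1)\}$, I would replace the simple likelihood-ratio test by the test furnished by Le Cam's minimax testing theorem for a simple null against a \emph{convex} alternative: let $\mathcal Q$ be the convex hull of $\{p_\theta^{(n)}: d_n(\theta,\theta_1)\le\frac1{18}d_n(\theta_0,\theta_1)\}$ in the space of densities, and take the test whose maximal error between $p_{\theta_0}^{(n)}$ and $\mathcal Q$ is governed by the worst-case affinity $\sup_{q\in\mathcal Q}\int\sqrt{p_{\theta_0}^{(n)}q}\,d\mu^{(n)}$. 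The work that remains is to show this worst-case affinity is still at most $e^{-\frac n2 d_n^2(\theta_0,\theta_1)}$: for any individual alternative $\theta$ in the ball the triangle inequality gives $d_n(\theta_0,\theta)\ge\frac{17}{18}d_n(\theta_0,\theta_1)$, so each $p_\theta^{(n)}$ is well separated from the null, and one then bounds the affinity of a finite mixture $\sum_j\lambda_j p_{\theta_j}^{(n)}$ to $p_{\theta_0}^{(n)}$ in terms of these individual separations, using the product structure. The margin $\frac1{18}$ is exactly the slack chosen so that, after the loss incurred by passing from single alternatives to their convex hull, the exponent still dominates $\tfrac12 n\, d_n^2(\theta_0,\theta_1)$.

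The main obstacle is this last step. The Hellinger ball of alternatives is not convex, so one cannot simply test each alternative separately and union the errors; moreover the affinity of a mixture to $p_{\theta_0}^{(n)}$ can exceed the affinities of its components, so convexification genuinely degrades the separation. Handling this requires invoking the minimax (saddle-point) testing theorem for convex alternatives and then carefully estimating the mixture affinity, and it is the tracking of constants through this convexification — to land precisely on the radius $\frac1{18}$ and the exponent $\frac12$ rather than some weaker pair — that is delicate. By contrast, the simple-versus-simple construction and the affinity-to-Hellinger conversion in the second paragraph are routine.
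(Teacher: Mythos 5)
You have not proved the lemma; you have proved its easy half and correctly named, but left open, the hard half. For calibration: the paper itself offers no proof either --- its ``proof'' is the citation to \citet{ghosal2007convergence}, where the result rests on the Le Cam--Birg\'e theory of tests for independent experiments. Your second paragraph (likelihood-ratio test against the single alternative $\theta_1$, factorization of the Hellinger affinity over coordinates, and $1-x\le e^{-x}$, giving both errors at most $e^{-\frac{n}{2}d_n^2(\theta_0,\theta_1)}$) is correct and is the standard opening move; your observation that $d_n$ is a metric is also fine. But the entire content of the lemma is the uniformity of the type II bound over the ball $\{\theta: d_n(\theta,\theta_1)\le\frac{1}{18}d_n(\theta_0,\theta_1)\}$, and this is exactly the step you do not carry out: you reduce it to bounding $\sup_{Q}\int\sqrt{p_{\theta_0}^{(n)}q}\,d\mu^{(n)}$ over the convex hull of the product alternatives, and then stop, conceding in your last paragraph that this is ``the main obstacle.'' What is missing is the key idea, not constant-tracking, so the proposal is a plan rather than a proof.

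Two concrete reasons the sketched route would not close as written. First, the convex-hull affinity genuinely cannot be controlled by the per-point separations you compute: pointwise $p\wedge\sum_j\lambda_j q_j\ge\sum_j\lambda_j(p\wedge q_j)$, so the inequality runs the wrong way, and knowing that every $P_\theta^{(n)}$ in the ball is far from $P_{\theta_0}^{(n)}$ says nothing about mixtures. The resolution must exploit independence across coordinates in an essential way --- Le Cam's sub-multiplicativity of the minimax testing affinity between convex sets of product measures (Le Cam, 1986), or Birg\'e's (1984) robust tests built from bounded transforms of likelihood ratios --- and you invoke this machinery only by name, in a form that does not directly apply anyway, since the alternative set here is defined by an \emph{average} of squared Hellinger distances over coordinates, not by a product of per-coordinate balls. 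Second, even granting a clean convexification, your accounting provably undershoots the stated constants: the triangle inequality gives only $d_n(\theta_0,\theta)\ge\frac{17}{18}d_n(\theta_0,\theta_1)$, so per-point affinity bounds yield at best exponent $\frac12\left(\frac{17}{18}\right)^2 n\,d_n^2(\theta_0,\theta_1)$, strictly weaker than the asserted $\frac12 n\,d_n^2(\theta_0,\theta_1)$. In the actual argument the full exponent on the type I error comes from testing against $\theta_1$ at full separation, and the robustness of the type II error near $\theta_1$ comes from the structure of the robust test itself, not from Hellinger affinity to $\theta_0$ at the reduced separation. So the gap is genuine, and it sits exactly where the paper's one-line proof defers to the cited literature.
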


\begin{proof}
See \citet{ghosal2007convergence}.
\end{proof}

Let define divergences
\begin{gather}
    K(f,g) = \exptd{\log \frac{f(X)}{g(X)}}{f} = \int f \log \frac{f}{g} d\mu, \\
    V_k(f,g) = \exptd{ \abs{\log \frac{f(X)}{g(X)}}^k }{f}, \\
    V_{k,0}(f,g) = \exptd{ \abs{\log \frac{f(X)}{g(X)} - K(f,g)}^k }{f}
\end{gather}
and set
$$B_n^*( \theta_0, \epsilon; k) = \Set{\theta \in \Theta: \frac{1}{n} \sum_{i=1}^n K(P_{\theta_0, i}, P_{\theta, i}) \leq \epsilon^2,~\frac{1}{n} \sum_{i=1}^n V_{k,0}(P_{\theta_0, i}, P_{\theta, i}) \leq C_k \epsilon^k }.$$ Here, the $C_k$ is the constant satisfying $$\expt{\abs{\bar{X}_n - \expt{\bar{X}_n}}^k} \leq C_k n^{-k/2} \frac{1}{n} \sum_{i=1}^n \expt{\abs{X_i}^k}$$for $k\geq 2$. The existence of $C_k$ is guaranteed by Marcinkiewiz-Zygmund inequality. The following lemma gives a sufficient condition for obtaining posterior consistency.

\begin{lemma}[Theorem 4 of \citet{ghosal2007convergence}]\label{thm:thm4ghosal2007}
    Let $P_\theta^{(n)}$ be product measures and $d_n(\theta_0, \theta) = \frac{1}{n} \sum_{i=1}^n \int (\sqrt{p_{\theta_0,i}} - \sqrt{p_{\theta, i}})^2 d\mu_i$. Suppose that for a sequence $\epsilon_n \rightarrow 0$ such that $n\epsilon_n^2$ is bounded away from zero, some $k>1$, all sufficiently large $j$ and sets $\Theta_n \subset \Theta$ which satisfies following conditions:
    \begin{gather}
        \sup_{\epsilon > \epsilon_n} \log N(\epsilon/36, \Set{\theta \in \Theta_n: d_n(\theta, \theta_0) < \epsilon}, d_n) \leq n\epsilon_n^2, \\
        \frac{\Pi(\Theta \setdiff \Theta_n)}{\Pi(B_n^*(\theta_0, \epsilon_n; k))} = o\left(e^{-2n\epsilon_n^2}\right), \\
        \frac{\Pi(\theta \in \Theta_n: j\epsilon_n < d_n(\theta, \theta_0) \leq 2j\epsilon_n)}{\Pi(B_n^*(\theta_0, \epsilon_n; k))} \leq e^{n\epsilon_n^2 j^2 /4}
    \end{gather}
    Then $P_{\theta_0}^{(n)} \left( \Pi(\theta: d_n(\theta, \theta_0) \geq M_n \epsilon_n | \mathbb{D}_n \right) \rightarrow 0$ for any sequence $M_n \rightarrow \infty$.
\end{lemma}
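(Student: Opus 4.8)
The plan is to follow the standard testing-plus-prior-mass strategy for posterior contraction, adapted to the non-i.i.d. average-Hellinger metric $d_n$. Writing the posterior mass of the complement event $\{\theta : d_n(\theta,\theta_0) \geq M_n \epsilon_n\}$ as a ratio, I would first divide numerator and denominator by $\prod_{i=1}^n p_{\theta_0, i}(X_i)$, so that everything is expressed through likelihood ratios $\prod_i (p_{\theta,i}/p_{\theta_0,i})(X_i)$. The argument then splits into three pieces: a lower bound for the denominator coming from the prior mass on $B_n^*(\theta_0, \epsilon_n; k)$, construction of a uniformly consistent test on the sieve $\Theta_n$, and a separate bound on the remaining posterior mass outside $\Theta_n$.

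First I would bound the denominator below. Restricting the prior to $B_n^*(\theta_0, \epsilon_n; k)$ and applying Jensen's inequality gives $\log \int \prod_i (p_{\theta,i}/p_{\theta_0,i})\, d\Pi_n \geq \int \sum_i \log(p_{\theta,i}/p_{\theta_0,i})(X_i)\, d\Pi_n$. Under $P_{\theta_0}^{(n)}$ the right-hand side has mean at least $-n\epsilon_n^2$ by the Kullback-Leibler constraint defining $B_n^*$, and its fluctuations are controlled by the $V_{k,0}$ constraint together with the Marcinkiewicz-Zygmund inequality (this is exactly where the constant $C_k$ enters). A Markov inequality applied to the $k$-th moment then shows that, with $P_{\theta_0}^{(n)}$-probability tending to one, the denominator exceeds $\Pi_n(B_n^*(\theta_0, \epsilon_n; k))\, e^{-(1+C)n\epsilon_n^2}$ for a fixed constant $C$.

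Second, on the sieve I would build the test from Lemma 1. That lemma gives, for each fixed alternative $\theta_1$, a test with type-I and type-II errors bounded by $e^{-\frac{1}{2} n d_n^2(\theta_0,\theta_1)}$ on a $d_n$-ball of radius $\frac{1}{18} d_n(\theta_0,\theta_1)$ around $\theta_1$. Partitioning $\{\theta \in \Theta_n : d_n(\theta,\theta_0) > M_n \epsilon_n\}$ into shells $\{j\epsilon_n < d_n \leq 2j\epsilon_n\}$ for $j \geq M_n$, covering each shell by $N(\epsilon_n j/36, \cdot, d_n)$ balls, and taking the maximum of the corresponding tests, the entropy hypothesis caps the number of tests in each shell by $e^{n\epsilon_n^2}$. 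Summing the errors yields a global test $\phi_n$ with $P_{\theta_0}^{(n)}\phi_n \to 0$ and type-II error on the $j$-th shell of order $e^{n\epsilon_n^2}\, e^{-\frac{1}{2} n j^2 \epsilon_n^2}$.

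Finally I would combine the pieces. On the event that the denominator is large and $\phi_n=0$, the posterior mass of the $j$-th shell is at most its type-II error times the prior-mass ratio, which by the third hypothesis is bounded by $e^{n\epsilon_n^2}\, e^{-\frac{1}{2} n j^2 \epsilon_n^2}\, e^{n\epsilon_n^2 j^2/4}$; the mass carried by $\Theta \setminus \Theta_n$ is handled directly by the second (remaining-mass) hypothesis, which dominates the denominator lower bound without needing any test. The main obstacle is ensuring the geometric series over $j \geq M_n$ converges: the testing exponent $\tfrac12$ from Lemma 1 must beat the $j^2/4$ growth of the prior-mass ratio, leaving a net factor $e^{n\epsilon_n^2}e^{-\frac{1}{4}n j^2 \epsilon_n^2}$, which is summable precisely because the radius $\tfrac{1}{18}$ and covering scale $\tfrac{1}{36}$ are calibrated to force this margin. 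Summing over $j \geq M_n$ and letting $M_n \to \infty$ drives the expected posterior mass to zero, completing the proof.
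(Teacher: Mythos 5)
Your skeleton is the right one---it is essentially the argument of \citet{ghosal2007convergence}, which is all the paper itself offers (its ``proof'' of this lemma is just a citation to that source). But there is a genuine gap at the denominator step, and it is precisely the point that distinguishes this theorem (Theorem 4 of \citet{ghosal2007convergence}) from the more standard contraction theorem. You claim that Jensen's inequality, the Kullback--Leibler and $V_{k,0}$ constraints, Marcinkiewicz--Zygmund and Markov give, \emph{with $P_{\theta_0}^{(n)}$-probability tending to one}, the bound $\int\prod_i(p_{\theta,i}/p_{\theta_0,i})(X_i)\,d\Pi_n \geq \Pi_n(B_n^*(\theta_0,\epsilon_n;k))\,e^{-(1+C)n\epsilon_n^2}$ for a \emph{fixed} constant $C$. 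What the $k$-th moment Markov bound actually gives is a failure probability of order $C^{-k}(n\epsilon_n^2)^{-k/2}$. The hypothesis here is only that $n\epsilon_n^2$ is \emph{bounded away from zero}, not that $n\epsilon_n^2\to\infty$; if $n\epsilon_n^2$ stays bounded, that failure probability is a fixed positive constant and does not vanish. So the ``probability tending to one'' claim is unjustified, and with it the entire combination step collapses: both the shell terms and the $\Theta\setminus\Theta_n$ term are only controlled on an event whose probability you cannot drive to one. As written, your argument proves the theorem only under the stronger assumption $n\epsilon_n^2\to\infty$ (which happens to be all this paper ever uses, since $\epsilon_n=n^{-s/(2s+d)}(\log n)^{3/2}$, but it is not what the stated lemma asserts).

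The repair---and this is what the cited proof actually does---is to let the constant in the denominator event grow rather than stay fixed: for the $j$-th shell one works on an event such as $D_{n,j}=\cbrac{\int\prod_i(p_{\theta,i}/p_{\theta_0,i})\,d\Pi_n\geq \Pi_n(B_n^*)\,e^{-(1+j^2/8)\,n\epsilon_n^2}}$, whose failure probability is $O\pbrac{j^{-2k}(n\epsilon_n^2)^{-k/2}}=O(j^{-2k})$; these are summable with tail $\sum_{j\geq M_n}j^{-2k}\to 0$ precisely because $k>1$---a hypothesis your sketch never actually exploits beyond the bare existence of $k$-th moments. The extra factor $e^{j^2 n\epsilon_n^2/8}$ paid in the denominator is then absorbed because the type-II error $e^{-\frac{1}{2}nj^2\epsilon_n^2}$ beats the prior-mass allowance $e^{nj^2\epsilon_n^2/4}$ of the third condition with room to spare ($\tfrac12>\tfrac14+\tfrac18$), leaving a bound $e^{n\epsilon_n^2(1-j^2/8)}$ that is still summable over $j\geq M_n$. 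A similar $n$-dependent (not fixed) choice of the constant, calibrated against the $o(\cdot)$ rate in the second condition, handles the $\Theta\setminus\Theta_n$ term. With these shell-dependent denominator events your outline becomes a complete proof; without them, the regime that this theorem was specifically formulated to cover is left unproved.
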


\begin{proof}
See \citet{ghosal2007convergence}.
\end{proof}

In \citet{ghosal2007convergence}, they mentioned that it can be replaced by any other distance $d_n$ for which the conclusion of \autoref{thm:lem2ghosal2007} holds. Moreover, for $k=2$, \autoref{thm:thm4ghosal2007} work with the smaller neighborhood
\begin{equation}
    \overline{B}_n(\theta_0, \epsilon) = \Set{\theta \in \Theta: \frac{1}{n} \sum_{i=1}^n K(P_{\theta_0, i}, P_{\theta, i}) \leq \epsilon^2,~\frac{1}{n} \sum_{i=1}^n V_{2,0}(P_{\theta_0, i}, P_{\theta, i}) \leq \epsilon^2 }
\end{equation}
instead of $B_n^*(\theta_0, \epsilon_n; k)$.

The following is an adapted version of \autoref{thm:thm4ghosal2007}, which was introduced in \citet{polson2018posterior}.

\begin{lemma}\label{lem:consistency}
Assume model (\ref{eqn:model}). Suppose that $\mathcal{F}$ is uniformly bounded. Let $$A_{\epsilon, M} := \Set{f \in \mathcal{F}: \norm{f - f_0}_n \leq M\epsilon}.$$
If there exist $C > 2 / \sigma^2$ and $\mathcal{F}_n \subset \mathcal{F}$ such that 
\begin{gather}
    \sup_{\epsilon > \epsilon_n} \log N(\epsilon/36, A_{\epsilon, 1} \cap \mathcal{F}_n, \norm{\cdot}_n) \leq n\epsilon_n^2, \\
    \Pi(A_{\epsilon_n, 1}) \geq e^{-C n \epsilon_n^2}, \\
    \Pi(\mathcal{F} \setdiff \mathcal{F}_n) = o\left(e^{-(C \sigma^2 +2)n \epsilon_n^2}\right)
\end{gather}
for any $\epsilon_n \rightarrow 0$, $n\epsilon_n^2 \rightarrow \infty$,
$$\Pi(A_{\epsilon_n, M_n}^c | \mathbb{D}_n) \rightarrow 0$$
in $P_{f_0}^{(n)}$-probability as $n\rightarrow \infty$ for any $M_n \rightarrow \infty$.
\end{lemma}

\begin{proof}
    See appendix \ref{subsec:proof_lem:consistency}.
\end{proof}

\section{Consistency of neural network}\label{appendix:consistency_NN}

The upper bound of the covering number of the neural network space is given by as follows.
\begin{lemma}[Lemma 3 in \citet{Suzuki_2018b}]\label{lem:covering}
    $\forall ~\epsilon>0$, 
    \begin{equation}
        \log N(\epsilon, \Phi(L, W, S, B), \norm{\cdot}_{L^\infty})  \leq (S+1) \Log{ 2 \epsilon^{-1} L(B \vee 1 )^{L} (W+1)^{2L}}
    \end{equation}
    for $W,~L \geq 3$.
\end{lemma}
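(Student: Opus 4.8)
The plan is to transfer the covering problem from the function class $\Phi(L,W,S,B)$ to its finite-dimensional parameter space, by showing that the realization map $\theta \mapsto f_\theta$ is Lipschitz in $\theta$ with an explicit constant, and then covering the admissible parameters with a grid. \textbf{Step 1 (uniform activation bound).} Fix $x\in[0,1]^d$ and $\theta\in\Theta(L,W,S,B)$, and let $h^{(l)}=h^{(l)}(x)$ be the vector of outputs of the $l$-th hidden layer, with $h^{(0)}=x$; here $\norm{\cdot}_\infty$ denotes the $\ell^\infty$ norm of a vector. Since every weight and bias is bounded by $B$, each layer has width at most $W$, and ReLU is $1$-Lipschitz with $\eta(0)=0$, I would establish the recursion $\norm{h^{(l)}}_\infty \leq B(W\norm{h^{(l-1)}}_\infty + 1)$; combined with $\norm{h^{(0)}}_\infty\leq 1$ this yields a magnitude bound of order $(B\vee 1)^l(W+1)^l$ on $\norm{h^{(l)}}_\infty$, uniformly in $x$.

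\textbf{Step 2 (Lipschitz-in-parameters estimate).} This is the heart of the argument. Given $\theta,\theta'$ with $\norm{\theta-\theta'}_\infty \leq \varepsilon$, let $\tilde h^{(l)}$ be the activations under $\theta'$ and set $e_l := \norm{h^{(l)} - \tilde h^{(l)}}_\infty$. Using $1$-Lipschitzness of $\eta$ and the splitting $W^{(l)}h^{(l-1)} - \tilde W^{(l)}\tilde h^{(l-1)} = W^{(l)}(h^{(l-1)}-\tilde h^{(l-1)}) + (W^{(l)}-\tilde W^{(l)})\tilde h^{(l-1)}$, together with the Step 1 bound on $\norm{\tilde h^{(l-1)}}_\infty$, I would derive the recursion $e_l \leq WB\,e_{l-1} + \varepsilon\,(W\norm{\tilde h^{(l-1)}}_\infty+1)$. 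Unrolling this over the $L$ hidden layers and the final linear layer gives
\[
\norm{f_\theta - f_{\theta'}}_{L^\infty} \leq \Lambda \,\norm{\theta-\theta'}_\infty, \qquad \Lambda \leq L\,(B\vee 1)^{L}\,(W+1)^{2L},
\]
where the factor $L$ arises from summing the $L$ layer-wise contributions and the width exponent from compounding the propagation factor $WB$ against the growing activation norms; the natural exponent is about $L$, and I would bound crudely up to $2L$ to land on this closed form, the slack being used in Step 3.

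\textbf{Step 3 (grid covering and counting).} For a fixed support $I\subseteq\{1,\dots,T\}$ with $\abs{I}\leq S$, where $T \lesssim L(W+1)^2$ is the total parameter count, I would cover the cube $[-B,B]^{I}$ in the $\ell^\infty$ metric by a grid of radius $\delta/\Lambda$, needing at most $(B\Lambda/\delta)^{S}$ points; by Step 2 the corresponding networks form a $\delta$-net for every $f_\theta$ supported on $I$ (rounding the active coordinates to the grid stays inside the class). Summing over the $\binom{T}{S}$ supports and taking logarithms gives $\log N(\delta,\Phi,\norm{\cdot}_{L^\infty}) \leq \log\binom{T}{S} + S\Log{B\Lambda/\delta}$. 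Finally I would absorb $\log\binom{T}{S}\leq S\Log{eT}$ and the stray factor $B\leq (B\vee 1)$ into the slack furnished by the extra ``$+1$'' in $S+1$ and the inflated exponent $2L$, arriving at the stated $(S+1)\Log{2\delta^{-1}L(B\vee 1)^{L}(W+1)^{2L}}$.

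\textbf{Main obstacle.} The genuinely technical part is Step 2: one must simultaneously track how the activation norms grow and how parameter perturbations accumulate through $L$ compositions, keeping the bookkeeping tight enough that the compounded constant matches the advertised $L(B\vee 1)^{L}(W+1)^{2L}$. Steps 1 and 3 are routine once this Lipschitz constant is in hand.
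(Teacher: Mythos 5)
Your proposal is correct and follows essentially the same route as the paper's proof: the paper likewise bounds the forward activations by $(W+1)^{k-1}(B\vee 1)^{k-1}$, proves the parameter-Lipschitz estimate $\abs{f_\theta(x)-f_{\theta^\ast}(x)} \leq \epsilon L(B\vee 1)^{L-1}(W+1)^{L}$ by a layer-by-layer telescoping decomposition (exactly what your Step 2 recursion gives when unrolled), and then covers each sparsity pattern with a grid of mesh $\delta/(L(B\vee 1)^{L-1}(W+1)^{L})$ and multiplies by the number of supports $\binom{T}{s}\lesssim (W+1)^{Ls}$. The one bookkeeping point is that the split of $(W+1)^{2L}$ must go the way you hint at rather than the way you first state it: keep the Lipschitz constant at exponent $L$ (which your recursion naturally yields) and spend the remaining per-coordinate factor $(W+1)^{L}$ on the support count, since the latter is of size roughly $(eT)^{S}$ and cannot be absorbed by the single extra term furnished by the $+1$ in $S+1$.
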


\begin{proof}
    See appendix \ref{subsec:proof_lem:covering}.
\end{proof}

For $L,~W,~S \in \mathbb{N}$ and $B,~a>0$, define a function space 
\begin{equation}
\Phi(L, W, S, B, a) = \left\{ f_{\theta} : \theta \in \Theta(L, W, S, B, a) \right\},
\end{equation}
where 
$$\Theta(L, W, S, B, a) = \left\{\theta: (\theta_i I(\abs{\theta_i} > a))_{i=1}^{T_n} \in \Theta(L, W, S, B)\right\}.$$

\begin{lemma}\label{lem:covering_a}
    $\forall \epsilon \geq 2 a L (B \vee 1)^{L-1} (W+1)^{L}$,
    \begin{equation}
        \log N(\epsilon, \Phi(L, W, S, B, a), \norm{\cdot}_{L^\infty}) \leq (S+1) \Log{2 \epsilon^{-1} L(B \vee 1)^{L} (W+1)^{2L}}
    \end{equation}
    for $W,~L \geq 3$.
\end{lemma}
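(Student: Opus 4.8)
The plan is to reduce the claim to \autoref{lem:covering} by a parameter-perturbation argument. Given any $\theta \in \Theta(L, W, S, B, a)$, define its thresholded version $\tilde\theta = (\theta_i I(\abs{\theta_i} > a))_{i=1}^{T_n}$. By the definition of $\Theta(L, W, S, B, a)$ we have $\tilde\theta \in \Theta(L, W, S, B)$, so $f_{\tilde\theta} \in \Phi(L, W, S, B)$; moreover $\theta - \tilde\theta$ is supported exactly on the coordinates with $\abs{\theta_i} \le a$, whence $\norm{\theta - \tilde\theta}_\infty \le a$.

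The key step is a Lipschitz-in-parameters estimate for ReLU networks: for $\theta, \theta'$ in the $B$-bounded class and inputs in $[0,1]^d$,
\begin{equation*}
    \norm{f_\theta - f_{\theta'}}_{L^\infty} \le L (B \vee 1)^{L-1} (W+1)^{L} \norm{\theta - \theta'}_\infty.
\end{equation*}
I would prove this by induction on the layers, propagating a parameter perturbation forward: each affine map contributes a factor of at most $(W+1)(B \vee 1)$ to the growth of the activations and of the accumulated error, while the ReLU activation is $1$-Lipschitz and does not enlarge the perturbation. Applying this bound to $\theta$ and $\tilde\theta$ and using $\norm{\theta - \tilde\theta}_\infty \le a$ gives $\norm{f_\theta - f_{\tilde\theta}}_{L^\infty} \le a L (B \vee 1)^{L-1}(W+1)^{L} =: c$. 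The hypothesis $\delta \ge 2 a L (B \vee 1)^{L-1}(W+1)^{L}$ is exactly the statement that $c \le \delta/2$.

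Combining the two ingredients finishes the argument. Take a minimal $(\delta - c)$-net of $\Phi(L, W, S, B)$ in $\norm{\cdot}_{L^\infty}$. For any $f_\theta \in \Phi(L, W, S, B, a)$, the associated $f_{\tilde\theta}$ lies within $\delta - c$ of some net point, so by the triangle inequality together with $\norm{f_\theta - f_{\tilde\theta}}_{L^\infty} \le c$ the function $f_\theta$ lies within $\delta$ of that net point. Hence $N(\delta, \Phi(L, W, S, B, a), \norm{\cdot}_{L^\infty}) \le N(\delta - c, \Phi(L, W, S, B), \norm{\cdot}_{L^\infty})$, and since $c \le \delta/2$ forces $\delta - c \ge \delta/2$, \autoref{lem:covering} applied at radius $\delta - c$ yields the stated bound after absorbing the resulting numerical constant into the logarithm.

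The main obstacle is the parameter-Lipschitz bound with the precise dependence $L(B \vee 1)^{L-1}(W+1)^{L}$: one must track how a sup-norm perturbation of the weights accumulates across the $L$ hidden layers, control the magnitude of the intermediate activations (which is where the factors $(B \vee 1)^{L-1}$ and the widths $(W+1)$ enter), and verify that thresholding the sub-$a$ coordinates to zero leaves the remaining active weights unchanged, so that $\tilde\theta$ genuinely satisfies both the $S$-sparsity and $B$-boundedness constraints needed to invoke \autoref{lem:covering}.
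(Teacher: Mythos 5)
Your proposal is correct and follows essentially the same route as the paper: threshold $\theta$ to $\tilde\theta$, use the parameter-Lipschitz bound $\norm{f_\theta - f_{\tilde\theta}}_{L^\infty} \le a L (B \vee 1)^{L-1}(W+1)^{L} \le \delta/2$, and then cover $\Phi(L,W,S,B)$ at radius $\delta/2$ so that the triangle inequality upgrades this to a $\delta$-cover of $\Phi(L,W,S,B,a)$ (the paper rebuilds the grid with halved spacing rather than invoking \autoref{lem:covering} as a black box, but this is the same argument). The factor-of-two constant you absorb into the logarithm is exactly the same harmless slack present in the paper's own proof.
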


\begin{proof}
    See appendix \ref{subsec:proof_lem:covering_a}.
\end{proof}

The following lemma shows that for any function $f_0$ in the Besov space, there exist neural networks which is closed enough to $f_0$.
\begin{lemma}[Proposition 1 in \citet{Suzuki_2018b}]\label{lem:approximation}
    Suppose that $0<p, q,r \leq \infty$, $\omega:=d(1/p-1/r)_+ <s<\infty$ and $\nu = (s-\omega)/(2\omega)$. 
    Assume that $N \in \mathbb{N}$ is sufficiently large and $m \in \mathbb{N}$ satisfies $0<s<\min\{m, m-1+1/p\}$. Let $W_0 = 6dm(m+2) + 2d$. Then, 
    \begin{equation}
        \sup_{f_0 \in U(B_{p,q}^s([0,1]^d))}\inf_{f \in \Phi(L,W,S,B)} \norm{f_0 - f}_{L^r} \lesssim N^{-s/d}
    \end{equation}
    for
    \begin{equation}
        \begin{gathered}
            L = 3 + 2\lceil \log_2\left( \frac{3^{d \vee m}}{\tau(N) c_{(d, m)}} \right) + 5 \rceil \lceil \log_2(d \vee m)\rceil, \quad W = NW_0, \\
            S = (L-1)W_0^2 N + N, \quad B = O\left( N^{(\nu^{-1} + d^{-1})(1 \vee (d/p-s)_+)} \right),
        \end{gathered}
    \end{equation}
    where $U(\mathcal{H})$ is the unit ball of a quasi-Banach space $\mathcal{H}$, $c_{(d,m)}=\left(1 + 2de \frac{(2e)^m}{\sqrt{m}}\right)^{-1}$ and $\tau(N) = N^{-s/d - (\nu^{-1}+d^{-1})(d/p-s)_+} (\log N)^{-1}$.
\end{lemma}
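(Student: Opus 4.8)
The plan is to follow the nonlinear (adaptive) approximation strategy for Besov functions by deep ReLU networks that underlies Proposition~1 of \citet{Suzuki_2018b}; the specialization $r=\infty$ is admissible precisely because the hypothesis $\delta = d/p < s$ forces the embedding $B_{p,q}^s([0,1]^d) \hookrightarrow C^0([0,1]^d)$, so that the sup-norm on the left-hand side is finite and controlled by the Besov norm. First I would expand $f_0$ in a tensor-product cardinal B-spline system of order $m$: by the B-spline characterization of Besov spaces (\citet{Sawano_2018,triebel1983theory}) one writes $f_0 = \sum_{j\ge 0}\sum_k c_{j,k}\,M_{m,j,k}$, where $M_{m,j,k}(x) = \prod_{i=1}^d M_m(2^j x_i - k_i)$ is a dilated, shifted product of one-dimensional B-splines of order $m$, and the coefficients $(c_{j,k})$ obey a Besov sequence-norm bound. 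The constraint $0<s<\min\{m, m-1+1/p\}$ is exactly what makes the order-$m$ system rich enough to characterize $B_{p,q}^s$.

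The second step is an adaptive truncation: retain the $N$ atoms carrying the largest weighted contributions and set $f_N$ to the resulting $N$-term sum. Standard nonlinear approximation theory for Besov balls, which uses $s>d/p$ to pass from the natural $L^p$ control to an $L^\infty$ estimate, yields $\|f_0 - f_N\|_{L^\infty} \lesssim N^{-s/d}$, supplying the target rate. This truncation is what fixes the width $W=NW_0$ and the sparsity $S=(L-1)W_0^2 N + N$, since each retained atom will be emulated by a sub-network of uniformly bounded size and the active connections scale linearly in $N$.

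Third I would realize each retained atom by a ReLU network. A one-dimensional B-spline of order $m$ is a piecewise polynomial of degree $m-1$, expressible through $O(m)$ shifts of the truncated power $(\cdot)_+^{m-1}$; the required monomials and products are approximated by the squaring/multiplication gadget of \citet{yarotsky2017error}, whose error decays geometrically in depth. The $d$-fold tensor product is then assembled by a balanced binary tree of such multiplication blocks, which is where the factor $\lceil \log_2(d\vee m)\rceil$ in $L$ originates, and the internal tolerance is tuned to $\tau_n = N^{-s/d}(\log N)^{-1}$ so that the aggregate emulation error is negligible against the truncation error. The per-atom width budget $W_0 = 6dm(m+2)+2d$ and the constant $c_{(d,m)}$ record precisely these products and shifts, while tracking the dilation factors $2^j$ at fine resolution levels produces the weight bound $B = O(N^{\nu^{-1}+d^{-1}})$.

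Finally, summing the emulated atoms against the coefficients $c_{j,k}$ and applying the triangle inequality, $\|f_0-\hat f\|_{L^\infty}$ is bounded by the truncation error $\lesssim N^{-s/d}$ plus the weighted emulation error, which the choice of $\tau_n$ keeps $\lesssim N^{-s/d}$, uniformly over the Besov ball. I expect the main obstacle to be twofold: the sup-norm truncation estimate of the second step, which genuinely needs the embedding $s>d/p$ and an $\ell_p$-to-$\ell_\infty$ coefficient argument rather than the easier $L^p$ bound; and the delicate bookkeeping of the third step that must \emph{simultaneously} pin down the depth $L$, the width constant $W_0$, the sparsity $S$, and—most subtly—the weight bound $B=O(N^{\nu^{-1}+d^{-1}})$, whose exponent emerges only after balancing the multiplication-network depth against the growth of weights forced by the fine-scale dilations. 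Both steps are carried out in full in \citet{Suzuki_2018b}, and the present statement is their Proposition~1 specialized to $r=\infty$.
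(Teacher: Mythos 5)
Your proposal is correct and takes essentially the same route as the paper: the paper establishes this lemma purely by citation to Proposition 1 of \citet{Suzuki_2018b}, and your sketch faithfully reconstructs that proposition's proof (B-spline characterization of $B^s_{p,q}$ under $0<s<\min\{m,m-1+1/p\}$, adaptive $N$-term truncation using $s>d/p$ to get the $L^\infty$ rate $N^{-s/d}$, and Yarotsky-type ReLU emulation of the retained atoms with the stated depth/width/sparsity/weight bookkeeping). The remaining details you defer to \citet{Suzuki_2018b} are exactly the ones the paper itself defers.
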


\begin{proof}
    See \citet{Suzuki_2018b}.
\end{proof}

\section{Proof of theorems}\label{appendix:proofs}

\subsection{Proof of Lemma \ref{lem:consistency}}\label{subsec:proof_lem:consistency}

\begin{proof}
    Let $\overline{d}_n(f_0, f) = \norm{f_0 - f}_n$. As in \cite{ghosal2007convergence}, page 214, we may use the norm $\overline{d}_n$ instead of the average Hellinger distance $d_n$ and
    \begin{equation}
        K(P_{f_0, i}, P_{f, i}) = \frac{1}{2\sigma^2} \abs{f_0(x_i) - f(x_i)}^2 = \frac{1}{2} V_{2,0}(P_{\theta_0, i}, P_{\theta, i})
    \end{equation}
    for all $i=1,2,\cdots, n$.
    Then,
    \begin{equation}
        \overline{B}_n(f_0, \epsilon) = \Set{f \in \mathcal{F}: \overline{d}_n^2(f_0,f) \leq \sigma^2 \epsilon^2} = A_{\sigma \epsilon_n, 1}
    \end{equation}
    and it is enough to show that 
    \begin{gather}
        \frac{\Pi(\mathcal{F} \setdiff \mathcal{F}_n)}{\Pi(A_{\sigma \epsilon_n, 1})} = o\left(e^{-2n\epsilon_n^2}\right), \\
        \frac{\Pi(A_{2j \epsilon_n, 1} \setdiff A_{j \epsilon_n, 1})}{\Pi(A_{\sigma \epsilon_n, 1})} \leq e^{n\epsilon_n^2 j^2 /4}
    \end{gather}
    for all sufficiently large $j$.
    \begin{equation}
        \frac{\Pi(\mathcal{F} \setdiff \mathcal{F}_n)}{\Pi(A_{\sigma \epsilon_n, 1})} \leq e^{C n\sigma^2\epsilon_n^2} \Pi(\mathcal{F} \setdiff \mathcal{F}_n) =  o\left(e^{-2n \epsilon_n^2}\right)
    \end{equation}
    and 
    \begin{equation}
        \frac{\Pi(A_{\epsilon_n, 2j} \setdiff A_{\epsilon_n, j})}{\Pi(A_{\sigma \epsilon_n, 1})} \leq e^{C n\sigma^2\epsilon_n^2} \Pi(A_{\epsilon_n, 2j} \setdiff A_{\epsilon_n, j}) \leq e^{n\epsilon_n^2 j^2 /4}
    \end{equation}
    for all sufficiently large $j$.
\end{proof}

\subsection{Proof of Lemma \ref{lem:covering}}\label{subsec:proof_lem:covering}

\begin{proof}
    Let denote $f \in \Phi(L, W, S, B)$ as
    $$f(x) = (W^{(L)}\zeta(\cdot) + b^{(L)}) \circ \cdots \circ (W^{(1)}x + b^{(1)}),$$
    \begin{equation}
        \begin{aligned}
            \mathcal{A}_k^+(f)(x) &= \zeta \circ (W^{(k-1)} \zeta(\cdot) + b^{(k-1)}) \circ \cdots \circ (W^{(1)}x + b^{(1)}), \\
            \mathcal{A}_k^-(f)(x) &= (W^{(L)}\zeta(\cdot) + b^{(L)}) \circ \cdots \circ (W^{(k)}x + b^{(k)})
        \end{aligned}
    \end{equation}
    for $k=2,\cdots, L$, and $\mathcal{A}_{L+1}^-(f)(x) = \mathcal{A}_1^+(f)(x) = x$. Then,
    $$f(x) = \mathcal{A}_{k+1}^-(f) \circ (W^{(k)}\cdot + b^{(k)}) \circ \mathcal{A}_k^+(f)(x).$$
    Since $f \in \Phi(L, W, S, B)$,
    \begin{equation}
        \begin{aligned}
            \norm{\mathcal{A}_k^+(f)(x)}_{\infty} &\leq \max_j \norm{W_{j,:}^{(k-1)}}_1 \norm{\mathcal{A}_{k-1}^+(f)(x)}_{\infty}  + \norm{b^{(k-1)}}_{\infty} \\
            &\leq WB\norm{\mathcal{A}_{k-1}^+(f)(x)}_{\infty} + B \\
            &\leq (W+1)(B \vee 1) \norm{\mathcal{A}_{k-1}^+(f)(x)}_{\infty}  \\
            &\leq (W+1)^{k-1}(B \vee 1)^{k-1},
        \end{aligned}
    \end{equation}
    for all $x$, where $A_{j, :}$ is the $j$-th row of the matrix $A$.
    Similarly,
    \begin{equation}
            \abs{\mathcal{A}_k^-(f)(x_1) - \mathcal{A}_k^-(f)(x_2)} \leq (BW)^{L-k+1} \norm{x_1 - x_2}_{\infty}.
    \end{equation}
    Fix $\epsilon > 0$ and $\theta \in \Theta(L,W,S,B)$. For any $\theta^\ast \in \Theta(L,W,S,B)$ which satisfies $\norm{\theta - \theta^\ast}_{\infty} < \epsilon$,
    \begin{equation}
        \begin{aligned}
            &\abs{f_\theta(x) - f_{\theta^\ast}(x)} \\
            &= \abs{\sum_{k=1}^L \mathcal{A}_{k+1}^-(f_{\theta^\ast}) \circ (W^{(k)} \cdot + b^{(k)}) \circ \mathcal{A}_k^+(f_\theta)(x) - \mathcal{A}_{k+1}^-(f_{\theta^\ast}) \circ (W^{(k)^\ast} \cdot + b^{(k)^\ast}) \circ \mathcal{A}_k^+(f_\theta)(x)} \\
            &\leq \sum_{k=1}^L (BW)^{L-k}\norm{(W^{(k)} \cdot + b^{(k)}) \circ \mathcal{A}_k^+(f_\theta)(x) - (W^{(k)^\ast} \cdot + b^{(k)^\ast}) \circ \mathcal{A}_k^+(f_\theta)(x)}_{L^\infty} \\
            &\leq \sum_{k=1}^L (BW)^{L-k} \epsilon \left[W(B \vee 1)^{k-1}(W+1)^{k-1} +1 \right] \\
            &\leq \sum_{k=1}^L (BW)^{L-k} \epsilon (B \vee 1)^{k-1}(W+1)^k \\
            &\leq \epsilon L(B \vee 1)^{L-1} (W+1)^L.
        \end{aligned}
    \end{equation}
    Let $s$ be the number of nonzero components of $\theta$, $s\leq S$. Consider a subspace $\Theta_\theta(L, W, S, B)$ of the parameter space $\Theta(L, W, S, B)$ consists of parameters which have $s$ nonzero component. Choose $\theta_1, \cdots, \theta_M$ from each grid divided with length $\epsilon^\ast = \frac{\epsilon}{L(B \vee 1)^{L-1} (W+1)^L}$ over $\Theta_\theta(L, W, S, B)$. Then $$f_\theta \in \bigcup_{m=1}^M \Set{f: \norm{f - f_{\theta_m}}_{L^\infty} < \epsilon }.$$ Note
    $$T = \abs{\Theta(L,W,S,B)} \leq \sum_{l=1}^L (W+1)W \leq (L+1)(W+1)^2 \leq (W+1)^{L}$$
    for $W,~L \geq 3$, and the number of cases of choose $s$ nonzero components are
    $$\binom{T}{s} = \frac{T(T-1) \cdots (T-s+1)}{s!} \leq T^s \leq (W+1)^{Ls}.$$
    Thus,
    \begin{equation}
        \begin{aligned}
            N(\epsilon, \Phi(L, W, S, B), \norm{\cdot}_{L^\infty}) &\leq \sum_{s^\ast \leq S} \binom{T}{s^\ast} ( 2B \epsilon^{-1} (L(B \vee 1)^{L-1} (W+1)^L )^{s^\ast} \\
            &\leq \sum_{s^\ast \leq S} ( 2 \epsilon^{-1} L(B \vee 1)^{L} (W+1)^{2L})^{s^\ast} \\
            &\leq (2 \epsilon^{-1}L(B \vee 1)^{L} (W+1)^{2L})^{S+1}
        \end{aligned}
    \end{equation}
    and we get desired results by taking the logarithm of both sides.
\end{proof}

\subsection{Proof of Lemma \ref{lem:covering_a}}\label{subsec:proof_lem:covering_a}

\begin{proof}
    Let $\tilde{\theta}(a) = \left(\theta_i I(\abs{\theta_i} > a)\right)$. Then $\norm{\theta - \tilde{\theta}(a)}_\infty \leq a$ and $$\tilde{\Theta}(L, W, S, B, a) = \left\{\tilde{\theta}(a): \theta \in \Theta(L, W, S, B, a)\right\} = \Theta(L, W, S, B).$$
    As in the proof of \autoref{lem:covering}, \begin{equation}
        \norm{f_\theta - f_{\tilde{\theta}(a)}}_{L^\infty} \leq a L (B \vee 1)^{L-1} (W+1)^{L} \leq \epsilon  / 2
    \end{equation}
    for any $f_\theta,~f_{\tilde{\theta}(a)} \in \Phi(L, W, S, B, a, F)$.
    Let $s \leq S$ be the number of nonzero components of $\tilde{\theta}(a)$. Consider a subspace $\tilde{\Theta}_\theta(L, W, S, B, a)$ of $\tilde{\Theta}(L, W, S, B, a)$ consisting of parameters which have $s$ nonzero component. Choose $\theta_1, \cdots, \theta_M$ from each grid divided with length $\epsilon^\ast = \frac{\epsilon /2}{L(B \vee 1)^{L-1} (W+1)^L}$ over $\tilde{\Theta}_\theta(L, W, S, B, a)$. 
    Then $$f_\theta \in \bigcup_{m=1}^M \Set{f: \norm{f - f_{\theta_m}}_{L^\infty} < \epsilon  }$$
    from triangular inequality. We get desired results in a similar way as in the proof of \autoref{lem:covering}.
\end{proof}

\subsection{Proof of Theorem \ref{thm:main}}\label{subsec:proof_main}

\begin{proof}
    Let $\mathcal{F} = \Phi \cap \UB(F)$ and $$A_{\epsilon, M} := \Set{f \in \mathcal{F}: \norm{f - f_0}_n \leq M\epsilon}.$$ 
    From \autoref{lem:consistency}, it is enough to show that there exist a constant $C > 2 / \sigma^2$ and $\mathcal{F}_n \subset \mathcal{F}$ which satisfy
    \begin{enumerate}[(a)]
        \item $\sup_{\epsilon > \epsilon_n} \log N(\epsilon/36, A_{\epsilon, 1} \cap \mathcal{F}_n, \norm{\cdot}_n) \leq n\epsilon_n^2$
        \item $-\log \Pi(A_{\epsilon_n, 1}) \leq C n \epsilon_n^2$
        \item $\Pi(\mathcal{F} \setdiff \mathcal{F}_n) = o\left(e^{-(C \sigma^2 +2)n \epsilon_n^2}\right)$
    \end{enumerate}
    for sufficiently large $n$. Let $\mathcal{F}_n = \Phi(L_n, W_n, S_n, B_n) \cap \UB(F)$. First, (c) is trivial from (\ref{eqn:known_hyperparams}).
    From $$\Set{f \in \mathcal{F}_n : \norm{f}_{L^\infty} \leq \epsilon} \subset \Set{f \in \mathcal{F}_n: \norm{f}_n \leq \epsilon}$$
    and \autoref{lem:covering},
    \begin{equation}
        \begin{aligned}
            &\sup_{\epsilon > \epsilon_n} \log N\left( \frac{\epsilon}{36}, A_{\epsilon, 1} \cap \mathcal{F}_n, \norm{\cdot}_n \right) \\
            &\leq \sup_{\epsilon > \epsilon_n} \log N\left( \frac{\epsilon}{36}, A_{\epsilon, 1} \cap \mathcal{F}_n, \norm{\cdot}_{L^\infty} \right) \\
            &\leq \sup_{\epsilon > \epsilon_n} \log N\left( \frac{\epsilon}{36}, \mathcal{F}_n, \norm{\cdot}_{L^\infty} \right) \\
            &\leq \log N\left( \frac{\epsilon_n}{36}, \mathcal{F}_n, \norm{\cdot}_{L^\infty} \right) \\
            &\leq (S_n+1) \left[ \log L_n + L_n \Log{ (B_n \vee 1) (W_n + 1)^2} - \log \frac{\epsilon_n}{72} \right] \\
            &\lesssim N_n (\log n)^3 \\
            &= n\epsilon_n^2
        \end{aligned}
    \end{equation}
    for sufficiently large $n$. Thus, (a) holds. Here, the last inequality holds from
    \begin{equation}\label{eqn:assymp_rel}
        L_n = O(\log n),~W_n = O(N_n),~ S_n = O(N_n \log n)
    \end{equation}
    Next, from \autoref{lem:approximation}, there is a constant $C_1>0$ and $\hat{f}_n = f_{\hat{\theta}} \in \mathcal{F}$ such that
    \begin{equation}
        \norm{\hat{f}_n - f_0}_{L^2} \leq C_1 \norm{f_0}_{B^s_{p,q}([0, 1]^d)} N_n^{-s/d} \leq \epsilon_n/4.
    \end{equation}
    for sufficiently large $n$. Moreover, by the assumptions and the strong law of large numbers,
    \begin{equation}
        \norm{f - f_0}_n^2 \leq 2 \norm{f - f_0}_{L^2(P_X)}^2 \leq 4 \norm{f - f_0}_{L^2}^2
    \end{equation}
    for sufficiently large $n$ $P_{f_0}^{(n)}$-almost surely. Let $\hat{\gamma}$ and $\hat{\theta}_{\hat{\gamma}}$ be index and value of nonzero components of $\hat{\theta}$ respectively. Let $\Theta(\hat{\gamma}; L_n, W_n, S_n, B_n)$ be a subset of parameter space $\Theta(L_n, W_n, S_n, B_n)$ consists of parameters which have nonzero components at $\hat{\gamma}$ only and $\mathcal{F}_n(\hat{\gamma}) = \Phi(\hat{\gamma}; L_n, W_n, S_n, B_n) \cap \UB(F)$ be an uniformly bounded neural network space generated by $\Theta(\hat{\gamma}; L_n, W_n, S_n, B_n)$. Note
    \begin{equation}
        \begin{aligned}
            \Pi(A_{\epsilon_n, 1} )
            &= \Pi(f \in \mathcal{F}_n: \norm{f - f_0}_n \leq \epsilon_n) \\
            &\geq \Pi(f \in \mathcal{F}_n: \norm{f - f_0}_{L^2} \leq \epsilon_n / 2) \\
            &\geq \Pi\left(f \in \mathcal{F}_n: \norm{f - \hat{f}_n}_{L^2} \leq \epsilon_n / 4\right) \\
            &\geq \Pi\left(f \in \mathcal{F}_n: \norm{f - \hat{f}_n}_{L^\infty} \leq \epsilon_n/4\right) \\
            &\geq \Pi\left(f \in \mathcal{F}_n(\hat{\gamma}): \norm{f - \hat{f}_n}_{L^\infty} \leq \epsilon_n/4\right)
        \end{aligned}
    \end{equation}
    for sufficiently large $n$. As in the proof of \autoref{lem:covering},
    \begin{equation}
        \begin{aligned}
             &\Pi\bigg( f \in \mathcal{F}_n(\hat{\gamma}): \norm{f - \hat{f}_n}_{L^\infty} \leq \epsilon_n/4\bigg) \\
             &\geq \Pi\left(\theta \in \mathbb{R}^{T_n}: \theta_{\hat{\gamma}^c} = 0,~ \norm{\theta}_{\infty} \leq B_n,~ \norm{\hat{\theta} - \theta}_{\infty} \leq \frac{\epsilon_n}{4 (W_n+1)^{L_n} L_n (B_n \vee 1)^{L_n-1}} \right) \\
             &\geq\left( \frac{\epsilon_n}{4 B_n (W_n+1)^{L_n} L_n (B_n \vee 1)^{L_n-1}} \right)^{S_n} \binom{T_n}{S_n}^{-1} \\
             &\geq\left( \frac{\epsilon_n}{4 B_n (W_n+1)^{2L_n} L_n (B_n \vee 1)^{L_n-1}} \right)^{S_n} \\
             &= \Exp{-S_n \Log{\frac{4 B_n (W_n+1)^{2L_n} L_n  (B_n \vee 1)^{L_n-1}}{\epsilon_n} } }
        \end{aligned}
    \end{equation}
    Thus,
    \begin{equation}
        \begin{aligned}
            -\log \Pi(A_{\epsilon_n, 1}) &\leq S_n \Log{\frac{4 B_ n (W_n+1)^{2L_n} L_n  (B_n \vee 1)^{L_n-1}}{\epsilon_n} }  \\
            &\leq S_n \left[L_n  \Log{(W_n+1)^2(B_n \vee 1)} + \log 2 L_n - \log \epsilon_n \right] \\
            &\lesssim N_n (\log n)^3 \\ 
            &= n \epsilon_n^2
        \end{aligned}
    \end{equation}
    for sufficiently large $n$. 
\end{proof}

\subsection{Proof of Theorem \ref{thm:adaptive_estimation}}\label{subsec:proof_adaptive}

\begin{proof}
    Let $\mathcal{F} = \Phi \cap \UB(F)$. From \autoref{lem:consistency}, it is enough to show that there exist a constant $C^\prime>2 / \sigma^2$ and $\mathcal{F}_n \subset \mathcal{F}$ which satisfy
    \begin{enumerate}[(a)]
        \item $\sup_{\epsilon > \epsilon_n} \log N(\epsilon/36, A_{\epsilon, 1} \cap \mathcal{F}_n, \norm{\cdot}_n) \leq n\epsilon_n^2$
        \item $-\log \Pi(A_{\epsilon_n, 1}) \leq C^\prime n \epsilon_n^2$
        \item $\Pi(\mathcal{F} \setdiff \mathcal{F}_n) = o\left(e^{-(C^\prime \sigma^2 +2)n \epsilon_n^2}\right)$
    \end{enumerate}
    for sufficiently large $n$. From (\ref{eqn:assymp_rel}), we can choose
    \begin{equation}
        {{H}}_0 > \sup_n\Set{ L_n / \log n,~ W_n / N_n,~ S_n / (N_n \log n),~ \Xi }.
    \end{equation} 
    Let $\tilde{N}_n = C_N N_n$,
    $$\mathcal{F}_n =  \UB(F) \cap \left( \bigcup_{N=1}^{\tilde{N}_n} \Phi(\tilde{L}_n({{H}}_0),~\tilde{W}_n({{H}}_0,N),~\tilde{S}_n({{H}}_0,N),~\tilde{B}_n({{H}}_0, N)) \right)$$
    for sufficiently large $C_N > 0$ and $\pi_N(N)$ be a density function of $N$.
    First, show that (a) holds. From \autoref{lem:covering},
    \begin{equation}\label{eqn:adaptive_space_covering}
        \begin{aligned}
            &N\left( \frac{\epsilon_n}{36}, \mathcal{F}_n, \norm{\cdot}_{L^\infty} \right)\\
            &\leq \sum_{N=1}^{\tilde{N}_n} \left( \frac{72}{\epsilon_n} \tilde{L}_n({{H}}_0) (\tilde{B}_n({{H}}_0, N) \vee 1 )^{\tilde{L}_n({{H}}_0)} (\tilde{W}_n({{H}}_0,N)+1)^{2\tilde{L}_n({{H}}_0)} \right)^{\tilde{S}_n({{H}},N)+1} \\
            &\leq \tilde{N}_n \left( \frac{72}{\epsilon_n} \tilde{L}_n({{H}}_0)(\tilde{B}_n({{H}}_0, \tilde{N}_n) \vee 1 )^{\tilde{L}_n({{H}}_0)} (\tilde{W}_n({{H}}_0, \tilde{N}_n)+1)^{2\tilde{L}_n({{H}}_0)} \right)^{\tilde{S}_n({{H}}_0, \tilde{N}_n)+1}
        \end{aligned}
    \end{equation}
    and 
    \begin{equation}
        \begin{aligned}
            &\sup_{\epsilon > \epsilon_n} \log N\left( \frac{\epsilon}{36}, A_{\epsilon, 1} \cap \mathcal{F}_n, \norm{\cdot}_n \right) \\
            &\leq \sup_{\epsilon > \epsilon_n} \log N\left( \frac{\epsilon}{36}, A_{\epsilon, 1} \cap \mathcal{F}_n, \norm{\cdot}_{L^\infty} \right) \\
            &\leq \sup_{\epsilon > \epsilon_n} \log N\left( \frac{\epsilon}{36}, \mathcal{F}_n, \norm{\cdot}_{L^\infty} \right) \\
            &\leq \log N\left( \frac{\epsilon_n}{36}, \mathcal{F}_n, \norm{\cdot}_{L^\infty} \right) \\
            &\leq \log \tilde{N}_n \\
            &\quad +\left[ {\tilde{S}_n({{H}}_0, \tilde{N}_n) + 1} \right]\log \left(  \frac{72}{\epsilon_n} \tilde{L}_n({{H}}_0) (\tilde{B}_n({{H}}_0, \tilde{N}_n) \vee 1)^{\tilde{L}_n({{H}}_0)} (\tilde{W}_n({{H}}_0,\tilde{N}_n) + 1)^{2\tilde{L}_n({{H}}_0)}\right) \\
            &\lesssim \tilde{N}_n (\log n)^3\\
            &\lesssim n\epsilon_n^2.
        \end{aligned}
    \end{equation}
    for sufficiently large $n$. Next, show that (b) holds. Note $N_n (\log n)^3 \lesssim n \epsilon_n^2$ and
    \begin{equation}
        L_n \leq \tilde{L}_n({{H}}_n),~W_n \leq \tilde{W}_n({{H}}_n,N_n),~S_n \leq \tilde{S}_n({{H}}_n,N_n),~B_n \leq \tilde{B}_n({{H}}_n , \tilde{N}_n)
    \end{equation}
    for $N_n,~L_n,~W_n,~S_n,~B_n$ in \autoref{thm:main} and sufficiently large $n$. Thus, there exists a constant $D > 0$ such that 
    \begin{equation}
        \begin{gathered}
            \pi_N(N_n) \gtrsim \Exp{-N_n (\log n)^{2} \log \frac{N_n}{\lambda_N}} \gtrsim \Exp{-Dn \epsilon_n^2}
        \end{gathered} 
    \end{equation}
    and
    \begin{equation}
        \begin{aligned}
        &\Pi(f_\theta \in \mathcal{F}_n: \norm{f - f_0}_n \leq \epsilon_n) \\
        &\geq \pi_N(N_n) \Pi(f_\theta \in \Phi(L_n, W_n, S_n, B_n): \norm{f_\theta-f_0}_n \leq \epsilon_n |N_n) \\
        &\gtrsim \Exp{-(C+D)n\epsilon_n^2}
        \end{aligned}
    \end{equation}
    holds for sufficiently large $n$. (b) holds for $C^\prime = \max\Set{C+D, 1 + 2 / \sigma^2}$.
    From $$\Pi(\mathcal{F} \setdiff \mathcal{F}_n) \leq \pi_N(N>\tilde{N}_n)$$
    and \textit{Chernoff bound}, for any positive number $t,~{{Z}}_0>0$,
    \begin{equation}
        \begin{gathered}
            P({{Z}}>{{Z}}_0) < e^{-t({{Z}}_0+1)} \expt{e^{t{{Z}}}} \lesssim e^{-t({{Z}}_0+1)} \left(\Exp{e^t \lambda_N} -1\right).
        \end{gathered}
    \end{equation}
    Letting $t=\log {{Z}}_0$, we get
    \begin{equation}
        P({{Z}}> {{Z}}_0) \lesssim e^{-({{Z}}_0+1) \log {{Z}}_0} \left(\Exp{{{Z}}_0 \lambda_N} -1\right).
    \end{equation}
    Thus,
    \begin{equation}\label{eqn:thm2:c}
        \begin{gathered}
            \pi_N(N>\tilde{N}_n) \lesssim e^{-\left[(\tilde{N}_n+1)\log \tilde{N}_n + \tilde{N}_n \lambda_N \right] (\log n)^{2}}, \\
            (C^\prime \sigma^2 +2)n\epsilon_n^2 + \lambda_N \tilde{N}_n (\log n)^{2} - (\tilde{N}_n+1)\log \tilde{N}_n (\log n)^{2} \rightarrow - \infty
        \end{gathered}
    \end{equation}
    for sufficiently large $C_N>0$. (c) holds.
\end{proof}

\subsection{Proof of Theorem \ref{thm:shrinkage}}\label{subsec:proof_shrinkage}

\begin{proof}
    Let $\mathcal{F} = \Phi \cap \UB(F)$. From \autoref{lem:consistency}, it is enough to show that there exist a constant $C^\dprime > 2 / \sigma^2$ and $\mathcal{F}_n \subset \mathcal{F}$ which satisfy
    \begin{enumerate}[(a)]
        \item $\sup_{\epsilon > \epsilon_n} \log N(\epsilon/36, A_{\epsilon, 1} \cap \mathcal{F}_n, \norm{\cdot}_n) \leq n\epsilon_n^2$
        \item $-\log \Pi(A_{\epsilon_n, 1}) \leq C^\dprime n \epsilon_n^2$
        \item $\Pi(\mathcal{F} \setdiff \mathcal{F}_n) = o\left(e^{-(C^\dprime \sigma^2 +2)n \epsilon_n^2}\right)$
    \end{enumerate}
    for sufficiently large $n$. Let $\mathcal{F}_n = \Phi(L_n, W_n, S_n, B_n, a_n) \cap \UB(F)$. It is easy to show that (a) holds from \autoref{lem:covering_a} as in the proof of \autoref{thm:main}. By the assumption,
    \begin{equation}
        \begin{aligned}
            &\Pi(\mathcal{F} \setdiff \mathcal{F}_n) \\
            &\leq \pi\left( ^\exists \abs{\theta_i} > B_n\right | L_n,W_n,S_n,B_n) + \pi\left(\sum_{i=1}^{T_n} I(\abs{\theta_i} >a_n) > S_n | L_n,W_n,S_n,B_n \right) \\
            &= \left(1 - (1-v_n)^{T_n}\right) + P \left( S > S_n | S \sim B(T_n, 1 - u_n) \right) \\
            &\leq T_n v_n + \exp\left( - T_n \left\{ \left(1-\frac{S_n}{T_n} \right) \log \frac{1-S_n/T_n}{u_n} + \frac{S_n}{T_n} \log\frac{S_n/T_n}{1-u_n}  \right\} \right)\\
            &= o\left(e^{-K_0n\epsilon_n^2 + \log T_n}\right) + \Exp{ T_n \left(1-\frac{S_n}{T_n} \right) \log \frac{u_n}{1-S_n/T_n} - S_n \log\frac{S_n/T_n}{1-u_n} } \\
            &\leq o\left(e^{-K_1n\epsilon_n^2}\right) + \Exp{ T_n \left(1-\frac{S_n}{T_n} \right) \log \frac{1 - \eta_n S_n/T_n}{1-S_n/T_n} - S_n \log\frac{S_n/T_n}{1-u_n} } \\
            &\leq o\left(e^{-K_1n\epsilon_n^2}\right) + \Exp{ T_n \left(1-\frac{S_n}{T_n} \right) \left(\frac{(1-\eta_n) S_n/T_n}{1-S_n/T_n} + o\left(\frac{(1-\eta_n) S_n/T_n}{1-S_n/T_n}\right)\right) -K n\epsilon_n^2 } \\
            &= o\left(e^{-K_1n\epsilon_n^2}\right) + \Exp{S_n(1-\eta_n) + o(S_n(1-\eta_n)) -K n\epsilon_n^2 } \\
            &= o\left(e^{-K_1n\epsilon_n^2}\right) + o\left(e^{-K_2 n\epsilon_n^2}\right) \\
            &= o\left(e^{-\min\{K_1, K_2\}n\epsilon_n^2}\right).
        \end{aligned}
    \end{equation}
    for some $4<K_1<K$ and $4 < K_2 < K_0$. Letting $C^\dprime = (\min\{K_1, K_2\}-2)/\sigma^2$, (c) holds. We use Bernoulli's inequality and a tail bound for binomial distribution in \citet{arratia1989tutorial} for the second inequality. Next, as in the proof of \autoref{thm:main}, there is a constant $C_1>0$ and $\hat{f}_n = f_{\hat{\theta}} \in \mathcal{F}_n$ such that
    \begin{equation}
        \begin{gathered}
        \norm{\hat{f}_n - f_0}_{L^2} \leq C_1 \norm{f_0}_{B^s_{p,q}([0, 1]^d)} N_n^{-s/d} \leq \epsilon_n/4, \\
        \norm{f - f_0}_n^2 \leq 4 \norm{f - f_0}_{L^2}^2
        \end{gathered}
    \end{equation}
    for sufficiently large $n$ almost surely. Let $\hat{\gamma}$ and $\hat{\theta}_{\hat{\gamma}}$ be index and value of nonzero components of $\hat{\theta}$ respectively. Let 
    $$\tilde{\Theta}(L, W, S, B, a) = \left\{\tilde{\theta}: \theta \in \Theta(L, W, S, B, a)\right\} = \Theta(L, W, S, B)$$
    and $\tilde{\Theta}(\hat{\gamma}; L_n, W_n, S_n, B_n, a_n)$ be a subset of parameter space $\tilde{\Theta}(L_n, W_n, S_n, B_n, a_n)$ consists of parameters which have nonzero components at $\hat{\gamma}$ only and $\mathcal{F}_n(\hat{\gamma}) = \tilde{\Phi}(\hat{\gamma}; L_n, W_n, S_n, B_n, a_n) \cap \UB(F)$ be an uniformly bounded neural network space generated by $\tilde{\Theta}(\hat{\gamma}; L_n, W_n, S_n, B_n, a_n)$. 
    Note 
    \begin{equation}
        \begin{aligned}
            \Pi(A_{\epsilon_n, 1} )   
            &= \Pi(f \in \mathcal{F}: \norm{f - f_0}_n \leq \epsilon_n) \\
            &\geq \Pi\left(f \in \mathcal{F}: \norm{f - \hat{f}_n}_{L^2} \leq \epsilon_n/4\right) \\
            &\geq \Pi\left(f \in \mathcal{F}: \norm{f - \hat{f}_n}_{L^\infty} \leq \epsilon_n/4\right) \\
            &\geq \Pi\left(f \in \mathcal{F}_n(\hat{\gamma}): \norm{f - \hat{f}_n}_{L^\infty} \leq \epsilon_n/4\right)
        \end{aligned}
    \end{equation}
    for sufficiently large $n$. As in the proof of \autoref{lem:covering},
    \begin{equation}
        \begin{aligned}
             &\Pi\bigg( f \in \mathcal{F}_n(\hat{\gamma}): \norm{f - \hat{f}_n}_{L^\infty} \leq \epsilon_n/4\bigg) \\
             &\geq \Pi\bigg(\theta \in \mathbb{R}^{T_n}: \theta_{\hat{\gamma}^c} \in [-a_n, a_n]^{T_n-S_n},~ \norm{\theta_{\hat{\gamma}}}_{\infty} \leq B_n, \\
                &\qquad \norm{\hat{\theta}_{\hat{\gamma}} - \theta_{\hat{\gamma}}}_{\infty} \leq \frac{\epsilon_n}{4 (W_n+1)^{L_n} L_n (B_n \vee 1)^{L_n-1}} \bigg) \\
             &\geq u_n^{T_n - S_n} \left( \int_{B_n - t_n}^{B_n} g(t) dt \right)^{S_n},
        \end{aligned}
    \end{equation}
    where $t_n = \frac{\epsilon_n}{4 (W_n+1)^{L_n} L_n (B_n \vee 1)^{L_n-1}}$.
    Letting $$y_n =\int_{B_n - t_n}^{B_n} g(t) dt \geq t_n g(B_n),$$ 
    \begin{equation}
        \begin{aligned}
             -\log \Pi(A_{\epsilon_n, 1} ) 
             &\leq -S_n \log y_n - (T_n - S_n) \log u_n \\
             &\leq -S_n \Log{t_n g(B_n) } - T_n\left(1 - \frac{S_n}{T_n}\right) \Log{1-S_n/T_n} \\
             &= -S_n \log t_n - S_n \log g(B_n) + T_n\left(1 - \frac{S_n}{T_n}\right) \left( S_n/T_n + o\left(S_n /T_n \right) \right) \\
             &\lesssim S_n (\log n)^2 + S_n + o\left(S_n\right) \\
             &\lesssim n\epsilon_n^2.
        \end{aligned}
    \end{equation}
\end{proof}

\section{Numerical Examples}\label{appendix:numerical_results}

\subsection{Gaussian Mixture Prior}\label{subsec:Ex-Gaussian-Mixture}

Consider a problem of estimating the Besov functions $f_1,~f_2,~f_3$ and $f_4$. Assume the Gaussian mixture prior distribution for each parameter as in Example \ref{ex:gaussian_mixture}. We sampled $n$ points from $x \sim U(0, 1)$ and $y$ from $\Gaussian{f_i(x)}{\sigma_i^2}$ for $i=1,2,3$ and $4$. We set $\sigma_1^2= \sigma_2^2=0.01^2$ and $\sigma_3^2=\sigma_4^2=0.1^2$.

As mentioned in \autoref{sec:example}, we used a smaller model that was weaker than the conditions of the theorem. We fix the depth $L_n = 5$, the width $W_n=200$ and set $N_n \leftarrow W_n / W_0$ of the model. We consider the prior in \autoref{ex:gaussian_mixture} (Gaussian mixture BNN) and set the smaller variance $\sigma_{1n} \leftarrow \max\{ 0.001,~\sigma_{1n} \}$ to avoid numerical precision issues.



We fit the model using the NUTS algorithm \citep{hoffman2014no} with the python \texttt{Pyro} \citep{bingham2019pyro} and \texttt{PyTorch} \citep{NEURIPS2019_9015} packages. Experiments were run on a GPU server with Nvidia GeForce GTX TITAN X and RTX 3090. The code and instructions for the experiment are provided in the supplementary material. \autoref{fig:function_2GMM_results} shows the results. Overall, as the number of the data $n$ increase, the mean functions get closer to the true regression function.

\begin{figure}[!ht]
    \centering
    \caption{The results of estimating four functions $f_1,~f_2,~f_3$ and $f_4$ using Gaussian mixture BNN model with $n=100$ (left), $n=1,000$ (center) and $n=10,000$ (right) samples. We construct 1,000 functions from the MCMC samples and plot the mean function with training data. The blue lines are the mean functions and the yellow intervals are the prediction intervals.}\label{fig:function_2GMM_results}
    \begin{subfigure}[b]{0.95\textwidth}
        \includegraphics[width=0.32\textwidth]{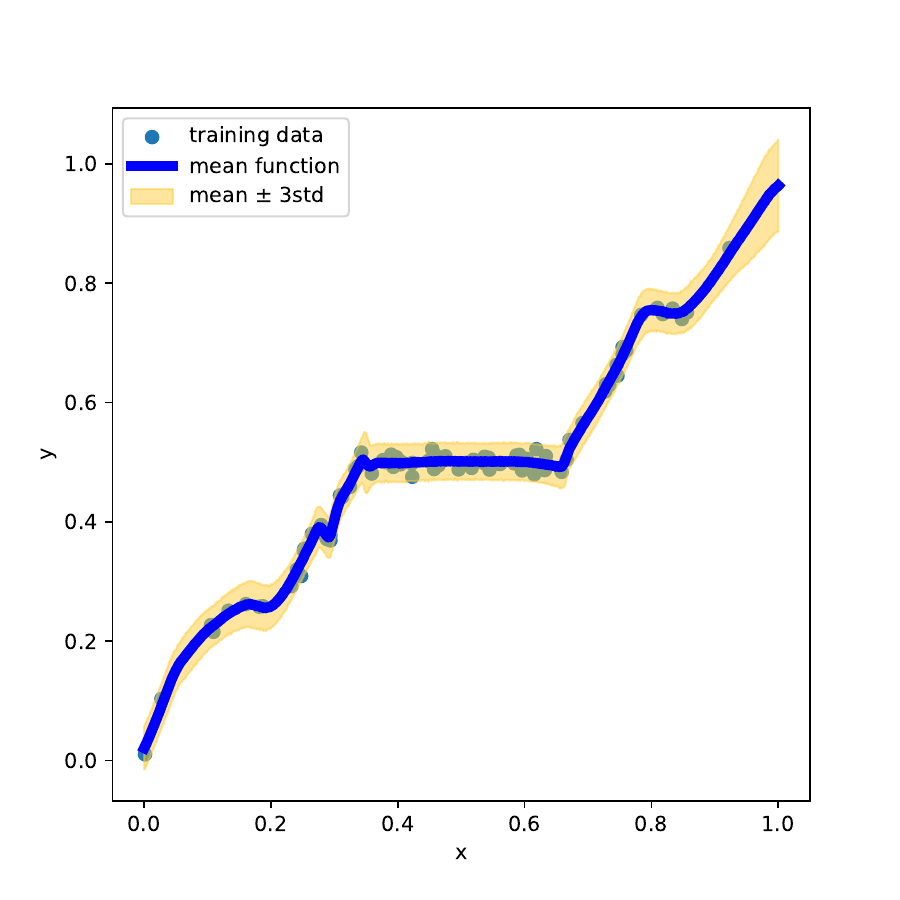}
        \includegraphics[width=0.32\textwidth]{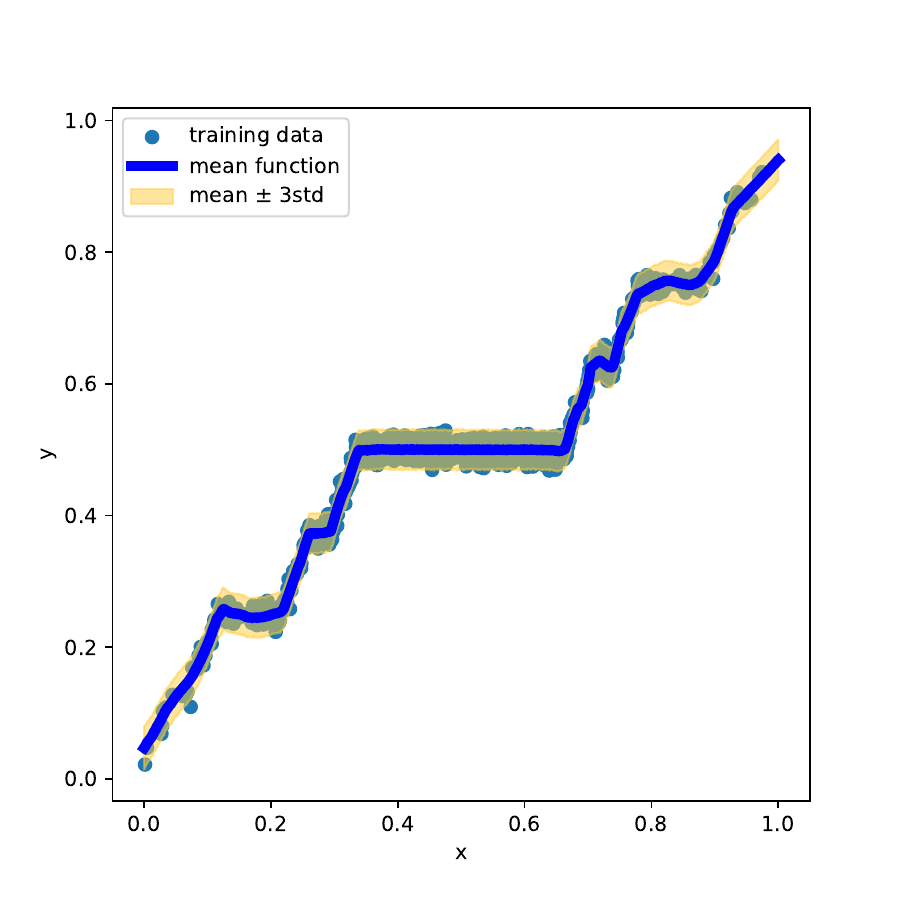}
        \includegraphics[width=0.32\textwidth]{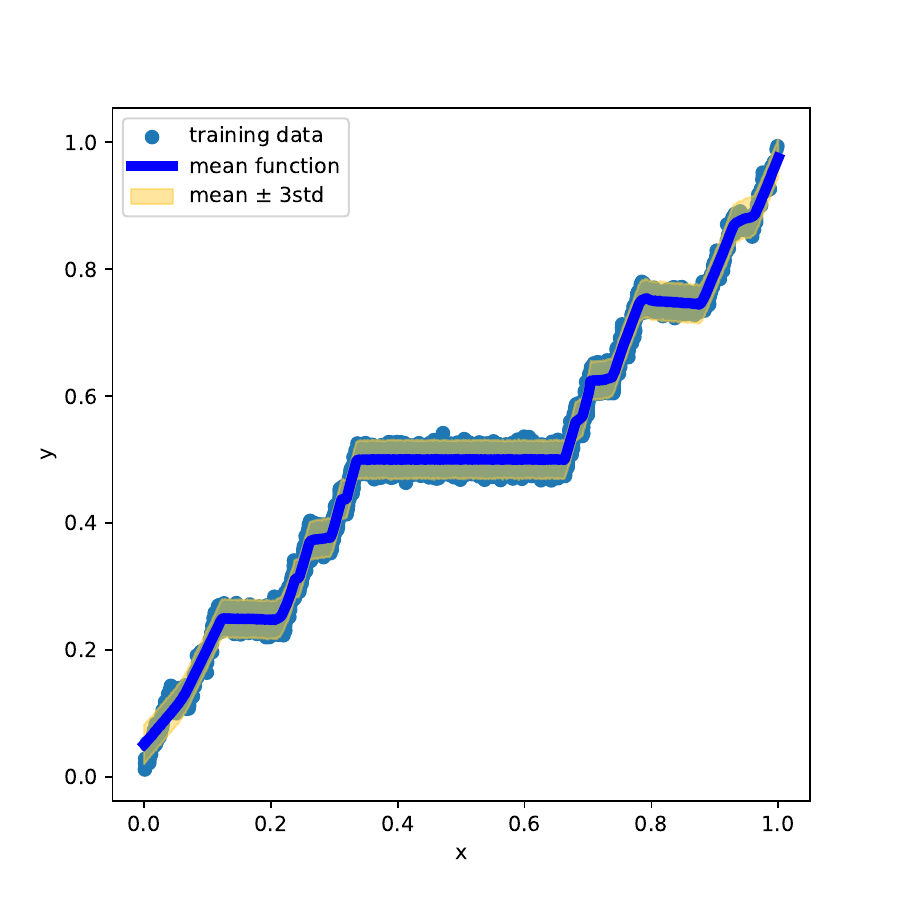}
        \caption{The results of estimating $f_1$.}
    \end{subfigure}
    \begin{subfigure}[b]{0.95\textwidth}
        \includegraphics[width=0.32\textwidth]{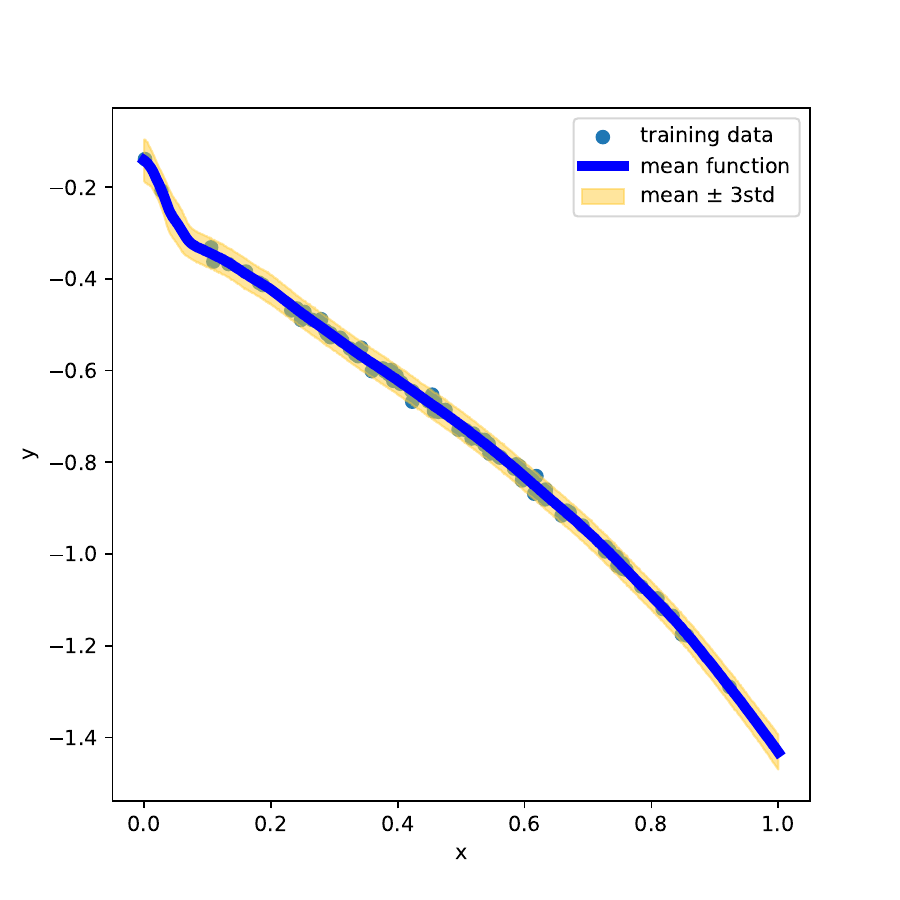}
        \includegraphics[width=0.32\textwidth]{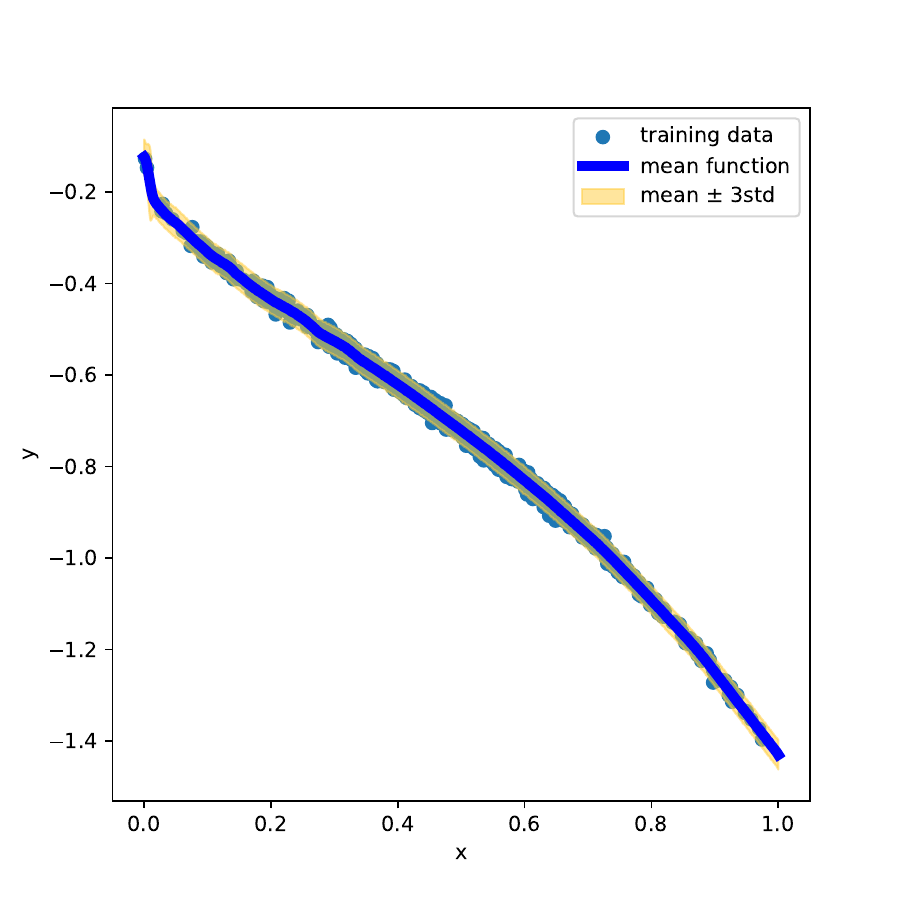}
        \includegraphics[width=0.32\textwidth]{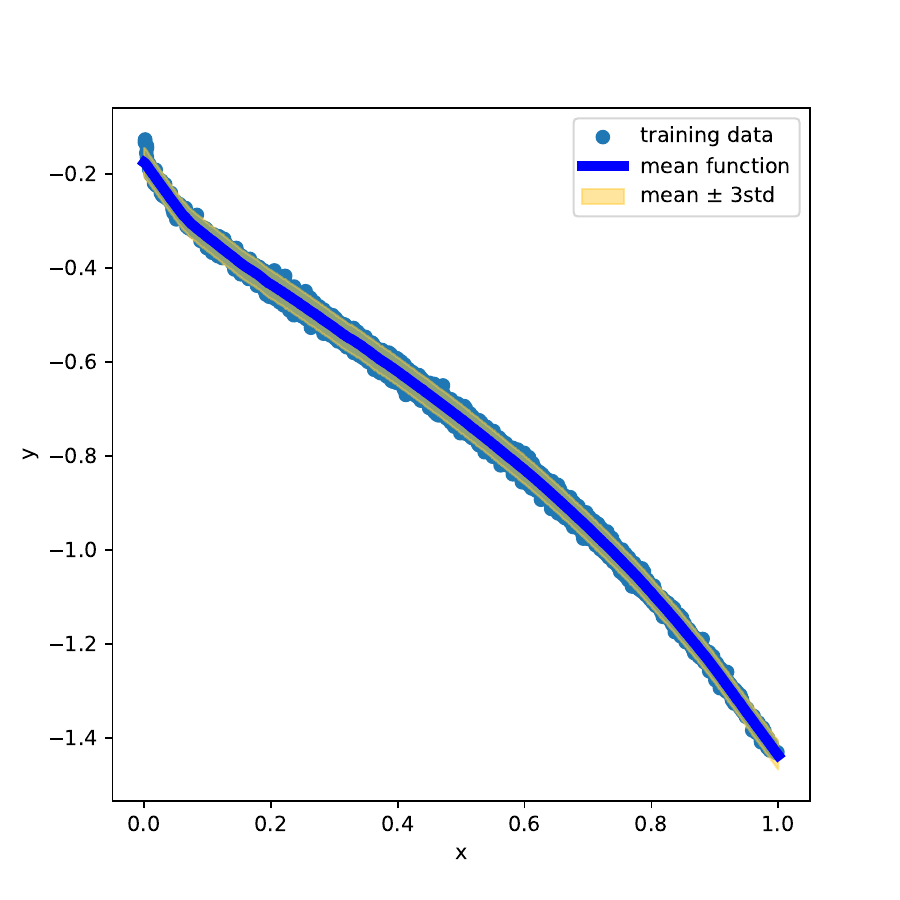}
        \caption{The results of estimating $f_2$.}
    \end{subfigure}
    \begin{subfigure}[b]{0.95\textwidth}
        \includegraphics[width=0.32\textwidth]{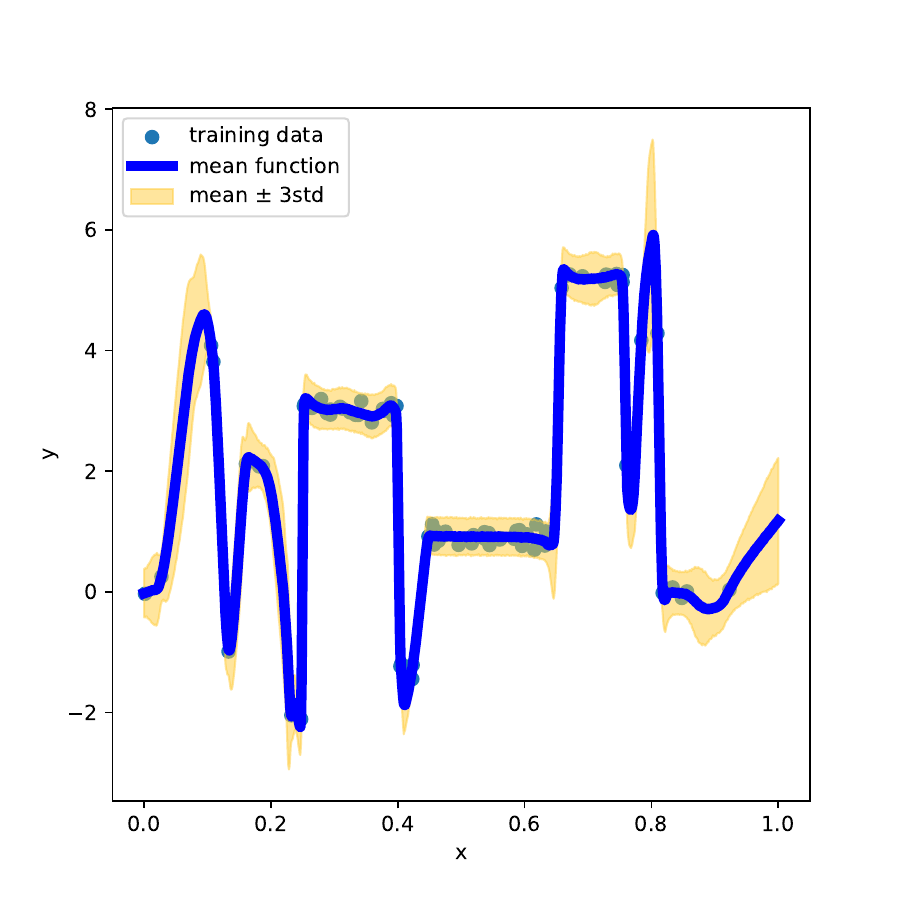}
        \includegraphics[width=0.32\textwidth]{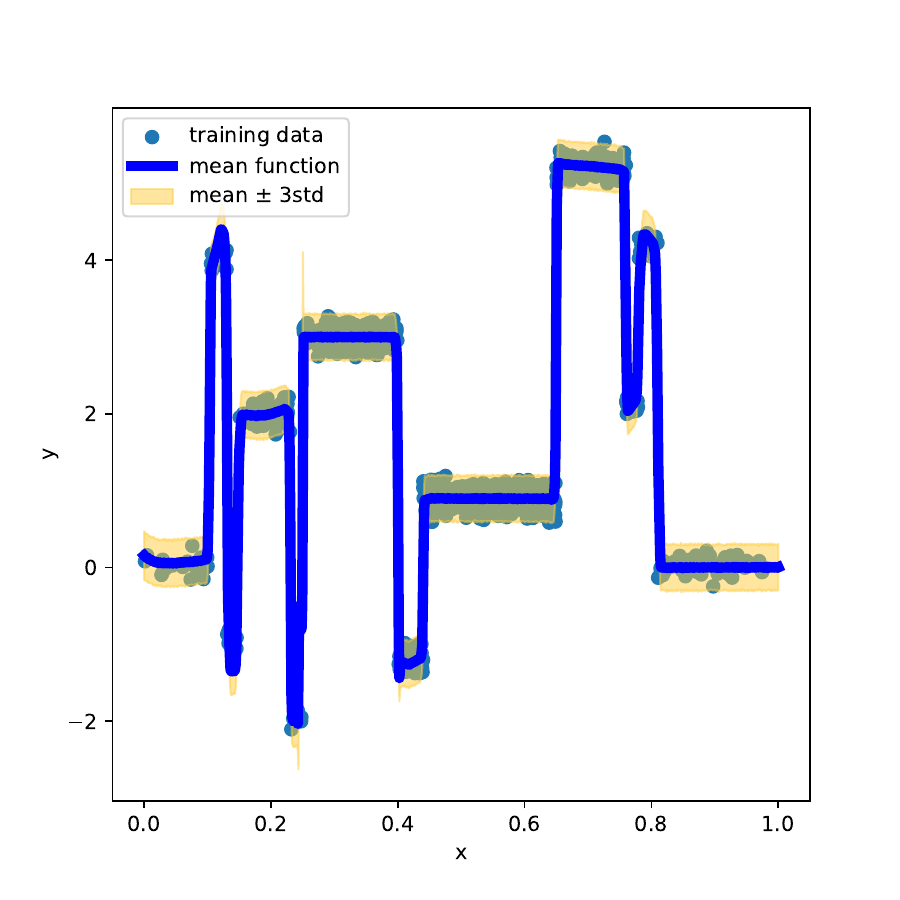}
        \includegraphics[width=0.32\textwidth]{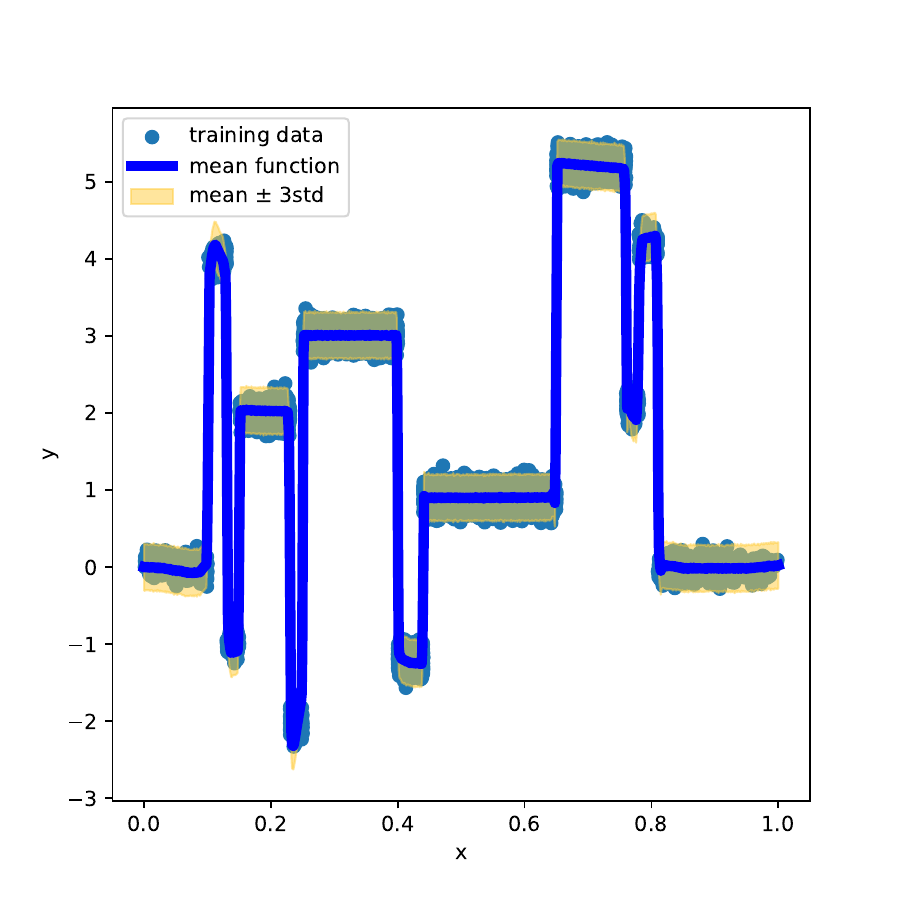}
        \caption{The results of estimating $f_3$.}
    \end{subfigure}
    \begin{subfigure}[b]{0.95\textwidth}
        \includegraphics[width=0.32\textwidth]{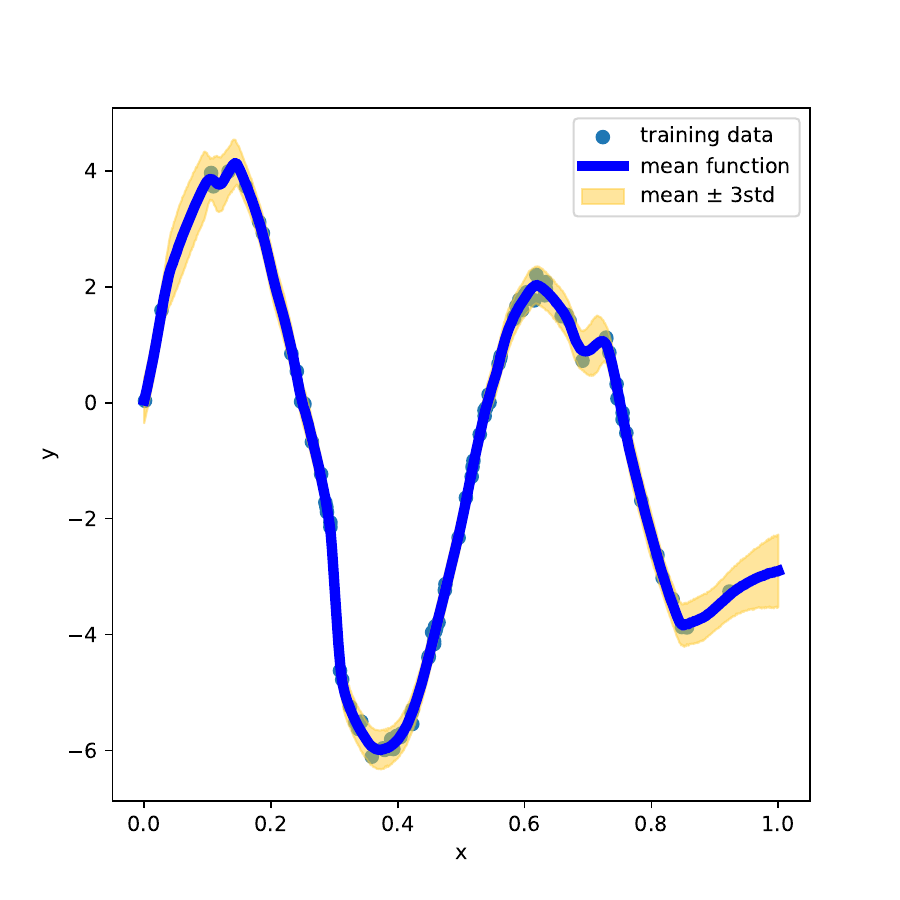}
        \includegraphics[width=0.32\textwidth]{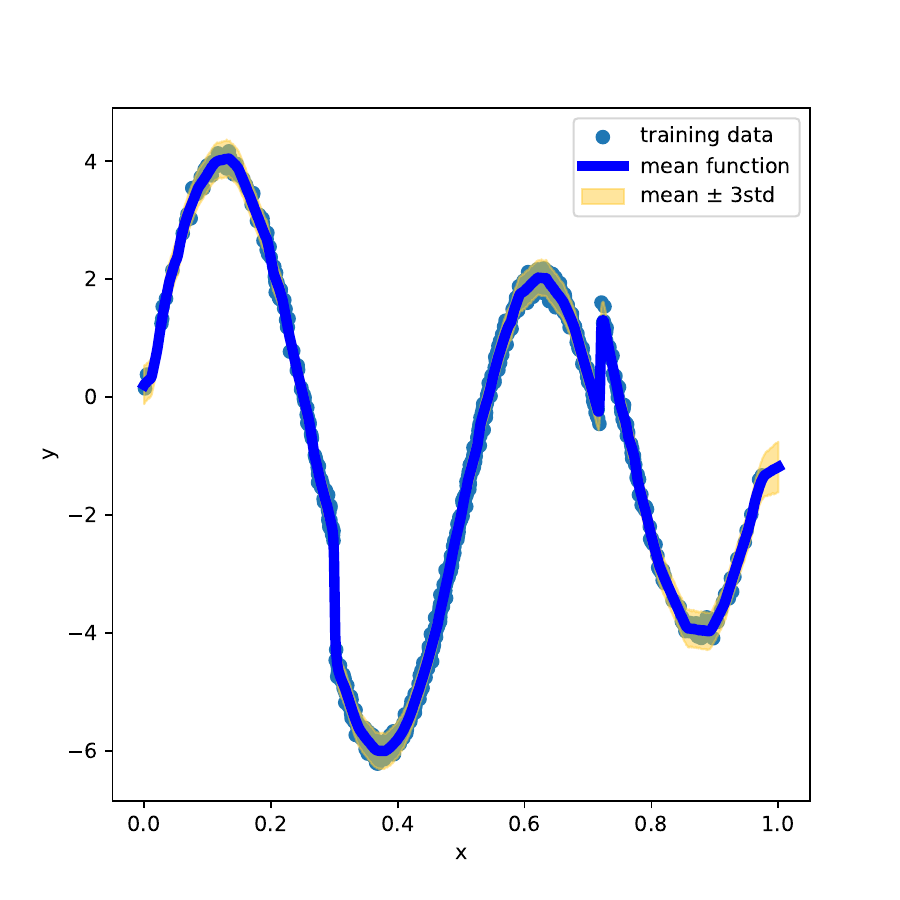}
        \includegraphics[width=0.32\textwidth]{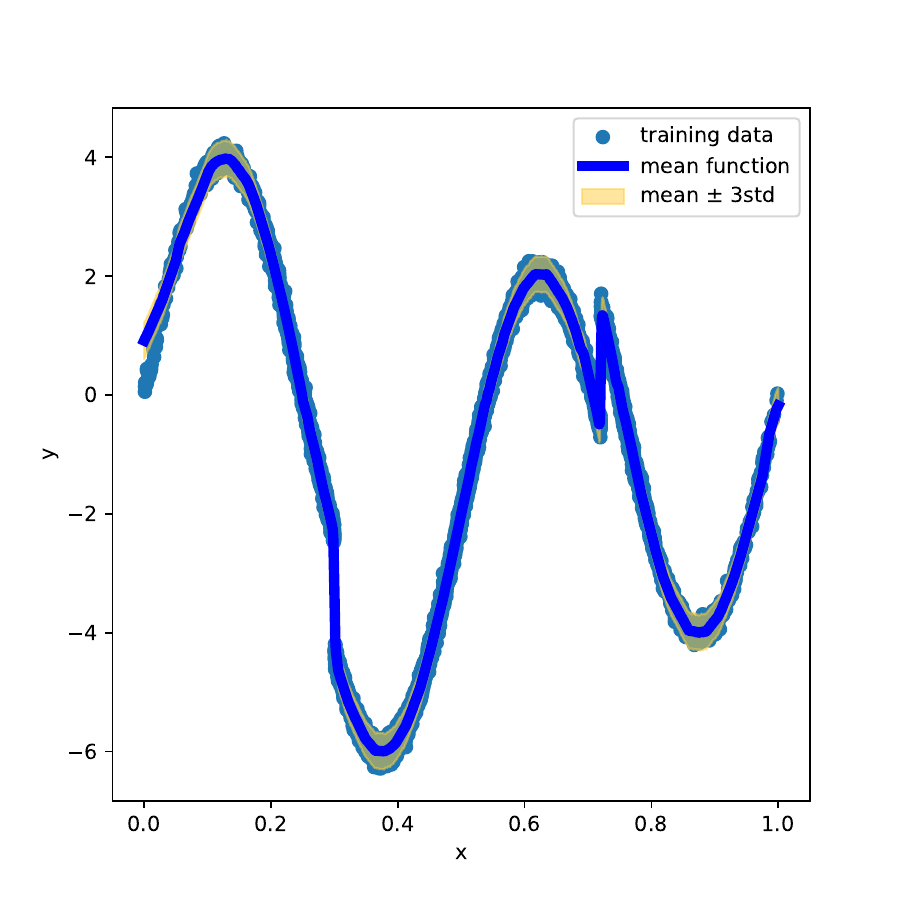}
        \caption{The results of estimating $f_4$.}
    \end{subfigure}
\end{figure}

\subsection{Gaussian Prior}

We performed the same experiment as before using Gaussian prior (Gaussian BNN). The standard deviation of the Gaussian prior is the same as $\sigma_{2n}$ in the Gaussian mixture BNN (see \autoref{ex:gaussian}). \autoref{fig:function_Gaussian_results} shows the results. Overall, as the number of the data $n$ increase, the mean functions get closer to the true regression function.

\begin{figure}[!ht]
    \centering
    \caption{The results of estimating four functions $f_1,~f_2,~f_3$ and $f_4$ using Gaussian BNN models with $n=100$ (left), $n=1,000$ (center) and $n=10,000$ (right) samples. We construct 1,000 functions from the MCMC samples and plot the mean function with training data. The blue lines are the mean functions and the yellow intervals are the prediction intervals.}\label{fig:function_Gaussian_results}
    \begin{subfigure}[b]{0.95\textwidth}
        \includegraphics[width=0.32\textwidth]{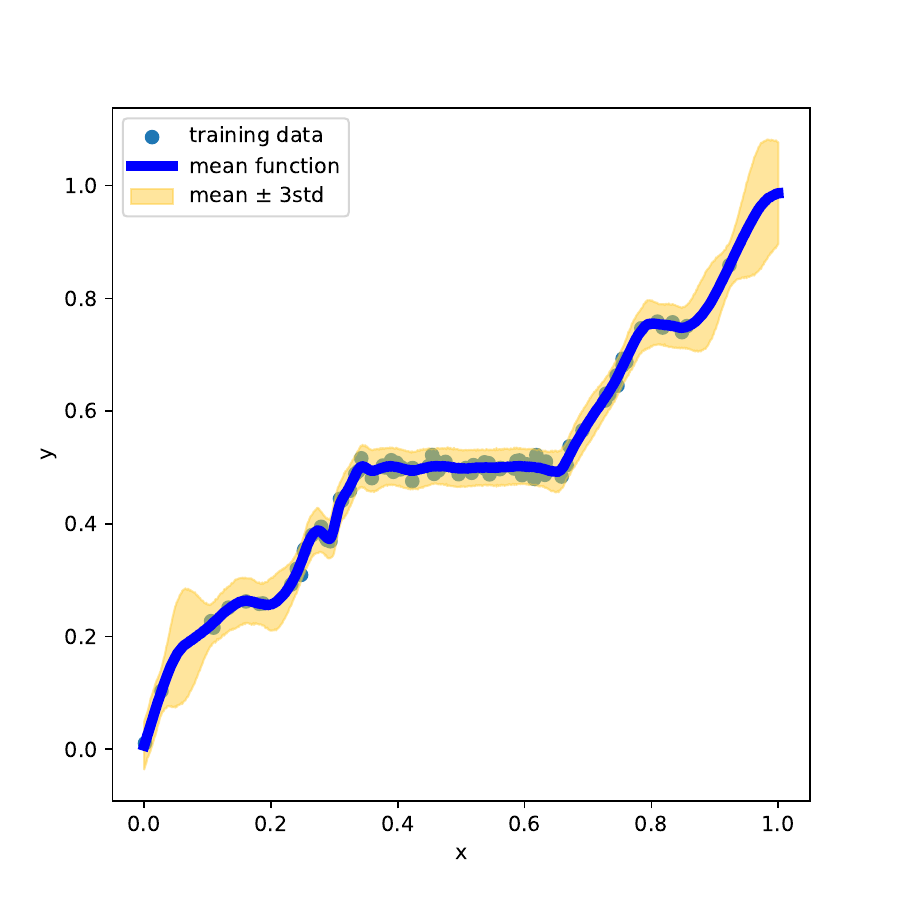}
        \includegraphics[width=0.32\textwidth]{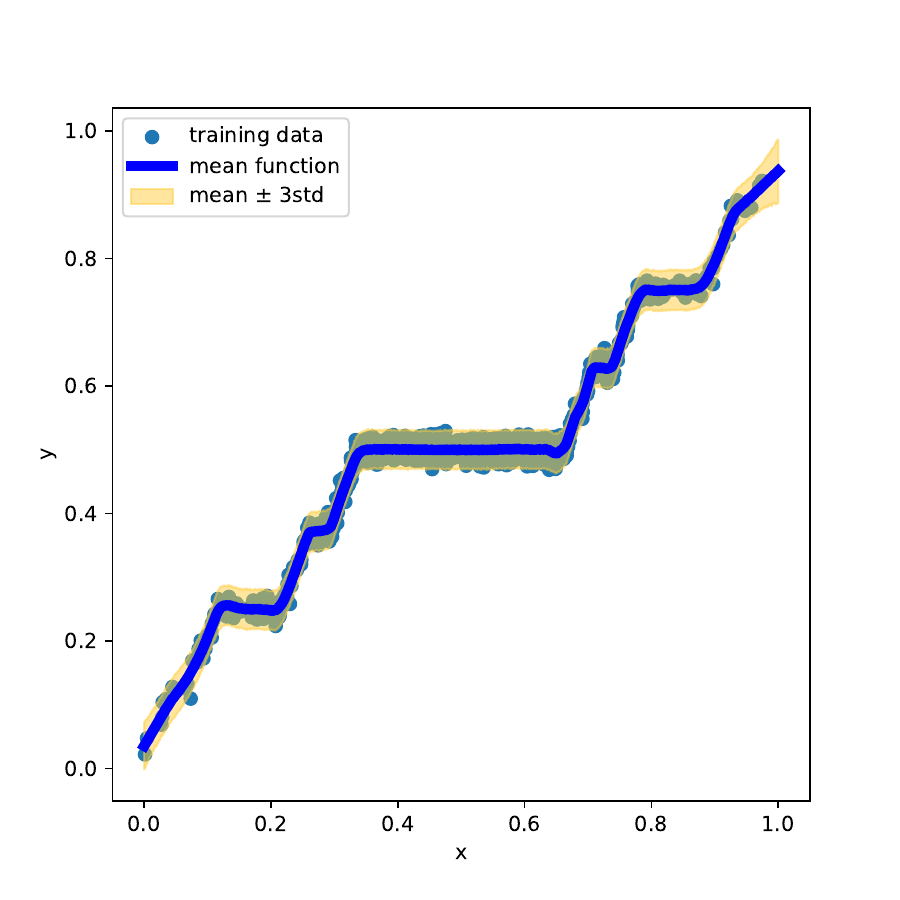}
        \includegraphics[width=0.32\textwidth]{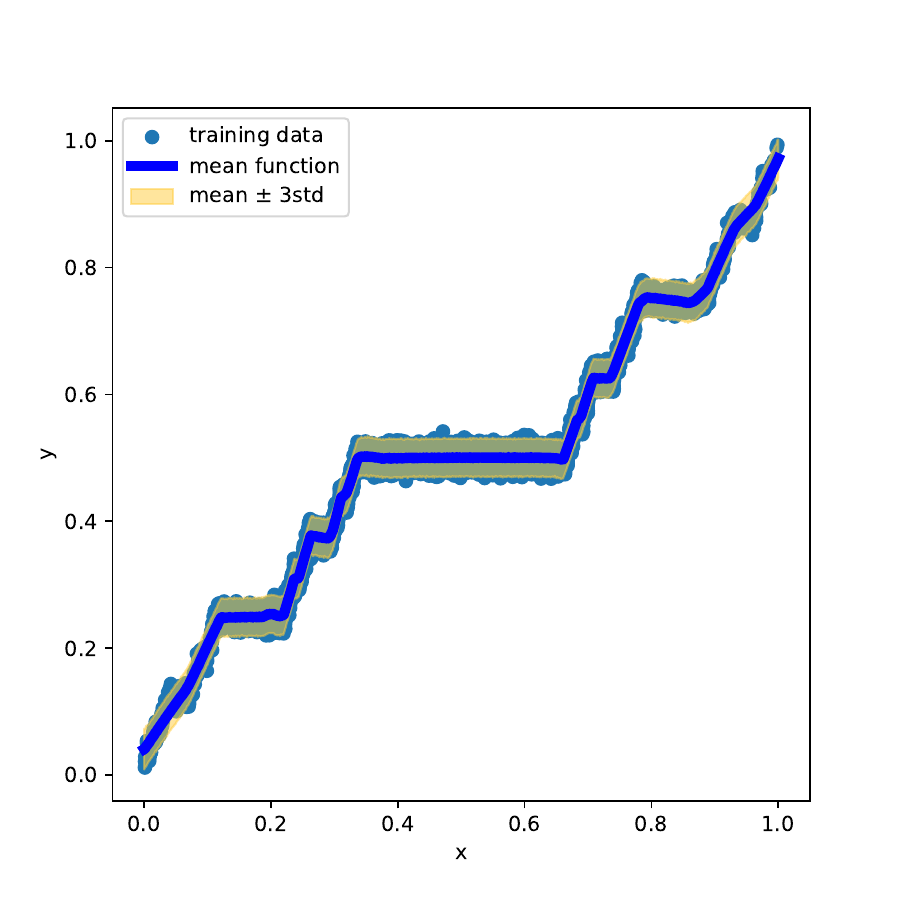}
        \caption{The results of estimating $f_1$.}
    \end{subfigure}
    \begin{subfigure}[b]{0.95\textwidth}
        \includegraphics[width=0.32\textwidth]{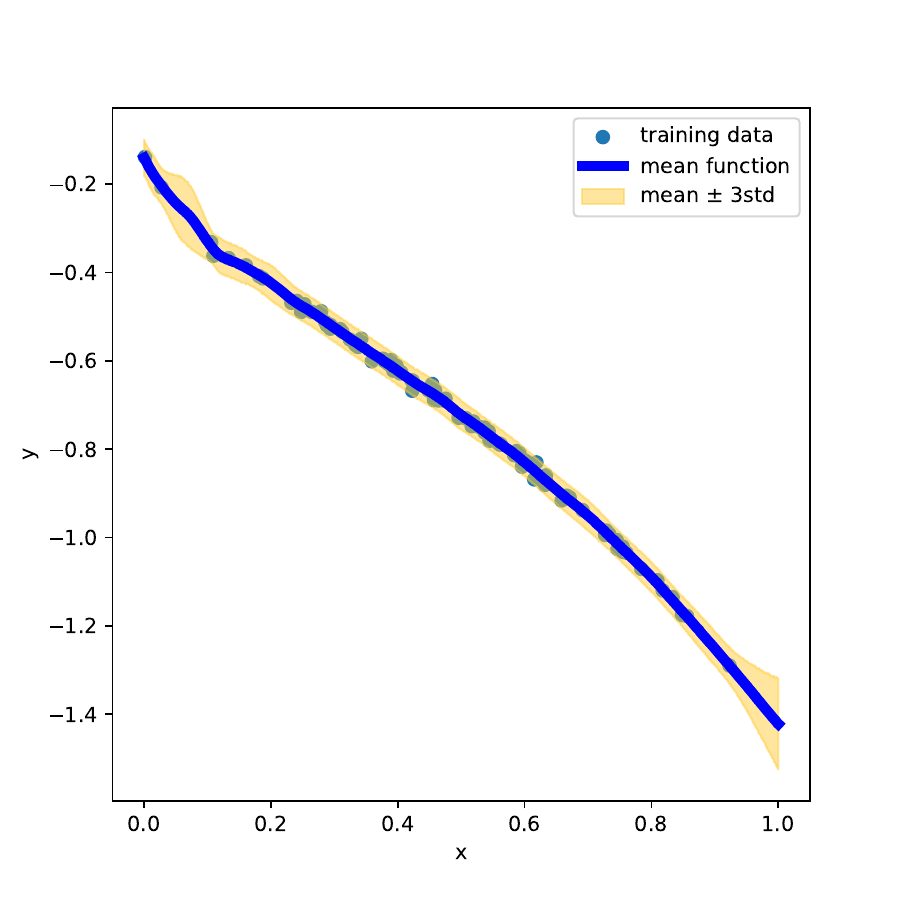}
        \includegraphics[width=0.32\textwidth]{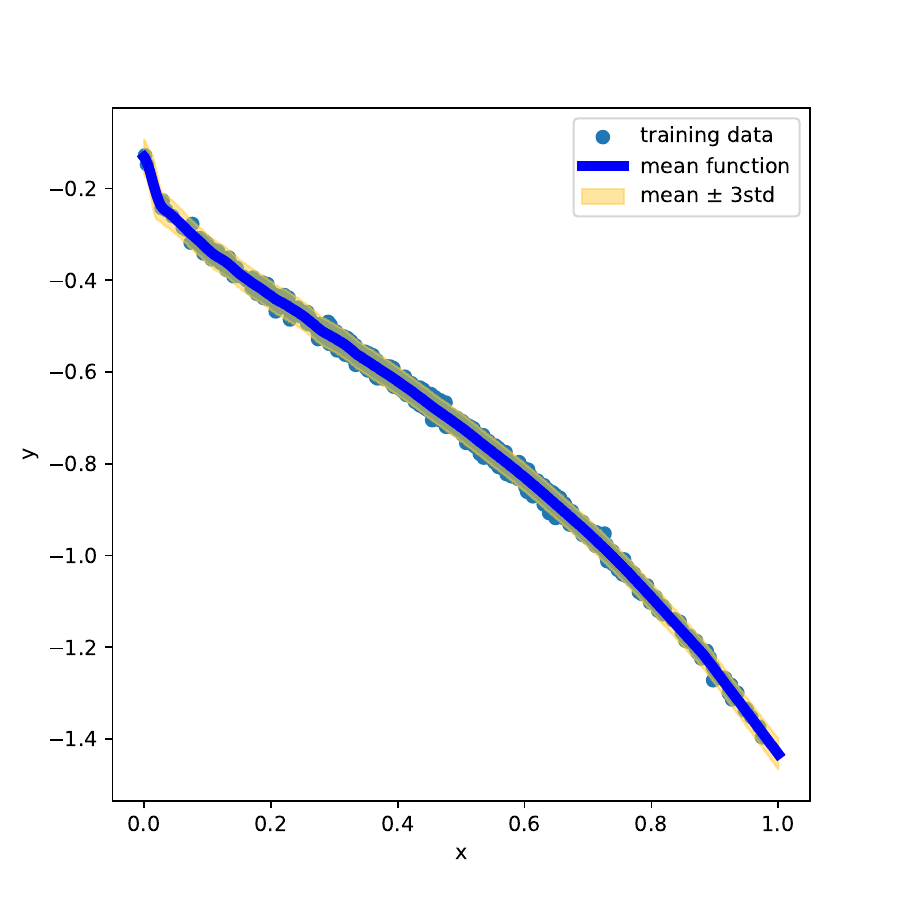}
        \includegraphics[width=0.32\textwidth]{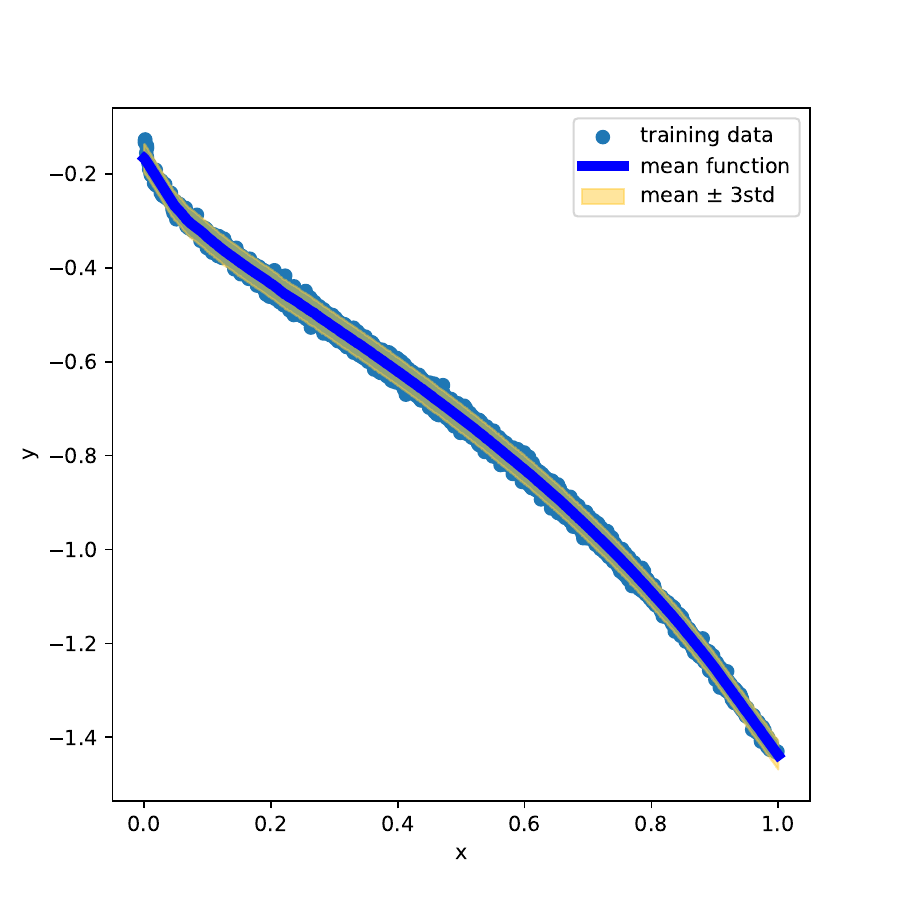}
        \caption{The results of estimating $f_2$.}
    \end{subfigure}
    \begin{subfigure}[b]{0.95\textwidth}
        \includegraphics[width=0.32\textwidth]{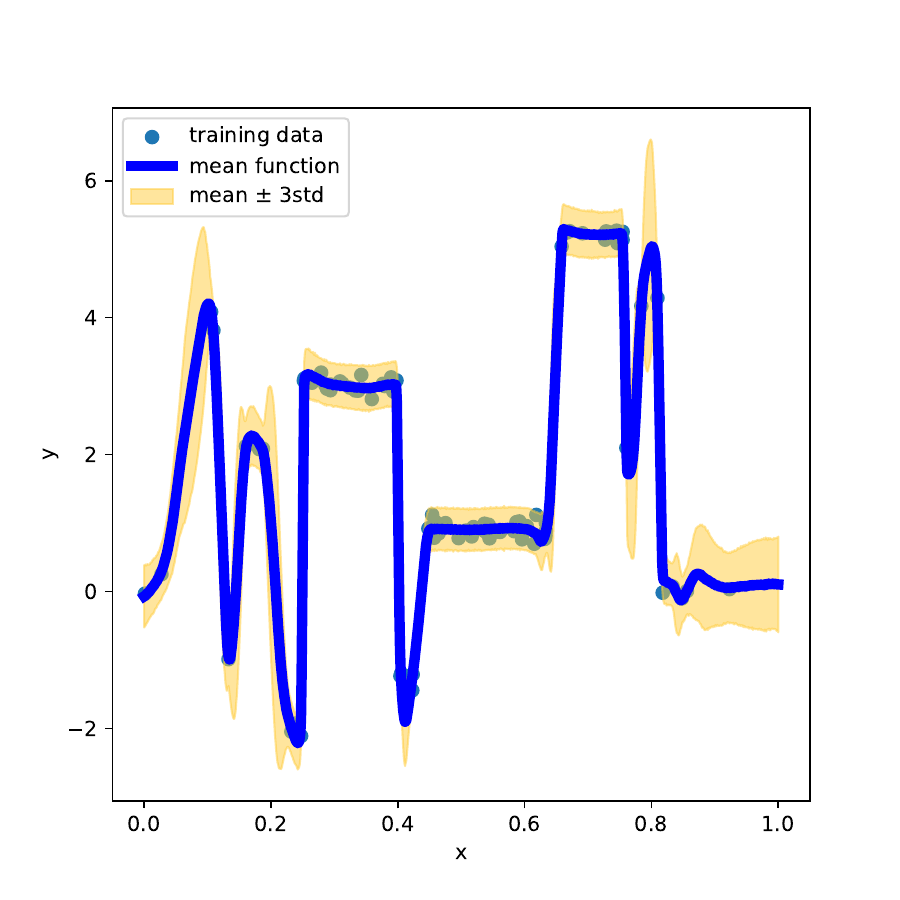}
        \includegraphics[width=0.32\textwidth]{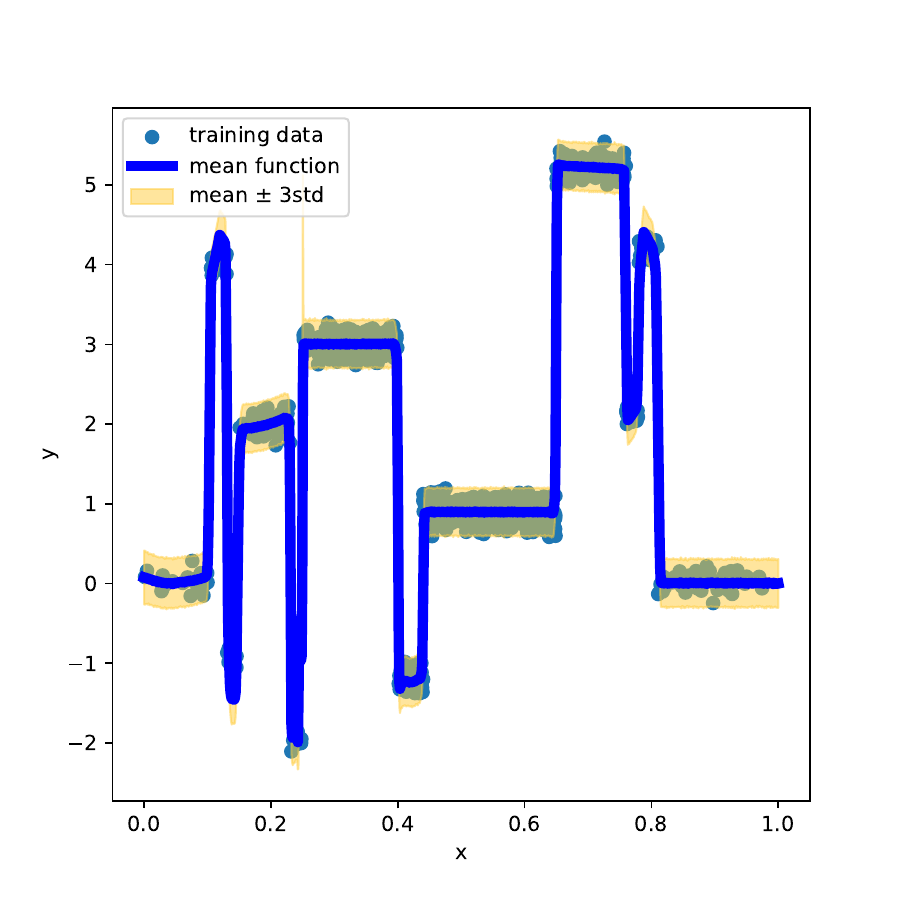}
        \includegraphics[width=0.32\textwidth]{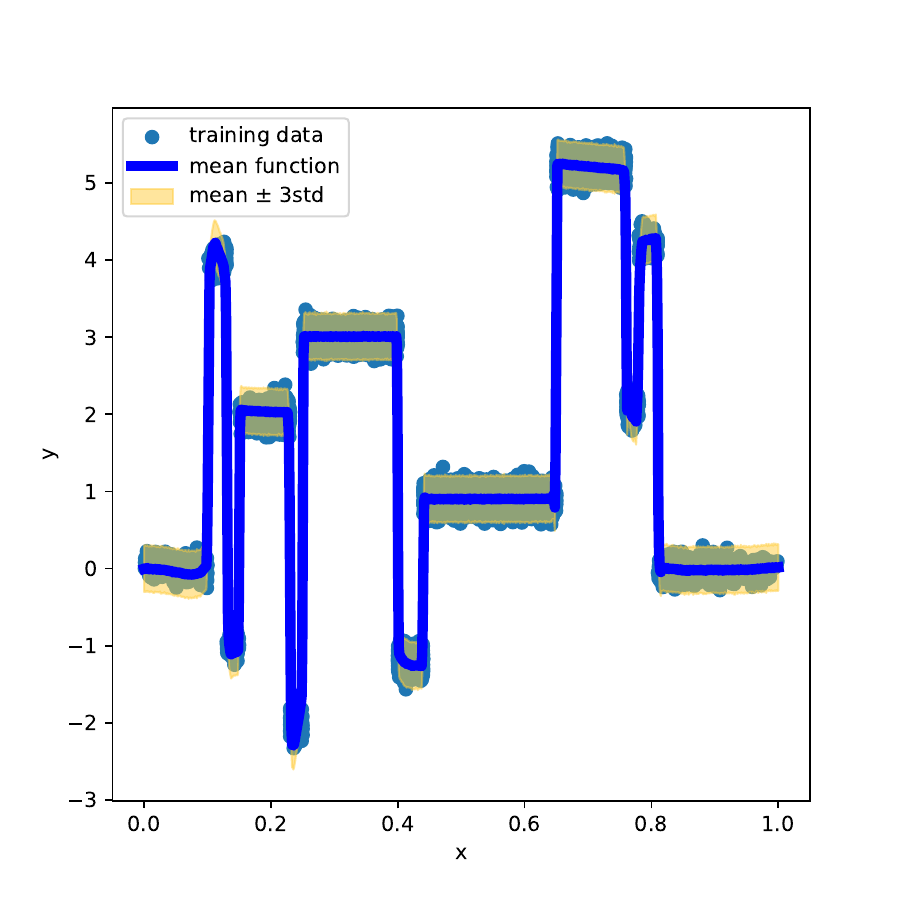}
        \caption{The results of estimating $f_3$.}
    \end{subfigure}
    \begin{subfigure}[b]{0.95\textwidth}
        \includegraphics[width=0.32\textwidth]{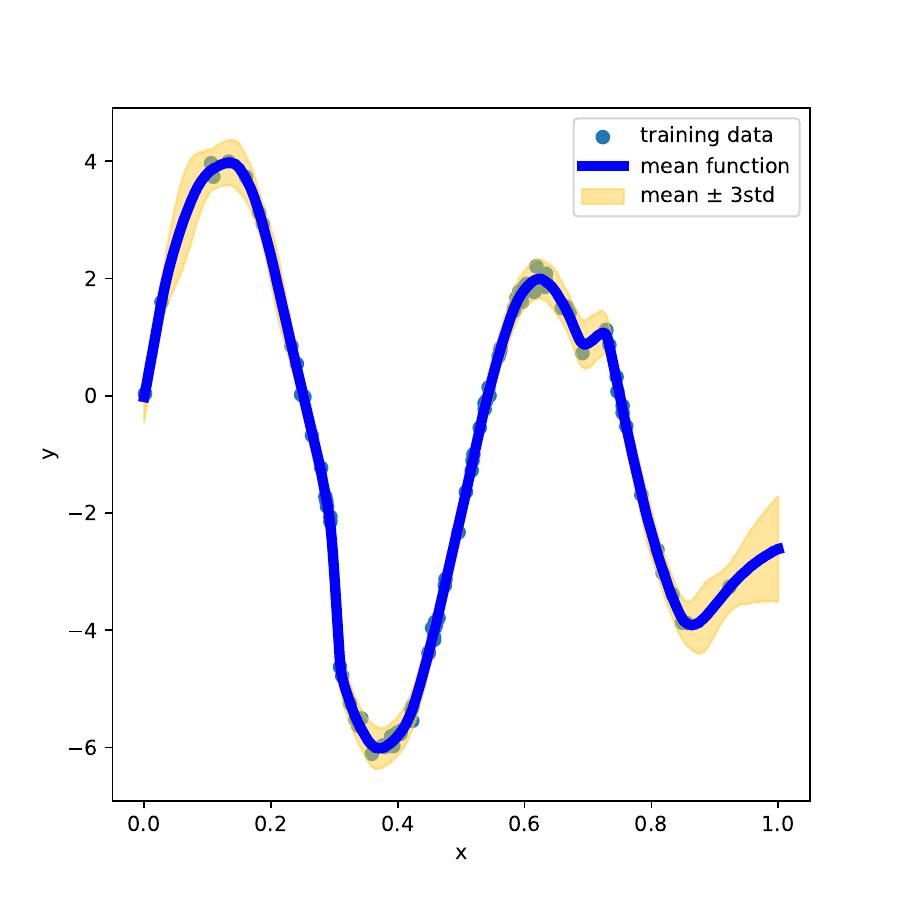}
        \includegraphics[width=0.32\textwidth]{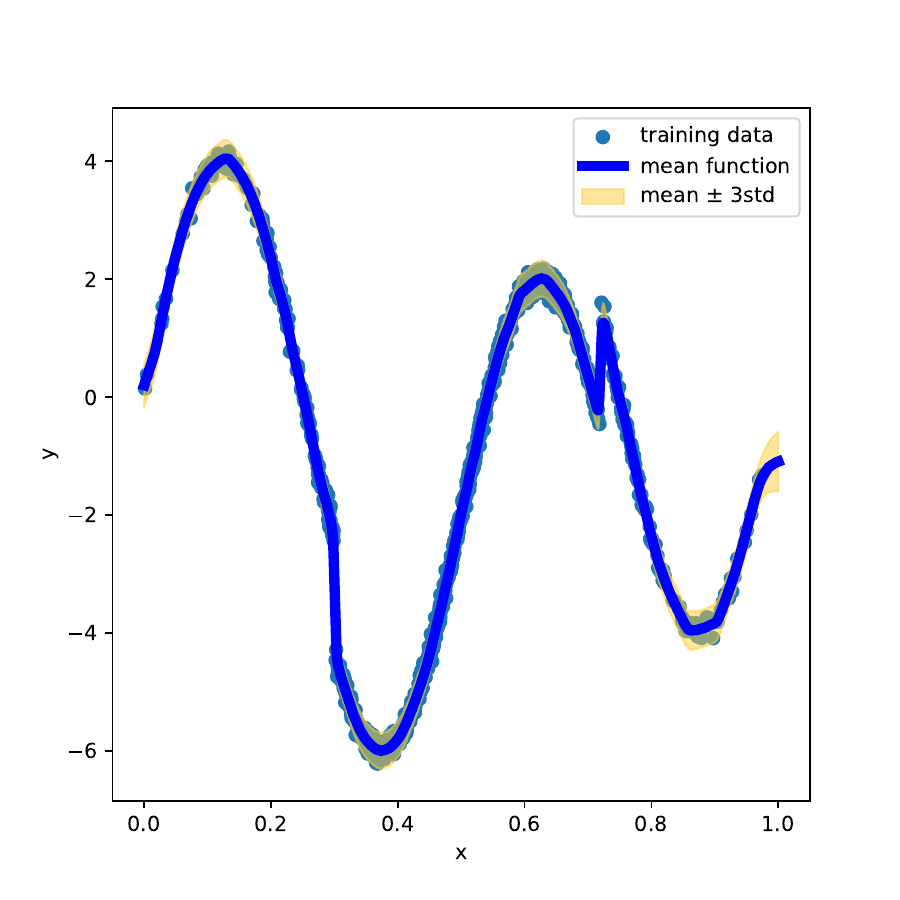}
        \includegraphics[width=0.32\textwidth]{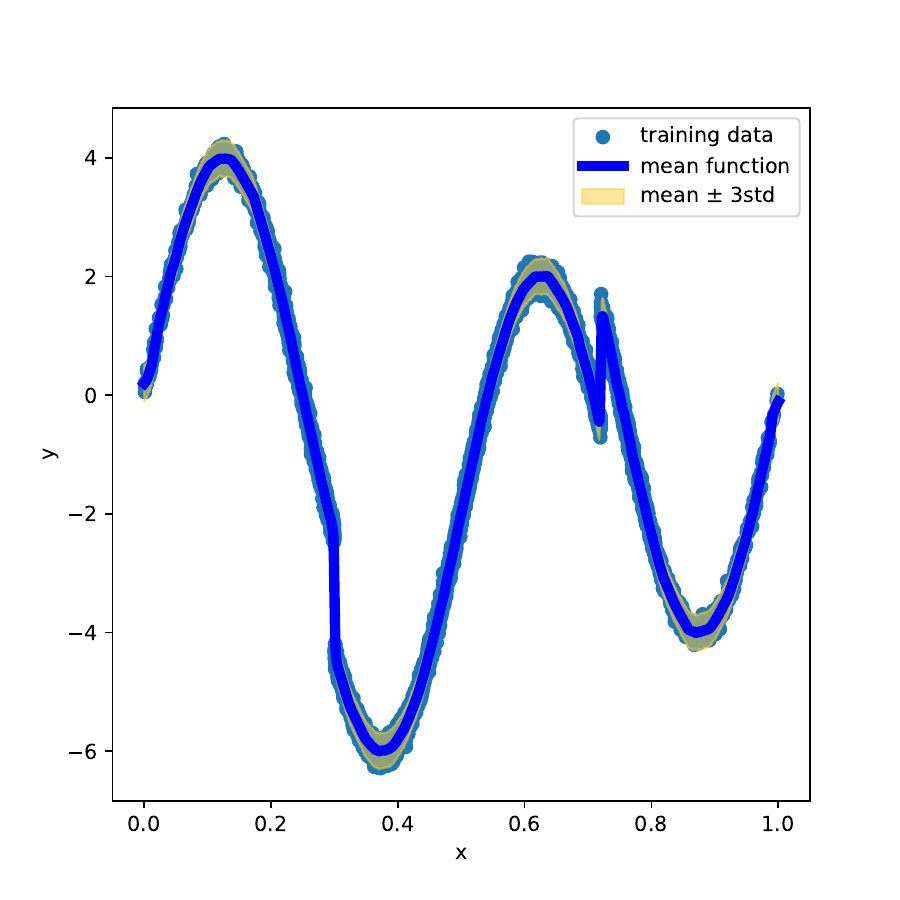}
        \caption{The results of estimating $f_4$.}
    \end{subfigure}
\end{figure}

\subsection{Comparison}

We compare two model in terms of empirical errors (training error) $\frac{1}{n}\sum_{i=1}^n \abs{y_i - f(x_i)}^2$ and $L^2$ norms (test error) $e_f = \norm{f - f_0}_{L^2}$ between true function $f_0$ and sampled function $f$. We approximate $e_f$ as $\hat{e}_f$ using Riemann sum,
\begin{equation*}
    \hat{e}_f = \left( \frac{1}{n^\ast} \sum_{k=1}^{n^\ast} (f(x_k^\ast) - f_0(x_k^\ast))^2  \right)^{1/2}, \quad x_k^* = \frac{k}{n^\ast},~ 1\leq k \leq n^\ast.
\end{equation*}
As in \autoref{table:EE} and \autoref{table:L2}, both models give similar results. For the functions $f_1$ and $f_2$, Gaussian mixture BNN shows better results for both errors. For the functions $f_3$ and $f_4$, Gaussian BNN gives slightly better results for the test error. However, as shown in \autoref{fig:function_2GMM_results} and \autoref{fig:function_Gaussian_results}, the differences are negligible.

\begin{table}[ht]
    \centering
    \caption{Summary statistics of the empirical errors between true functions and sampled functions, where the numbers in the parentheses denote the standard deviations.}
    \label{table:EE}
    \begin{tabular}{lrrr}
    \toprule
    $f_0$ & $n$ & Gaussian mixture BNN & Gaussian BNN \\
    \midrule
    \multirow[t]{3}{*}{$f_1$} & 100 & $1.398\times10^{-2}$ ($8.896\times10^{-4}$) & \color{blue} $1.386\times10^{-2}$ ($8.774\times10^{-4}$) \\
     & 1,000 & \color{blue} $1.451\times10^{-2}$ ($2.899\times10^{-4}$) & $1.455\times10^{-2}$ ($2.840\times10^{-4}$) \\
     & 10,000 & \color{blue} $1.455\times10^{-2}$ ($9.233\times10^{-5}$) & $1.461\times10^{-2}$ ($8.763\times10^{-5}$) \\
    \multirow[t]{3}{*}{$f_2$} & 100 & \color{blue} $1.336\times10^{-2}$ ($8.119\times10^{-4}$) & $1.359\times10^{-2}$ ($9.023\times10^{-4}$) \\
     & 1,000 & \color{blue} $1.424\times10^{-2}$ ($2.781\times10^{-4}$) & $1.427\times10^{-2}$ ($2.741\times10^{-4}$) \\
     & 10,000 & $1.420\times10^{-2}$ ($8.728\times10^{-5}$) & \color{blue} $1.419\times10^{-2}$ ($8.651\times10^{-5}$) \\
    \multirow[t]{3}{*}{$f_3$} & 100 & \color{blue} $1.428\times10^{-1}$ ($9.301\times10^{-3}$) & $1.497\times10^{-1}$ ($9.356\times10^{-3}$) \\
     & 1,000 & $2.297\times10^{-1}$ ($4.256\times10^{-3}$) & \color{blue} $2.289\times10^{-1}$ ($4.632\times10^{-3}$) \\
     & 10,000 & \color{blue} $2.200\times10^{-1}$ ($3.228\times10^{-3}$) & $2.204\times10^{-1}$ ($2.668\times10^{-3}$) \\
    \multirow[t]{3}{*}{$f_4$} & 100 & \color{blue} $1.389\times10^{-1}$ ($9.279\times10^{-3}$) & $1.390\times10^{-1}$ ($9.085\times10^{-3}$) \\
     & 1,000 & \color{blue} $1.430\times10^{-1}$ ($2.898\times10^{-3}$) & $1.460\times10^{-1}$ ($2.921\times10^{-3}$) \\
     & 10,000 & $1.543\times10^{-1}$ ($8.986\times10^{-4}$) & \color{blue} $1.500\times10^{-1}$ ($8.825\times10^{-4}$) \\
    \bottomrule
    \end{tabular}
\end{table}

\begin{table}[ht]
    \centering
    \caption{Summary statistics of the test errors between true functions and sampled functions, where the numbers in the parentheses denotes the standard deviations.}
    \label{table:L2}
    \begin{tabular}{lrrr}
    \toprule
    $f_0$ & $n$ & Gaussian mixture BNN & Gaussian BNN \\
    \midrule
    \multirow[t]{3}{*}{$f_1$} & 100 & \color{blue} $1.921\times10^{-2}$ ($1.441\times10^{-3}$) & $2.359\times10^{-2}$ ($3.646\times10^{-3}$) \\
     & 1,000 & \color{blue} $1.327\times10^{-2}$ ($3.172\times10^{-4}$) & $1.335\times10^{-2}$ ($5.983\times10^{-4}$) \\
     & 10,000 & $1.212\times10^{-2}$ ($2.808\times10^{-4}$) & \color{blue} $1.201\times10^{-2}$ ($2.643\times10^{-4}$) \\
    \multirow[t]{3}{*}{$f_2$} & 100 & \color{blue} $1.363\times10^{-2}$ ($6.582\times10^{-4}$) & $1.652\times10^{-2}$ ($2.293\times10^{-3}$) \\
     & 1,000 & \color{blue} $1.151\times10^{-2}$ ($3.835\times10^{-4}$) & $1.164\times10^{-2}$ ($4.203\times10^{-4}$) \\
     & 10,000 & $1.177\times10^{-2}$ ($2.626\times10^{-4}$) & \color{blue} $1.155\times10^{-2}$ ($2.529\times10^{-4}$) \\
    \multirow[t]{3}{*}{$f_3$} & 100 & $1.120\times10^{0}$ ($5.175\times10^{-2}$) & \color{blue} $1.041\times10^{0}$ ($6.923\times10^{-2}$) \\
     & 1,000 & $3.750\times10^{-1}$ ($5.497\times10^{-3}$) & \color{blue} $3.699\times10^{-1}$ ($6.699\times10^{-3}$) \\
     & 10,000 & \color{blue} $2.478\times10^{-1}$ ($4.589\times10^{-3}$) & $2.479\times10^{-1}$ ($4.036\times10^{-3}$) \\
    \multirow[t]{3}{*}{$f_4$} & 100 & $4.942\times10^{-1}$ ($2.930\times10^{-2}$) & \color{blue} $4.493\times10^{-1}$ ($4.450\times10^{-2}$) \\
     & 1,000 & $1.715\times10^{-1}$ ($8.959\times10^{-3}$) & \color{blue} $1.666\times10^{-1}$ ($1.029\times10^{-2}$) \\
     & 10,000 & $1.550\times10^{-1}$ ($3.242\times10^{-3}$) & \color{blue} $1.128\times10^{-1}$ ($2.454\times10^{-3}$) \\
    \bottomrule
    \end{tabular}
\end{table}

\end{document}